\newif\ifspacehack
\newcommand\innerp[2]{\langle #1, #2 \rangle}
\renewcommand{\tilde}{\widetilde}
\def \R {\mathbb{R}}
\newcommand{\prn}{{(n)}}
\newcommand{\ext}{\mathsf{Ext}}
\newcommand{\Int}{\mathsf{Int}}
\newcommand{\calA}{{\mathcal{A}}}
\newcommand{\calB}{{\mathcal{B}}}
\newcommand{\calS}{{\mathcal{S}}}
\newcommand{\calU}{{\mathcal{U}}}
\newcommand{\Reg}{{\mathrm{Reg}}}
\newcommand{\Alg}{{\mathsf{Alg}}}
\newcommand{\SubAlg}{{\mathsf{SubAlg}}}
\newcommand{\E}{{\mathbb{E}}}
\newcommand{\inner}[1]{ \left\langle {#1} \right\rangle }
\newcommand{\iinner}[1]{ \langle {#1} \rangle }
\newcommand{\order}{\mathcal{O}}
\newcommand{\one}{\boldsymbol{1}}
\newcommand{\p}{\prime}
\newcommand{\unif}{\text{\rm unif}}
\newcommand{\self}{\text{\rm self}}
\newcommand{\dunif}{d^\text{\rm unif}}
\newcommand{\dself}{d^\text{\rm self}}
\newcommand{\KL}{\text{\rm KL}}
\DeclareMathOperator*{\argmin}{argmin}
\DeclarePairedDelimiter\ceil{\lceil}{\rceil}
\newcommand{\trace}[1]{\mathrm{trace}\left({#1}\right)}
\newcommand{\wh}{\widehat}
\newcommand{\wt}{\widetilde}
\newcommand{\otil}{\ensuremath{\tilde{\mathcal{O}}}}
\newcommand{\rbr}[1]{\left(#1\right)}
\newcommand{\sbr}[1]{\left[#1\right]}
\newcommand{\cbr}[1]{\left\{#1\right\}}
\renewcommand{\tilde}{\widetilde}
\def \E {\mathbb{E}}
\def \R {\mathbb{R}}
\def \a {\mathbf{a}}
\def \p {\mathbf{p}}
\DeclareMathOperator*{\polylog}{polylog}
\newtcolorbox{promptbox}[1][]{
  colback=blue!5!white, colframe=blue!75!black,
  fonttitle=\bfseries, title=Prompt,
  left=2mm, right=2mm, top=2mm, bottom=2mm,
  boxrule=0.5mm,  
  coltitle=black, 
  colbacktitle=blue!15!white, 
  breakable,      
  #1
}
\definecolor{wine_red}{RGB}{228,48,64}
\definecolor{DSgray}{cmyk}{0,1,0,0}
\newcommand{\pref}[1]{\prettyref{#1}}
\newcommand{\savehyperref}[2]{\texorpdfstring{\hyperref[#1]{#2}}{#2}}
\def \p {\boldsymbol{p}}
\def \epsilon {\varepsilon}
\def \base {\textsc{base-regret}}
\def \meta {\textsc{meta-regret}}
\newtheorem{theorem}{Theorem}[section]
\newtheorem{proposition}[theorem]{Proposition}
\newtheorem{lemma}[theorem]{Lemma}
\newtheorem{definition}[theorem]{Definition}
\title{Comparator-Adaptive $\Phi$-Regret: Improved Bounds, Simpler Algorithms, and Applications to Games\thanks{Authors are listed in alphabetical order.}}
\author[1]{Soumita Hait}
\author[2]{Ping Li}
\author[1]{Haipeng Luo}
\author[3]{Mengxiao Zhang}
\affil[1]{University of Southern California, \texttt{\{hait,haipengl\}@usc.edu}}
\affil[2]{Shanghai University of Finance and Economics, \texttt{pinglee@stu.sufe.edu.cn}}
\affil[3]{University of Iowa, \texttt{mengxiao-zhang@uiowa.edu}}
\begin{document}
\maketitle

\begin{abstract}
In the classic expert problem, $\Phi$-regret measures the gap between the learner's total loss and that achieved by applying the best action transformation $\phi \in \Phi$.
A recent work by \citet{lu2025sparsity} introduces an adaptive algorithm whose regret against a comparator $\phi$ depends on a certain sparsity-based complexity measure of $\phi$, (almost) recovering and interpolating optimal bounds for standard regret notions such as external, internal, and swap regret.
In this work, we propose a general idea to achieve an even better comparator-adaptive $\Phi$-regret bound via much simpler algorithms compared to \citet{lu2025sparsity}.
Specifically, we discover a prior distribution over all possible binary transformations and show that it suffices to achieve prior-dependent regret against these transformations.
Then, we propose two concrete and efficient algorithms to achieve so, where the first one learns over multiple copies of a prior-aware variant of the Kernelized MWU algorithm of \citet{farina2022kernelized}, and the second one learns over multiple copies of a prior-aware variant of the BM-reduction \citep{blum2007external}.
To further showcase the power of our methods and the advantages over \citep{lu2025sparsity} besides the simplicity and better regret bounds, we also show that our second approach can be extended to the game setting to achieve accelerated and adaptive convergence rate to $\Phi$-equilibria for a class of general-sum games.
When specified to the special case of correlated equilibria, our bound improves over the existing ones from \citet{anagnostides2022near, anagnostides2022uncoupled}.

\end{abstract}

\section{Introduction}\label{sec:intro}

Expert problem~\citep{freund1997decision} is one of the most fundamental online learning problems, where a learner repeatedly hedges over $d$ experts with the goal of being comparative to a strong benchmark.
More concretely, in each round $t$, the learner proposes a distribution $p_t \in \Delta(d)$ over $d$ experts and suffers loss $\inner{p_t, \ell_t}$ where $\ell_t \in [0,1]^d$ is a loss vector decided by an adversary.
Consider a benchmark that always applies a fixed linear transformation $\phi: \Delta(d) \mapsto \Delta(d)$ to the learner's strategy and thus suffers loss  $\innerp{\phi(p_t)}{\ell_t}$ in round $t$.
The regret of the learner against $\phi$ is then defined as $\Reg(\phi) \triangleq \sum_{t=1}^T\innerp{p_t - \phi(p_t)}{\ell_t}$, that is, the difference between the learner's total loss and that of the benchmark.
Given a class of linear transformations $\Phi$, the learner's $\Phi$-regret is defined as $\max_{\phi\in\Phi} \Reg(\phi)$~\citep{greenwald2003general}.
With an appropriate choice of $\Phi$, this general notion of $\Phi$-regret subsumes many well-studied regret notions in the literature, such as external regret, internal regret, and swap regret.

While the optimal $\Phi$-regret bound naturally depends on the complexity of the class $\Phi$ and different algorithms have been proposed for different $\Phi$'s in the literature,
a recent work by~\citet{lu2025sparsity} developed a comparator-adaptive algorithm whose regret against $\phi$ depends on a certain sparsity-based complexity measure $c_\phi$ of $\phi$, almost recovering the optimal regret bounds for external regret, internal regret, and swap regret simultaneously via one single algorithm. 
Specifically, their algorithm achieves $    \Reg\rbr{\phi}=\order\Big(\sqrt{c_\phi (T+d)  \rbr{\log d}^{3}}\Big)$ for all $\phi$ simultaneously, where $c_\phi =\min\{d-\dself_\phi,d-\dunif_\phi+1\}$, $\dself_\phi$ is the number of experts that are mapped to themselves by $\phi$, 
and $\dunif_\phi$ is the maximum number of experts mapped to the same expert by $\phi$ (see \pref{sec:preliminaries} for formal definitions).
The design of their algorithm, however, is somewhat complicated and uses Haar-wavelet-inspired matrix features.

In this work, we significantly improve over~\citet{lu2025sparsity} by \textit{developing simpler algorithms, achieving better comparator-adaptive regret bounds, and demonstrating broader applications to accelerated convergence in games}.
Specifically, our contributions are as follows.

\begin{itemize}[leftmargin=*]
\item First, in \pref{sec:non-uniform prior}, we propose a general idea of achieving an improved comparator-adaptive regret bound $\Reg(\phi)=\order(\sqrt{c_\phi T\log d})$, removing both the extra $\otil(\sqrt{c_\phi d})$ additive term and also the extra $\log d$ factor compared to that of~\citet{lu2025sparsity}.
We achieve so by proposing a prior distribution $\pi$ over all binary and linear transformations and showing that as long as a natural prior-dependent regret bound $\Reg(\phi)=\order(\sqrt{T\log(1/\pi(\phi))})$ holds, then the aforementioned new comparator-adaptive regret bound holds.

\item While at first glance it is unclear at all how to achieve the prior-dependent regret bound above efficiently (since the number of all binary transformations is $d^d$),
we propose two efficient approaches to achieve so thanks to the special structure of our prior. 
For the first approach (\pref{sec:approach 1}), we utilize and extend the Kernelized Multiplicative Weight Update algorithm of~\citet{farina2022kernelized} and show that a certain prior-dependent kernel can be computed efficiently;
for the second approach (\pref{sec:approach 2}), we develop a prior-aware variant of the classic BM-reduction~\citep{blum2007external} and learn over multiple copies of it.
Both approaches are arguably much simpler than the algorithm of~\citet{lu2025sparsity}.

\item Besides its simplicity and better regret bounds, we further demonstrate the power of our second approach by extending it to an uncoupled learning dynamic for games and achieving accelerated and adaptive convergence to $\Phi$-equilibria (\pref{sec:equilibrium}). 
Specifically, we develop an algorithm such that, when deployed by all players for a broad class of $N$-player general-sum games considered by~\citet{anagnostides2022last},
each player enjoys a $T$-independent regret bound $\Reg(\phi) = \order(c_\phi N\log d + N^2\log d)$ for all $\phi$ simultaneously. 
Based on standard connection between $\Phi$-regret and $\Phi$-equilibria, this implies an adaptive $(\max_{\phi\in\Phi} c_\phi N\log d+ N^2\log d)/T$ convergence rate to $\Phi$-equilibria, simultaneously for all classes $\Phi$, which is the first result of this kind to our knowledge.
Moreover, when specified to the case of correlated equilibria (where $\Phi$ is all binary linear transformations), we improve over~\citet{anagnostides2022uncoupled} on the $d$-dependence and remove any $\polylog(T)$ dependence compared to~\citet{anagnostides2022near, anagnostides2022uncoupled} (although their results hold more generally for any general-sum games).
Our technique is also new and relies on the flexibility and a particular structure of our second approach, which allows us to bound the path-length of the learning dynamic via showing small external regret.
We remark that it is highly unclear (if possible at all) how to achieve similar results using the algorithm of~\citet{lu2025sparsity} (or even our first approach).
\end{itemize}

\paragraph{Related Work}
We refer the reader to~\citet{cesa2006prediction} for detailed discussions on external regret (e.g.,~\citep{freund1997decision}), internal regret (e.g.,~\citep{foster1999regret, stoltz2005internal}), and swap regret~\citep{blum2007external}, whose formal definition can be found in \pref{sec:preliminaries}.
As mentioned, they all belong to the family of $\Phi$-regret, a concept proposed by~\citet{greenwald2003general} and further studied in many subsequent works such as~\citet{stoltz2007learning, gordon2008no, rakhlin2011online, piliouras2022evolutionary, bernasconi2023constrained, cai2024tractable, zhang2024efficient} due to its generality and connection to various equilibrium concepts.
However, comparator-adaptive $\Phi$-regret bounds were only recently considered by~\citet{lu2025sparsity} as far as we know.

The concept of comparator-adaptive regret, nevertheless, is much older and has been studied under various different contexts; we refer the reader to~\citet{orabona2019modern} for in-depth discussion.
The algorithm of~\citet{lu2025sparsity} makes use of advances from this line of work~\citep{cutkosky2018algorithms}, while ours uses two simpler ideas: prior-dependent external regret via the classic Multiplicative Weight Update (MWU) algorithm~\citep{littlestone1994weighted, freund1997decision} and combining multiple algorithms to learn over the learning rates via a meta MWU (an idea that has been used in many prior works such as~\citet{koolen2014learning, van2016metagrad, foster2017parameter, cutkosky2019combining, bhaskara2020online, chen2021impossible}).

The connection between online learning and games dates back to~\citet{Blackwell1956, hannan1957approximation, freund1999adaptive}.
\citet{greenwald2003general} showed that in a general-sum game, if all players deploy an online learning algorithm with sublinear $\Phi$-regret, then the empirical distribution of their joint strategy profiles converges to a $\Phi$-equilibrium with the convergence rate being the average (over time) $\Phi$-regret.
While $\Phi$-regret is usually of order $\sqrt{T}$ in the worst case (leading to $1/\sqrt{T}$ convergence rate), since the work of~\citet{daskalakis2011near, rakhlin2013optimization, syrgkanis2015fast}, there has been a surge of research showing that accelerated convergence rate of order $\polylog(T)/T$ is possible in many cases by utilizing the structure of the game and certain optimistic online learning algorithms~\citep{daskalakis2021near, anagnostides2022near, anagnostides2022uncoupled, farina2022near}.
Our result in \pref{sec:equilibrium} adds to the growing body of this line of work and is the first accelerated convergence rate that is also adaptive in the complexity of $\Phi$.
Our approach also makes use of standard optimistic online learning algorithms, but existing analysis does not work directly due to various technical hurdles. 
We resolve them by exploiting a particular structure of our second algorithm, borrowing ideas from a two-layer framework of~\citet{zhang2022no}, and considering a subclass of games where the sum of all players' external regret is always nonnegative (a broad class as shown by~\citet{anagnostides2022last}).
\section{Preliminaries}\label{sec:preliminaries}
\paragraph{General Notations}
For a positive integer $n$, let $[n]$ denote the set $\{1,2,\dots,n\}$. Define $\R^n_+$ to be the positive orthant of the $n$-dimensional Euclidean space, and $\Delta(n)\triangleq\{p\in \R^n_+, \sum_{i=1}^np_i=1\}$ to be the $(n-1)$-dimensional simplex. Given a finite set $S$, denote $|S|$ to be its cardinality and $\Delta(S)$ to be the set of probability distributions over $S$. Given $p,q\in \Delta(n)$, define $\KL(p,q)\triangleq\sum_{i=1}^np_i\log\frac{p_i}{q_i}$ as the KL-divergence between $p$ and $q$.
For a matrix $M\in \R^{m\times n}$, we denote by $M_{i:}\in \R^{n}$ the $i$-th row of $M$ and $M_{:j}\in \R^{m}$ the $j$-th column of $M$. For two matrices $M_1,M_2\in \R^{m\times n}$, define the inner product $\inner{M_1,M_2}\triangleq \trace{M_1^\top M_2}$. Let $\mathbf{1}$ and $\mathbf{0}$ be the all-one and all-zero vector in an appropriate dimension, let $e_i$ be the one-hot vector in an appropriate dimension with the $i$-th entry being $1$ and all other entries being $0$, and let $\mathbf{I}$ be the identity matrix in an appropriate dimension.

Define $\calS\triangleq\{\phi\in[0,1]^{d\times d}~|~\phi_{k:}\in \Delta(d),\forall~k\in[d]\}$ as the set of all row-stochastic matrices, which is also the set of all possible linear transformations from $\Delta(d)$ to $\Delta(d)$ if we treat each $\phi\in\calS$ as a linear operator: $\phi(p) = \phi^\top p$.
The subset $\Phi_b\triangleq\{\phi\in \{0,1\}^{d\times d}~|~\phi_{k:}\in \Delta(d),\forall~k\in[d]\} \subseteq \calS$ consisting of all binary row-stochastic matrices is of particular interest.
For a distribution $\pi\in \Delta(\Phi_b)$, we let $\pi(\phi)$ be the probability mass of $\phi\in \Phi_b$. 

\paragraph{Expert Problem and $\Phi$-regret} 
In an expert problem, the interaction between the environment and the learner proceeds for $T$ rounds. At each round $t\in[T]$, the learner decides a distribution $p_t\in \Delta(d)$ over the $d$ experts and the environment decides a loss vector $\ell_t\in[0,1]^d$. The learner then receives $\ell_t$ and suffers loss $\inner{p_t,\ell_t}$. 
Given a transformation $\phi \in \calS$, the regret of the learner against this $\phi$ is defined as $\Reg(\phi)\triangleq\sum_{t=1}^T\inner{p_t -\phi(p_t),\ell_t}$,
and given a class of transformations $\Phi\subseteq \calS$,
the $\Phi$-regret is defined as $\Reg(\Phi) \triangleq\max_{\phi\in\Phi}\Reg(\phi)$~\citep{greenwald2003general}.

With an appropriate choice of $\Phi$, $\Phi$-regret reduces to many standard regret notions. For example, with $\Phi=\Phi_{\ext}\triangleq\{\mathbf{1}e_i^\top\}_{i\in[d]}$, $\Phi$-regret recovers the standard \textit{external regret} that competes with a fixed expert, and it is well known that the minimax bound in this case is $\Theta(\sqrt{T\log d})$, achieved by for example the classic Multiplicative Weight Update (MWU) algorithm~\citep{littlestone1994weighted, freund1997decision}; with $\Phi=\Phi_{\Int}\triangleq\{\mathbf{I}-e_ie_i^\top + e_ie_j^\top\}_{i,j\in[d],i\neq j}$, $\Phi$-regret recovers \textit{internal regret} and competes with a strategy that moves all the weights for expert $i$ to expert $j$ for some fixed $i$ and $j$,
and the minimax bound in this case is also
$\Theta(\sqrt{T\log d})$~\citep{stoltz2005internal}; and with $\Phi=\Phi_b$, $\Phi$-regret reduces to \textit{swap regret} and competes with all possible swaps between experts, and the minimax bound in this case is $\Theta(\sqrt{dT\log d})$~\citep{blum2007external, ito2020tight} for a certain regime of $T$ and $d$.\footnote{More concretely, for the regime where $d\log d \lesssim T \lesssim d^{3/2}/(\log d)$. For other regimes, see recent work by~\citet{dagan2024external, peng2024fast}.}

In a recent work by \citet{lu2025sparsity}, they derive a comparator-adaptive regret bound of the form $    \Reg\rbr{\phi}=\order\Big(\sqrt{c_\phi (T+d)  \rbr{\log d}^{3}}\Big)$ for all $\phi \in \calS$ simultaneously, 
where $c_\phi$ is a certain sparsity-based complexity measure of $\phi$, formally defined as follows.

\begin{definition}[Complexity measure of $\phi$ from \citet{lu2025sparsity}]\label{def:deg_Lu_etal}
    For any $\phi \in \Phi_b$, define $c_\phi \triangleq \min\{d-\dself_\phi,d-\dunif_\phi+1\}$, 
    where $\dself_\phi$, the degree of self-map of $\phi$,
    is the number of experts $i$ such that $\phi(e_i)=e_i$ (equivalently, $\dself_\phi=\trace{\phi}$),
    and
    $\dunif_\phi$, the degree of uniformity of $\phi$, is the multiplicity of the most frequent element in the multi-set $\{\phi(e_1),\ldots,\phi(e_d)\}$.
    For any $\phi \in \calS \setminus \Phi_b$, define $c_\phi \triangleq \min_{q \in Q_\phi}\mathbb{E}_{\phi'\sim q}[c_{\phi'}]$ where $Q_\phi = \cbr{q\in \Delta(\Phi_b):\mathbb{E}_{\phi' \sim q}[\phi'] = \phi}$.
\end{definition}

Direct calculation shows that $\max_{\phi\in \Phi_{\ext}}c_{\phi}=\max_{\phi\in \Phi_{\Int}}c_{\phi}=1$ and $\max_{\phi\in\Phi_b} c_\phi=d$,
and thus their algorithm achieves almost optimal external regret, internal regret, and swap regret simultaneously.

We remark that, in fact, \citet{lu2025sparsity} define $c_\phi$ for $\phi \in \calS \setminus \Phi_b$ using the exact same definition as the case when $\phi \in \Phi_b$,
which is rather unnatural and results in a discontinuous function over $\calS$ --- for example, a slight perturbation for a $\phi \in \Phi_b$ with a large $\dself_\phi$ (and thus small $c_\phi$) can lead to a $\phi' \in \calS$ with $\dself_{\phi'}=0$ (and thus potentially large $c_{\phi'}$).
The definition we use here, on the other hand, is a continuous and natural extension from $\Phi_b$ to $\calS$.
It can be shown that our definition leads to a strictly smaller complexity measure; see \pref{prop: compare complexity} in \pref{app:non-uniform prior} for more discussion.

However, what is perhaps not realized by~\citet{lu2025sparsity} is that their bound 
$\Reg\rbr{\phi}=\order\Big(\sqrt{c_\phi (T+d)  \rbr{\log d}^{3}}\Big)$ in fact also holds under our better definition of $c_\phi$ (via the same algorithm).
The reasoning is the same as our proof for \pref{thm:special_prior}: it suffices to show this bound for $\phi \in \Phi_b$ and then take convex combination of the bound when dealing with $\phi \in \calS \setminus \Phi_b$.
This explains why we change the definition to this version.

One may wonder why we care about any $\phi \in \calS \setminus \Phi_b$ --- after all, since the benchmark $\sum_{t=1}^T \inner{\phi(p_t), \ell_t}$ is linear in $\phi$, the best $\phi$ from a set $\Phi$ is always on its boundary.
The reason is that what the learner ultimately cares about is her total loss $\sum_{t=1}^T \inner{p_t, \ell_t} = \sum_{t=1}^T \inner{\phi(p_t), \ell_t} + \Reg(\phi)$,
and when a comparator-adaptive bound on $\Reg(\phi)$ is available, we should consider the $\phi$ that minimizes  the sum $\sum_{t=1}^T \inner{\phi(p_t), \ell_t} + \Reg(\phi)$, instead of just $\sum_{t=1}^T \inner{\phi(p_t), \ell_t}$, and in this case, it is totally possible that the best $\phi$ is not in $\Phi_b$.

\section{Achieving $c_\phi$-Dependent Regret via a Special Prior}\label{sec:non-uniform prior}

In this section, we present a new and general idea to achieve a $c_\phi$-dependent bound for $\Reg(\phi)$.
To this end, we define a prior distribution $\pi$ over $\Phi_b$ through the following definitions, which plays an important role in our approach.

\begin{definition}[$\psi$-induced distribution]\label{def:induced_dist}
Given a row-stochastic matrix $\psi \in \calS$,
it induces a distribution $\pi_\psi \in \Delta(\Phi_b)$ such that $\pi_\psi(\phi) = \Pi_{i\in[d]}\inner{\psi_{i:},\phi_{i:}}$ for all $\phi \in \Phi_b$.\footnote{We remark that this is a valid distribution since $\sum_{\phi \in \Phi_b} \pi_\psi(\phi) = \sum_{\phi\in\Phi_b}\prod_{i, j\in [d]:\phi_{ij}=1}\psi_{ij} =\prod_{i=1}^d\sum_{j=1}^d \psi_{ij}= \prod_{i=1}^d 1=1$.}
\end{definition}

\begin{definition}[special prior distribution $\pi$]\label{def:prior}
The prior distribution $\pi$ over $\Phi_b$ is a mixture of $d+1$ distributions such that 
\begin{align}\label{eqn:prior_pi}
    \pi \triangleq \frac{1}{2d}\sum_{k=1}^d \pi_{\psi^k} + \frac{1}{2}\pi_{\psi^{d+1}},
\end{align}
where $\psi^1, \ldots, \psi^{d+1} \in \calS$ are defined as
\begin{align}\label{eqn:prior_psi}
    \psi^k\triangleq \frac{d-2}{d-1}\cdot\one e_k^\top+\frac{1}{d(d-1)}\one\one^\top, \forall k\in[d], ~\text{ and }~ \psi^{d+1}\triangleq\frac{d-2}{d-1}\cdot\mathbf{I}+\frac{1}{d(d-1)}\one\one^\top.
\end{align}
\end{definition}
It is straightforward to verify that $\psi^1, \ldots, \psi^{d+1}$ are indeed row-stochastic matrices.
In fact, when viewed as transformation rules, each  $\psi^k$ (for $k \in [d]$) transforms all experts to expert $k$ with a large probability mass of $1-1/d$ and to other experts uniformly with the remaining mass,
and similarly, $\psi^{d+1}$ transforms each expert to itself with a large probability mass of $1-1/d$ and to other experts uniformly with the remaining mass.
At a high-level, $\psi^{d+1}$ is intuitively connected to $\dself_\phi$ in the definition of $c_\phi$ and $\{\psi^k\}_{k \in [d]}$ are connected to $\dunif_\phi$.
Building on such connections, we prove the following main result.

\begin{theorem}\label{thm:special_prior}
For any $\phi \in \Phi_b$, we have $\log(\frac{1}{\pi(\phi)}) \leq 2+2c_\phi \log d$.
Consequently, if an algorithm achieves 
\begin{equation}\label{eq: prior condition}
\Reg(\phi) = \order\rbr{\sqrt{T\log\rbr{\frac{1}{\pi(\phi)}}} + B}    
\end{equation} 
for all $\phi \in \Phi_b$ and some $\phi$-independent term $B$, then it also achieves $\Reg\rbr{\phi}=\order\rbr{\sqrt{(1+c_\phi \log d)T} + B}$ for all $\phi \in \calS$ simultaneously.
\end{theorem}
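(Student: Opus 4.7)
The plan is to first bound $\log(1/\pi(\phi))$ for every $\phi \in \Phi_b$ by direct computation, exploiting the mixture structure of $\pi$, and then to lift the resulting regret bound from $\Phi_b$ to all of $\calS$ by combining linearity of $\Reg$ in $\phi$ with the convex-combination definition of $c_\phi$ in \pref{def:deg_Lu_etal}.

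For the first claim, since $\pi$ places mass $1/(2d)$ on each $\pi_{\psi^k}$ ($k \in [d]$) and mass $1/2$ on $\pi_{\psi^{d+1}}$, for any $\phi \in \Phi_b$ we have $\pi(\phi) \geq \max\{\tfrac{1}{2}\pi_{\psi^{d+1}}(\phi),\, \max_{k}\tfrac{1}{2d}\pi_{\psi^k}(\phi)\}$, so it suffices to upper bound $-\log \pi_{\psi^{d+1}}(\phi)$ and $-\log \pi_{\psi^{k^\star}}(\phi)$ for a well-chosen $k^\star$ and then take the sharper of the two. Each $\phi \in \Phi_b$ corresponds to a map $\sigma:[d]\to[d]$ with $\phi_{i,\sigma(i)}=1$; set $n_k \triangleq |\{i : \sigma(i) = k\}|$. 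From \pref{def:induced_dist} together with $\psi^k_{i,k}=(d-1)/d$ and $\psi^k_{i,j}=1/(d(d-1))$ for $j\neq k$, and analogous formulas for $\psi^{d+1}$ on/off the diagonal, we get
\begin{equation*}
\pi_{\psi^k}(\phi) = \left(\tfrac{d-1}{d}\right)^{n_k}\left(\tfrac{1}{d(d-1)}\right)^{d-n_k}, \qquad \pi_{\psi^{d+1}}(\phi) = \left(\tfrac{d-1}{d}\right)^{\dself_\phi}\left(\tfrac{1}{d(d-1)}\right)^{d-\dself_\phi}.
\end{equation*}
Picking $k^\star\in\argmax_k n_k$ so that $n_{k^\star}=\dunif_\phi$, applying $\log\tfrac{d}{d-1}\leq \tfrac{1}{d-1}$, and bounding $\log(d(d-1))\leq 2\log d$, a short calculation yields $-\log\pi_{\psi^{k^\star}}(\phi) \leq O(1) + 2(d-\dunif_\phi)\log d$ and $-\log\pi_{\psi^{d+1}}(\phi)\leq O(1) + 2(d-\dself_\phi)\log d$. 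Incorporating the mixture-weight penalties $\log 2$ and $\log(2d)$ (the extra $\log d$ from the $1/(2d)$ weight is precisely what turns $d-\dunif_\phi$ into $d-\dunif_\phi+1$) and taking the smaller of the two bounds according to which branch of $c_\phi=\min\{d-\dself_\phi,\,d-\dunif_\phi+1\}$ is active, we conclude $\log(1/\pi(\phi))\leq 2+2c_\phi\log d$, proving the first claim.

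For the second claim, fix any $\phi \in \calS$ and any $q\in Q_\phi$. Linearity of each summand $\langle p_t-\phi^\top p_t,\ell_t\rangle$ in $\phi$ immediately gives $\Reg(\phi) = \mathbb{E}_{\phi'\sim q}[\Reg(\phi')]$. Combining the hypothesis \pref{eq: prior condition} with the first claim yields $\Reg(\phi') = O(\sqrt{T(1+c_{\phi'}\log d)}+B)$ for every $\phi'\in\Phi_b$, so by Jensen's inequality (concavity of $\sqrt{\cdot}$) and the fact that $B$ is $\phi$-independent,
\begin{equation*}
\Reg(\phi) \;\leq\; O\!\left(\sqrt{T\bigl(1+\mathbb{E}_{\phi'\sim q}[c_{\phi'}]\log d\bigr)}+B\right).
\end{equation*}
Taking the infimum over $q\in Q_\phi$ and invoking the definition $c_\phi = \min_{q\in Q_\phi}\mathbb{E}_{\phi'\sim q}[c_{\phi'}]$ finishes the proof.

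The main obstacle is the combinatorial/analytic calculation in the first claim: one has to choose between the ``self-map'' component $\psi^{d+1}$ and a ``uniform'' component $\psi^k$ depending on whether $d-\dself_\phi$ or $d-\dunif_\phi+1$ is smaller, and handle the $\log\tfrac{d}{d-1}$ terms carefully when $\dself_\phi$ or $\dunif_\phi$ exceeds $d/2$, since then the coefficient $(d-2n)$ multiplying $\log(d-1)$ flips sign and $\log(d-1)$ must be replaced by $\log d$ at the cost of an additive $O(1)$. The lifting step is a standard Jensen argument, but it crucially exploits our continuous extension of $c_\phi$ to $\calS\setminus\Phi_b$, rather than the discontinuous definition used by \citet{lu2025sparsity}.
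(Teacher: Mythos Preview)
Your proposal is correct and follows essentially the same route as the paper's proof: compute $\pi_{\psi^k}(\phi)$ and $\pi_{\psi^{d+1}}(\phi)$ explicitly, bound $-\log$ of each in terms of $d-\dunif_\phi$ and $d-\dself_\phi$ respectively, absorb the mixture weights $\tfrac{1}{2d}$ and $\tfrac{1}{2}$ (with the extra $\log d$ producing the ``$+1$'' in $d-\dunif_\phi+1$), and then lift from $\Phi_b$ to $\calS$ via linearity of $\Reg$, Jensen, and the convex-combination definition of $c_\phi$. Your concern about the sign of $(d-2n)\log(d-1)$ is an unnecessary detour---the paper sidesteps any case split by directly bounding $-n\log(1-\tfrac{1}{d})\leq -d\log(1-\tfrac{1}{d})\leq 1$ and $\log(d(d-1))\leq 2\log d$, which you may find cleaner.
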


We defer the proof to \pref{app:non-uniform prior} and give some intuition here by considering two special cases.
First, consider a $\phi\in \Phi_{\ext}$: 
we know that $\phi=\one e_i^\top$ for some $i\in[d]$ and thus $\pi(\phi)\geq \frac{1}{2d}\pi_{\psi^i}(\phi)=\frac{1}{2d}(1-\frac{1}{d})^d=\Theta(1/d)$, meaning that $\log(1/\pi(\phi))$ is of order $\log d$ and consistent with $c_\phi \log d$. 
As another example, consider a $\phi\in \Phi_{\Int}$:
we have $\phi=\mathbf{I}-e_ie_i^\top +e_ie_j^\top$ for some $i\neq j$ and thus $\pi(\phi)\geq \frac{1}{2}\pi_{\psi^{d+1}}(\phi)=\frac{1}{2}(1-\frac{1}{d})^{d-1}\cdot \frac{1}{d(d-1)}=\Theta(1/d^2)$, which means $\log(1/\pi(\phi))$ is also of order $\log d$ and consistent with $c_\phi \log d$. 

To see why \pref{eq: prior condition} is a natural bound one should aim for, we recall a standard idea from~\citet{blum2007external, gordon2008no} that reduces the $\Phi$-regret for the expert problem to the standard (external) regret of an Online Linear Optimization (OLO) problem over $\Phi$:
if at each round $t$, the proposed distribution over experts $p_t \in \Delta(d)$ is computed as the stationary distribution of some $\phi_t \in \calS$ (that is, $p_t=\phi_t(p_t)$), then we have  $\Reg(\phi)=\sum_{t=1}^T\inner{p_t - \phi(p_t), \ell_t} = \sum_{t=1}^T\inner{\phi_t(p_t) - \phi(p_t), \ell_t}= \sum_{t=1}^T\inner{\phi_t-\phi,p_t\ell_t^\top}$,
which means $\Reg(\phi)$ is exactly the standard regret of the sequence $\phi_1, \ldots, \phi_T$ against a fixed $\phi$ for an OLO instance with $\inner{\cdot, p_t\ell_t^\top}$ as the linear loss function in round $t$.
We can solve this OLO instance by treating it as yet another expert problem with $\Phi_b$ as the expert set,
in which case a bound in the form of \pref{eq: prior condition} is just the standard prior-dependent regret achievable by many algorithms, such as MWU.

The caveat, of course, is that naively doing so is computationally inefficient since the size of $\Phi_b$ is $d^d$.
In fact, a similar concern was raised by~\citet{lu2025sparsity} as a motivation for their totally different approach.
However, thanks to the special structure of our prior $\pi$, we manage to develop two different efficient approaches to achieve \pref{eq: prior condition}, as shown in the next two sections.

\paragraph{Regret comparison with~\citet{lu2025sparsity}}
In our two approaches that achieve \pref{eq: prior condition}, the term $B$ is either $\order(\sqrt{T\log\log d})$ or $\order(\sqrt{T\log d})$, 
making our final regret bound essentially $\order(\sqrt{c_\phi T\log d})$. 
Compared to the bound $\order\Big(\sqrt{c_\phi (T+d)  \rbr{\log d}^{3}}\Big)$ of~\citet{lu2025sparsity}, we have thus removed the extra $\otil(\sqrt{c_\phi d})$ additive term and also the extra $\log d$ factor.
When specified to standard regret notations (external/internal/swap regret), our bound exactly recovers the minimax bound while theirs exhibits a slight gap.

\paragraph{Discussion on the optimality of $c_\phi$ dependency}{{As discussed above, it is clear that the dependence on $c_\phi$ is tight for the standard cases of external, internal, and swap regret, since in these settings $c_\phi$ respectively equals $1$, $1$, and $d$, matching the known lower bounds. 
In fact, via a simple argument, one can establish a stronger lower bound showing that for any integer $k \in [d]$ and any algorithm, there exists a $d$-expert problem and a comparator mapping $\phi$ with $c_\phi \leq k+1$, such that
$\Reg(\phi) = \Omega\big(\sqrt{c_\phi T \log c_\phi}\big)$.
To see this, consider the following construction.
Let $d-k$ experts be \emph{dummy experts} that always incur the maximum loss of $1$, while the remaining $k$ experts follow the swap-regret lower bound instance of~\citet{ito2020tight}, scaled by a factor of $1/2$.
We define $\phi\in\Phi_b$ as follows.
For each non-dummy expert, $\phi$ maps it optimally to another non-dummy expert (minimizing the total loss after swapping).
For dummy experts, we distinguish two cases:
if the algorithm selects dummy experts more than $\sqrt{kT\log k}$ times, we let $\phi$ map all dummy experts to a single fixed non-dummy expert (so that $d_\phi^{\text{unif}} \geq d-k$);
otherwise, each dummy expert maps to itself (so that $d_\phi^{\text{self}} \geq d-k$).
In both cases, we have $c_\phi \leq k+1$.

In the first case, the regret satisfies $\Reg(\phi) \ge \frac{1}{2}\sqrt{kT\log k}$, since whenever the algorithm chooses a dummy expert $i$, it incurs loss $1$ while $\phi(e_i)$ incurs loss at most $1/2$.
In the second case, $\Reg(\phi)$ corresponds to the swap regret of the algorithm on a $k$-expert problem that lasts for at least $T - \sqrt{kT\log k}$ rounds.
Because this instance follows the lower bound construction of~\citet{ito2020tight}, we again obtain $\Reg(\phi) \geq \Omega(\sqrt{kT\log k})$.
This shows that the dependence on $c_\phi$ in our upper bound is tight.
}}

\section{First Approach: Learning over Multiple Kernelized MWU's}\label{sec:approach 1}

In this section, we introduce our first approach to achieve \pref{eq: prior condition}.
As mentioned, based on standard analysis (see e.g.,~\citep{freund1999adaptive}), simply running MWU (\pref{alg:MWU}) with expert set $\Phi_b$, a fixed learning rate $\eta>0$, and our prior distribution $\pi$ defined in \pref{eqn:prior_pi} to get $q_t \in \Delta(\Phi_b)$ and outputting the stationary distribution of $\E_{\phi\sim q_t}[\phi]$ already gives
$\Reg(\phi) \leq \frac{\KL(q,\pi)}{\eta}+\eta T$ for any $\phi \in \calS$ and $q \in Q_\phi$ (recall $Q_\phi$ defined in \pref{def:deg_Lu_etal}),
which further implies $\Reg(\phi) \leq \frac{\log(1/\pi(\phi))}{\eta}+\eta T$ for any $\phi \in \Phi_b$.
With the ``optimal tuning'' of $\eta$, \pref{eq: prior condition} would have been achieved.
However, there is no such fixed ``optimal tuning'' since we require the bound to hold for all $\phi$ simultaneously, and different $\phi$ might lead to different optimal tuning.
We will first address this issue using a simple idea, before addressing the other obvious issue that naively running MWU is computationally inefficient.

\begin{algorithm}[t]
\caption{MWU over $\Phi_b$ with prior $\pi$}\label{alg:MWU}
\KwIn{learning rate $\eta>0$ and prior distribution $\pi$ defined in \pref{def:prior}. Initialize $q_1$ as $\pi$.}

\For{$t=1,2\dots,T$}{
    Propose $\phi_t=\E_{\phi\sim q_t}[\phi] \in\calS$ 
    and receive loss matrix $p_t\ell_t^\top\in [0,1]^{d\times d}$.
    
    Update $q_{t+1}$ such that
    $q_{t+1}(\phi) \propto q_t(\phi)\exp\rbr{-\eta \inner{\phi, p_t\ell_t^\top}}$.    
}
\end{algorithm}

\paragraph{Learning the learning rate via a meta MWU} 
While there are many different ways to handle the aforementioned issue of parameter tuning (see e.g.,~\citet{luo2015achieving, koolen2015second}),
we resort to the most basic idea of learning the learning rate via another meta MWU, which is important for resolving the computational inefficiency later; see \pref{alg:meta_1} for the pseudocode.
Specifically, the meta MWU learns over and combines decisions from a set of $2\ceil{\log_2 d}$ base learners,  the $h$-th of which is an instance of MWU (\pref{alg:MWU}) with learning rate $\eta_h=\sqrt{2^h/T}$. 
This ensures that the optimal learning rate of interest always lies in $[\eta_h, 2\eta_h]$ for certain $h$. 
At each round $t$, the meta MWU maintains a distribution $w_t$ over all base learners. After receiving $\phi_t^h$, the expected transformation matrix from each base learner $\calB_h$, the meta MWU computes the weighted average of them using $w_t$ and proposes $p_t$ as the stationary distribution of this weighted average.\footnote{We remark that it is important to use the stationary distribution of the weighted average of $\phi_t^h$, but not the weighted average of the stationary distribution of $\phi_t^h$. \label{fn:stationary_dist}} Then, after receiving the loss vector $\ell_t$, the meta MWU constructs the loss  $\ell_{t,h}^w\triangleq\inner{\phi_t^h,p_t\ell_t^\top}$ for each $\calB_h$ and updates its weight $w_t$ via an exponential weight update. Finally, the meta MWU sends the loss matrix $p_t\ell_t^\top$ to each base learner $\calB_h$. It is straightforward to prove the following result.

\begin{algorithm}[t]
\caption{Meta MWU Algorithm}\label{alg:meta_1}
\textbf{Initialization:} Set $\eta=\sqrt{\frac{\log\log d}{T}}$, $M = 2\lceil\log_2 d\rceil$,
and $w_1=\frac{1}{M}\cdot\mathbf{1}\in\Delta(M)$;
initialize $M$ instances of \pref{alg:MWU} (or \pref{alg:kernel_hedge}) $\{\calB_h\}_{h=1}^{M}$ with the learning rate for $\calB_h$ being $\eta_h=\sqrt{2^h/T}$.

\For{$t=1,2,\cdots,T$}{
    Receive $\phi_{t}^h=\E_{\phi\sim q_t^h}[\phi]$ from $\calB_h$ for each $h\in[M]$ and compute $\phi_t=\sum_{h=1}^{M}w_{t,h}\phi_t^h$.
    
    Play the stationary distribution $p_t$ of $\phi_t$ (that is, $p_t =\phi_t(p_t)$) and receive loss $\ell_t$.
    
    Update $w_{t+1}$ such that $w_{t+1, h}\propto w_{t,h}\exp\rbr{-\eta \ell_{t,h}^w}$, where $\ell_{t,h}^w=\inner{\phi_t^h, p_t\ell_t^\top}$ for each $h\in [M]$.
    
    Send loss matrix $p_t\ell_t^\top $ to $\calB_h$ for each $h\in [M]$.
}
\end{algorithm}

\begin{theorem}\label{thm:eta-hedge-reg}
     \pref{alg:meta_1} guarantees $\Reg(\phi)=\order\left(\sqrt{T\KL(q,\pi)}+\sqrt{T\log\log d}\right)$ for any $\phi \in \calS$ and $q\in Q_\phi$. 
     Consequently, it also guarantees \pref{eq: prior condition} with $B = \sqrt{T\log\log d}$ and thus $\Reg(\phi)=\order\rbr{\sqrt{(1+c_\phi\log d)T}+\sqrt{T\log\log d}}$ for any $\phi \in \calS$.
\end{theorem}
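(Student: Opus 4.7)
The plan is to decompose $\Reg(\phi)$ into a meta-level part (how well the meta-MWU tracks each base learner) and a base-level part (how well each base learner competes with $\phi$), bound each via the standard prior-dependent MWU guarantee, and then optimize over the base index $h$. The starting point is the OLO reduction from \pref{sec:non-uniform prior}: since $p_t$ is chosen as the stationary distribution of $\phi_t=\sum_{h=1}^M w_{t,h}\phi_t^h$, we have $p_t=\phi_t(p_t)$, so
\[
\Reg(\phi)=\sum_{t=1}^T\langle\phi_t-\phi,p_t\ell_t^\top\rangle=\underbrace{\sum_{t=1}^T\langle\phi_t-\phi_t^h,p_t\ell_t^\top\rangle}_{\textsc{meta}_h}+\underbrace{\sum_{t=1}^T\langle\phi_t^h-\phi,p_t\ell_t^\top\rangle}_{\textsc{base}_h}
\]
holds for every $h\in[M]$, and it suffices to bound the two pieces separately for a well-chosen $h$.

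Next I would bound the two pieces via standard Hedge/MWU analysis. Because each $\phi_t^h$ is row-stochastic, the meta-losses $\ell_{t,h}^w=\langle\phi_t^h(p_t),\ell_t\rangle$ lie in $[0,1]$, so the Hedge bound applied to the meta-MWU (uniform prior over $M$ experts, rate $\eta=\sqrt{\log\log d/T}$) gives $\textsc{meta}_h\le\log(M)/\eta+\eta T=\order(\sqrt{T\log\log d})$ for every $h$, using $\log M=\order(\log\log d)$. Each base learner $\calB_h$ runs prior-$\pi$ MWU over $\Phi_b$ with per-atom losses $\langle\phi',p_t\ell_t^\top\rangle=\langle\phi'(p_t),\ell_t\rangle\in[0,1]$, so the standard prior-dependent MWU bound gives, for any $q\in\Delta(\Phi_b)$,
\[
\sum_{t=1}^T \E_{\phi'\sim q_t^h}[\langle\phi',p_t\ell_t^\top\rangle]-\sum_{t=1}^T \E_{\phi'\sim q}[\langle\phi',p_t\ell_t^\top\rangle]\le\frac{\KL(q,\pi)}{\eta_h}+\eta_h T.
\]
For $q\in Q_\phi$ the LHS equals $\textsc{base}_h$ by linearity in $\phi'$ and the defining property $\E_{\phi'\sim q}[\phi']=\phi$. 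Because \pref{thm:special_prior} combined with $c_\phi\le d$ yields $\KL(q,\pi)\le\max_{\phi\in\Phi_b}\log(1/\pi(\phi))=\order(d\log d)\le 2^M$, the unconstrained optimum $\eta^\star=\sqrt{\KL(q,\pi)/T}$ lies within the geometric grid $\{\eta_h\}_{h=1}^M$ up to a factor of $2$. Picking the corresponding $h^\star$ yields $\textsc{base}_{h^\star}=\order(\sqrt{T\KL(q,\pi)}+\sqrt{T})$, and adding the meta bound produces the first claim $\Reg(\phi)=\order(\sqrt{T\KL(q,\pi)}+\sqrt{T\log\log d})$.

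For the consequence, I would use that $\KL(q,\pi)\le \E_{\phi'\sim q}[\log(1/\pi(\phi'))]$ since Shannon entropy is nonnegative. Plugging in $\log(1/\pi(\phi'))\le 2+2c_{\phi'}\log d$ from \pref{thm:special_prior} and minimizing over $q\in Q_\phi$ produces $\min_{q\in Q_\phi}\KL(q,\pi)\le 2+2c_\phi\log d$ (this is exactly the argument already used in \pref{thm:special_prior} to pass from $\Phi_b$ to $\calS$), which combined with the first claim gives the stated $\order(\sqrt{(1+c_\phi\log d)T}+\sqrt{T\log\log d})$ bound for every $\phi\in\calS$. The main subtle point, flagged in \pref{fn:stationary_dist}, is that $p_t$ must be the stationary distribution of the \emph{combined} matrix $\phi_t$ rather than a weighted average of the base stationary distributions; without this the identity $p_t=\phi_t(p_t)$ fails and the regret-to-OLO reduction collapses. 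The only other item needing care is verifying that the losses fed to both the meta- and base-MWUs lie in $[0,1]$ so that the standard MWU constants apply, which follows directly from row-stochasticity of every $\phi'\in\Phi_b$ and from $\ell_t\in[0,1]^d$.
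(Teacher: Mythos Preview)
Your proposal is correct and follows essentially the same approach as the paper: the stationary-distribution reduction $\Reg(\phi)=\sum_t\langle\phi_t-\phi,p_t\ell_t^\top\rangle$, the meta/base decomposition, the MWU bound on each piece, and the geometric-grid argument to locate a near-optimal $\eta_{h^\star}$. The one minor difference is in deriving the final $c_\phi$ bound: the paper verifies \pref{eq: prior condition} by taking $q$ to be the point mass on $\phi\in\Phi_b$ (so that $\KL(q,\pi)=\log(1/\pi(\phi))$) and then invokes \pref{thm:special_prior}, whereas you bypass \pref{eq: prior condition} and go straight to the $c_\phi$ bound via $\KL(q,\pi)\le\E_{\phi'\sim q}[\log(1/\pi(\phi'))]\le 2+2\E_{\phi'\sim q}[c_{\phi'}]\log d$ and then minimize over $q\in Q_\phi$. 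Both are valid; your route is a touch more direct (no Jensen on $\sqrt{\cdot}$ needed), but since the theorem statement explicitly asserts \pref{eq: prior condition}, you should still record the point-mass choice to cover that clause.
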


The bound in terms of $\sqrt{T\KL(q, \pi)}$ is stronger than what we need in \pref{eq: prior condition},
and using this stronger version in fact also allows us to additionally obtain the near-optimal $\epsilon$-quantile regret bound of order $\order(\sqrt{T\log(1/\epsilon)}+\sqrt{T\log\log d})$ when competing with the top $\epsilon$-quantile of experts~\citep{chaudhuri2009parameter}; see \pref{thm:quantile-reg} for details.

\begin{algorithm}[t]
\caption{Kernelized MWU with non-uniform prior}\label{alg:kernel_hedge}
\KwIn{learning rate $\eta>0$ and prior distribution $\pi$ (\pref{def:prior}); initialize  $B_1=\one\one^\top\in \R^{d\times d}$.}
\For{$t=1,2,\cdots,T$}{
    Compute $\phi_t\in\mathcal{S}$ such that $(\phi_t)_{ij} = 1-\frac{K(B_t,\one\one^\top -e_ie_j^\top)}{K(B_t,\one\one^\top)}$. 
    
    Receive $p_t\ell_t^\top$ and update $B_{t+1}\in\R^{d\times d}$ such that $(B_{t+1})_{ij} = (B_t)_{ij}\cdot \exp(-\eta(p_t\ell_t^\top)_{ij})$.
}
\end{algorithm}

\paragraph{Efficient Implementation of \pref{alg:MWU} via Kernelization}
To address the computational inefficiency of \pref{alg:MWU}, 
we take inspiration from \citet{farina2022kernelized} that shows that \pref{alg:MWU} with a uniform prior can be simulated efficiently as long as a certain kernel function can be evaluated efficiently,
and extend their idea from uniform prior to non-uniform prior.
Specifically, we propose the following prior-dependent kernel function.

\begin{definition}[kernel function]\label{def:kernel}
Define kernel $K(B,A) = \sum_{\phi \in \Phi_b} \pi(\phi) \prod_{i,j\in[d]: \phi_{ij}=1} B_{ij} A_{ij}$ for any $B, A \in \R^{d\times d}$.  
\end{definition}

We then show that this kernel function can be evaluated efficiently thanks to the structure of our prior $\pi$ and consequently the key output $\phi_t$ in \pref{alg:MWU} (required for \pref{alg:meta_1}) can also be computed efficiently via the Kernelized MWU shown in \pref{alg:kernel_hedge}.

\begin{theorem}\label{thm:kernel}
The kernel function $K$ defined in \pref{def:kernel} can be evaluated in time $O(d^3)$.
Moreover, the $\phi_t$ matrix computed by \pref{alg:MWU} and \pref{alg:kernel_hedge} are exactly the same.
\end{theorem}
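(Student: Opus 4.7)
My plan is to handle the two claims separately, starting with the efficient evaluation of the kernel $K$ and then establishing that the iterates of \pref{alg:MWU} and \pref{alg:kernel_hedge} coincide.

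For the first claim, the key observation is that each $\phi \in \Phi_b$ is in bijection with a function $\sigma: [d] \to [d]$ via $\phi_{i,\sigma(i)} = 1$ (and zero elsewhere). Under this identification, for any row-stochastic $\psi \in \calS$ we have $\pi_\psi(\phi) = \prod_{i \in [d]} \psi_{i,\sigma(i)}$, and $\prod_{i,j:\phi_{ij}=1} B_{ij} A_{ij} = \prod_{i \in [d]} B_{i,\sigma(i)} A_{i,\sigma(i)}$. Therefore, defining $K_\psi(B,A) \triangleq \sum_{\phi \in \Phi_b} \pi_\psi(\phi) \prod_{i,j:\phi_{ij}=1} B_{ij}A_{ij}$, we get
\begin{align*}
K_\psi(B,A) = \sum_{\sigma:[d]\to[d]} \prod_{i=1}^d \psi_{i,\sigma(i)} B_{i,\sigma(i)} A_{i,\sigma(i)} = \prod_{i=1}^d \sum_{j=1}^d \psi_{ij} B_{ij} A_{ij},
\end{align*}
which factorizes across rows and is computable in $O(d^2)$ time. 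Since $\pi = \frac{1}{2d}\sum_{k=1}^{d} \pi_{\psi^k} + \frac{1}{2}\pi_{\psi^{d+1}}$, linearity gives $K(B,A) = \frac{1}{2d}\sum_{k=1}^d K_{\psi^k}(B,A) + \frac{1}{2} K_{\psi^{d+1}}(B,A)$, which is computable in $O(d \cdot d^2) = O(d^3)$ time.

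For the second claim, I would first unroll the MWU update in \pref{alg:MWU} starting from $q_1 = \pi$:
\begin{align*}
q_t(\phi) \propto \pi(\phi) \exp\!\left(-\eta \sum_{s<t} \inner{\phi, p_s\ell_s^\top}\right) = \pi(\phi) \prod_{i,j:\phi_{ij}=1} (B_t)_{ij},
\end{align*}
where $(B_t)_{ij} \triangleq \prod_{s<t} \exp(-\eta (p_s\ell_s^\top)_{ij})$, which is exactly the matrix maintained recursively in \pref{alg:kernel_hedge}. The normalization constant is therefore $\sum_{\phi \in \Phi_b} \pi(\phi) \prod_{i,j:\phi_{ij}=1} (B_t)_{ij} = K(B_t, \one\one^\top)$. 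Now the output $\phi_t = \E_{\phi \sim q_t}[\phi]$ has entries $(\phi_t)_{ij} = \Pr_{\phi \sim q_t}[\phi_{ij}=1] = 1 - \Pr_{\phi \sim q_t}[\phi_{ij}=0]$, so it suffices to identify the unnormalized mass on $\{\phi : \phi_{ij}=0\}$ as a kernel evaluation. The trick is to set $A = \one\one^\top - e_i e_j^\top$: then $A$ has all entries equal to $1$ except $A_{ij}=0$, so for any $\phi$ with $\phi_{ij}=1$ the product $\prod_{i',j':\phi_{i'j'}=1} B_{i'j'} A_{i'j'}$ vanishes, whereas for $\phi$ with $\phi_{ij}=0$ the factor $A$ contributes only $1$'s. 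Hence $K(B_t, \one\one^\top - e_ie_j^\top) = \sum_{\phi:\phi_{ij}=0} \pi(\phi) \prod_{i',j':\phi_{i'j'}=1} (B_t)_{i'j'}$, and dividing by $K(B_t, \one\one^\top)$ gives exactly $(\phi_t)_{ij} = 1 - K(B_t, \one\one^\top - e_ie_j^\top)/K(B_t, \one\one^\top)$, matching \pref{alg:kernel_hedge}.

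The main (and only) real obstacle is recognizing the encoding $\phi \leftrightarrow \sigma$ that decouples the sum over $\Phi_b$ into a product of row-wise sums for each $\pi_{\psi^k}$, and the ``zero-one selector'' trick via $\one\one^\top - e_ie_j^\top$ that lets us extract a marginal from a single kernel evaluation. Neither step requires any nontrivial estimate; once these identities are in place, the theorem follows by direct substitution, so I would not expect to need any additional technical machinery.
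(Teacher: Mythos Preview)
Your proposal is correct and follows essentially the same argument as the paper's proof: the row-wise factorization of $K_\psi$ via the bijection $\phi\leftrightarrow\sigma$ is exactly how the paper obtains the $O(d^3)$ formula, and your ``zero-one selector'' computation with $A=\one\one^\top-e_ie_j^\top$ mirrors the paper's derivation of $(\phi_t)_{ij}=\frac{K(B_t,\one\one^\top)-K(B_t,\one\one^\top-e_ie_j^\top)}{K(B_t,\one\one^\top)}$ (the paper just writes it in the equivalent difference form rather than as $1-$ratio).
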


This theorem already shows that each iteration of  \pref{alg:kernel_hedge} can be implemented in time $\order(d^5)$ since it requires evaluating the kernel $2d^2$ times.
However, by reusing some intermediate statistics that are common in these $2d^2$ kernel evaluations {and the special structure of the stochastic matrices $\psi^1,\ldots,\psi^{d+1}$ defined in \pref{eqn:prior_psi}}, 
we can further speed up the algorithm such that each iteration takes only $\order(d^2)$ time, {making our algorithm as efficient as those by \citet{blum2007external,lu2025sparsity}}; see \pref{app:efficient-implementation} for details.

Combining \pref{thm:kernel} and \pref{thm:eta-hedge-reg}, we have thus shown that \pref{alg:meta_1} is an efficient algorithm with regret $\Reg(\phi)=\order(\sqrt{(1+c_\phi\log d)T}+\sqrt{T\log \log d})$ for all $\phi\in\calS$ simultaneously. 

\section{Second Approach: Learning over Multiple BM-Reductions}\label{sec:approach 2}
In this section, we introduce our second approach to achieve \pref{eq: prior condition} using a prior-aware variant of the BM-reduction \citep{blum2007external}. 
As a reminder, BM-reduction reduces swap regret minimization to $d$ external regret minimization problems, each with a different scaled loss vector in each round,
and achieves $\Reg(\phi) \leq \frac{d\log d}{\eta} + \eta T$ when each base external regret minimization algorithm is MWU (over $[d]$) with learning rate $\eta$.
Given a prior $\pi \in \Delta(\Phi_b)$, it is natural to ask whether a variant of BM-reduction can achieve  $\Reg(\phi) \leq \frac{\log(1/\pi(\phi))}{\eta} + \eta T$, replacing $d\log d$ with $\log(1/\pi(\phi))$.
We first show that this is indeed possible, but only when $\pi$ is a $\psi$-induced distribution for some $\psi\in \calS$ (\pref{def:induced_dist}),
and the only modification needed is to let the $i$-th MWU subroutine use the prior $\psi_{i:} \in \Delta(d)$.
See \pref{thm: induced dist to reg} in \pref{app:approach 2} for details.

Given that our prior of interest is a mixture of $d+1$ distributions induced by $\psi^1, \ldots, \psi^{d+1}$ (\pref{def:prior})
and also the same issue that a fixed learning rate $\eta$ cannot be adaptive to different comparator $\phi$,
we propose a natural meta-base framework that is very similar to \pref{alg:meta_1} and learns over both different $\psi^k$ and different learning rates.
Specifically, we maintain $(d+1) M$ (where $M$ is again $2\lceil \log_2 d\rceil$) base-learners $\calB_{k,h}$, indexed by $k\in [d+1]$ and $h\in [M]$. Each base-learner $\calB_{k,h}$ is an instance of the prior-aware BM-reduction \pref{alg:BMHedge-sub} with prior $\psi^k$ and learning rate $\sqrt{2^h/T}$. With this set of base learners, the rest of the algorithm is exactly the same as \pref{alg:meta_1}, and we thus defer all details to \pref{alg: approach 2} in the appendix.
The only crucial point (similar to \pref{fn:stationary_dist}) is that, even though the standard BM reduction directly outputs the stationary distribution of a stochastic matrix, it is important here that we first take a convex combination of these stochastic matrices and then compute its stationary distribution,
instead of using the convex combination of stationary distributions.

The following theorem shows that \pref{alg: approach 2} satisfies \pref{eq: prior condition} with $B=\sqrt{T\log d}$.
\begin{theorem}\label{thm: binary approach 2}
    \pref{alg: approach 2} satisfies \pref{eq: prior condition} with $B=\sqrt{T\log d}$. Consequently, it guarantees $\Reg(\phi)=\order\rbr{\sqrt{(1+c_\phi\log d)T} + \sqrt{T\log d}}$ for any $\phi \in \calS$.
\end{theorem}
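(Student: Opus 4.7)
The second bound in \pref{thm: binary approach 2} follows directly from the first by invoking \pref{thm:special_prior}, so the task reduces to verifying that \pref{alg: approach 2} satisfies \pref{eq: prior condition} with $B=\sqrt{T\log d}$ for every $\phi\in\Phi_b$. My plan mirrors the proof of \pref{thm:eta-hedge-reg}: use the standard meta--base decomposition, but exploit the particular mixture structure of the prior $\pi$ from \pref{def:prior} together with the prior-aware BM-reduction bound referenced just before the theorem (i.e., \pref{thm: induced dist to reg}).

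Using $p_t=\phi_t(p_t)$ and the linearity of $\phi\mapsto \langle\phi,p_t\ell_t^\top\rangle$, I would write, for any fixed indices $(k^*,h^*)$,
\begin{align*}
\Reg(\phi)\;=\;\sum_t \langle \phi_t-\phi,\,p_t\ell_t^\top\rangle
\;=\;\underbrace{\Big(\sum_t \langle w_t,\ell_t^w\rangle-\sum_t \ell^w_{t,k^*,h^*}\Big)}_{\text{meta regret}}
\;+\;\underbrace{\sum_t \langle \phi_t^{k^*,h^*}-\phi,\,p_t\ell_t^\top\rangle}_{\Reg^{k^*,h^*}(\phi)},
\end{align*}
where $\ell^w_{t,k,h}=\langle \phi_t^{k,h},p_t\ell_t^\top\rangle\in[0,1]$. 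The meta is an MWU over $(d+1)M=O(d\log d)$ experts on losses in $[0,1]$, so with learning rate tuned to $\sqrt{\log((d+1)M)/T}$ the standard analysis bounds the meta regret by $O(\sqrt{T\log d})$. For the base term, a row-by-row expansion shows that the $i$-th internal MWU of $\calB_{k^*,h^*}$ (with prior $\psi^{k^*}_{i:}$ and learning rate $\eta_{h^*}$) sees loss $p_t(i)\ell_t$; applying the MWU second-order bound to each row and using $\sum_i p_t(i)^2\leq 1$ yields
\begin{align*}
\Reg^{k^*,h^*}(\phi)\;\leq\;\frac{\sum_i\log(1/\psi^{k^*}_{i,j_i})}{\eta_{h^*}}+\eta_{h^*}\,T
\;=\;\frac{\log(1/\pi_{\psi^{k^*}}(\phi))}{\eta_{h^*}}+\eta_{h^*}\,T,
\end{align*}
where $j_i$ is the column of the single $1$ in the $i$-th row of $\phi$.

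It then remains to pick $(k^*,h^*)$. Since $\pi$ is a convex combination of the $d+1$ induced distributions $\pi_{\psi^k}$ with coefficients summing to one, there exists $k^*\in[d+1]$ with $\pi_{\psi^{k^*}}(\phi)\geq \pi(\phi)$, and hence $\log(1/\pi_{\psi^{k^*}}(\phi))\leq \log(1/\pi(\phi))$. Moreover, \pref{thm:special_prior} gives $\log(1/\pi(\phi))=O(d\log d)$, so the geometric grid $\eta_h=\sqrt{2^h/T}$ for $h\in[2\lceil\log_2 d\rceil]$ contains some $\eta_{h^*}$ within a factor of two of $\sqrt{\log(1/\pi(\phi))/T}$ (taking $h^*=1$ when $\log(1/\pi(\phi))=O(1)$). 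Plugging in gives $\Reg^{k^*,h^*}(\phi)=O(\sqrt{T\log(1/\pi(\phi))}+\sqrt{T})$, and combining with the meta bound proves \pref{eq: prior condition} with $B=\sqrt{T\log d}$; the final $c_\phi$-dependent statement follows from \pref{thm:special_prior}.

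The most delicate step is the base-learner bound. In the standalone prior-aware BM-reduction, the algorithm plays its own stationary distribution, while here $\calB_{k^*,h^*}$ is driven by the meta's $p_t$. What saves the argument is that BM-reduction analyses are linear in the rows of the loss matrix: the MWU bound for the $i$-th row depends only on the scaling $p_t(i)$, and the identity $\sum_i p_t(i)^2\leq 1$ holds for any probability vector, so the prior-aware bound carries over verbatim after a clean row-by-row accounting. This structural feature of BM-reduction is what makes the second approach preferable to the kernelized MWU of \pref{sec:approach 1} for the game-theoretic extension in \pref{sec:equilibrium}.
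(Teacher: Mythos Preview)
Your proposal is correct and follows essentially the same route as the paper: the same meta--base decomposition, the same row-by-row prior-aware BM bound (the paper packages this as \pref{thm: induced dist to reg}, bounding the per-row variance term by $p_{t,i}$ rather than your $p_{t,i}^2$, which is immaterial), and the same choice of $k^*$ via $\max_k \pi_{\psi^k}(\phi)\geq \pi(\phi)$ followed by picking $h^*$ from the geometric grid. The only cosmetic difference is that the paper upper-bounds $\log(1/\pi(\phi))$ directly from $\psi_{ij}^k\geq 1/d^2$ to justify the grid covers the optimum, whereas you invoke \pref{thm:special_prior}; both work.
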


Even though the guarantee of this second approach is slightly worse than that of \pref{alg:meta_1} (but still better than \citet{lu2025sparsity}),
in the next section, we show that its particular structure is crucial in extending our results to games.

\section{Applications to Games}\label{sec:equilibrium}

In this section, we discuss how to extend \pref{alg: approach 2} to achieve accelerated and adaptive $\Phi$-equilibrium convergence in $N$-player general-sum normal-form games.
We first introduce necessary background on the connection between online learning and games. 
Consider an $N$-player general-sum normal-form game, where each player $n \in [N]$ has a finite set of actions $[d]$.\footnote{For notational conciseness, we assume that the action set size is the same for all players, but our analysis can be directly extended to games with different action set sizes.} 
For a given joint action profile $\a = (a_1,\ldots,a_N) \in [d]^N\triangleq \calA$, the loss received by player $n$ is given by some loss function $\ell^\prn : \calA \to [0, 1]$. 
For notational convenience, denote $\a^{(-n)}=(a_1,\dots,a_{n-1},a_{n+1},\dots,a_N)$.
Given $\Phi=\times_{n=1}^N \Phi_{n}$ where each $\Phi_n \subseteq \calS$ is a set of action transformations for player $n$,
the corresponding (approximate) $\Phi$-equilibrium is defined as follows.

\begin{definition}[$\epsilon$-approximate $\Phi$-equilibrium]\label{def:phi_eq}
We call a distribution $\p \in \Delta(\calA)$ over all joint action profiles an $\epsilon$-approximate $\Phi$-equilibrium if for all players $n \in [N]$ and all $\phi \in \Phi_n$,  
$\E_{\a\sim\p}[\ell^\prn(\a)] \leq \E_{\a\sim\p}[\ell^\prn(\phi(a^{(n)}), \a^{(-n)})] + \epsilon.$ When $\epsilon=0$, we call $\p$ a $\Phi$-equilibrium.
\end{definition}

When $\Phi_n = \Phi_{\ext}$ for all $n$, $\Phi$-equilibrium reduces to \textit{Coarse Correlated Equilibrium} (CCE),
and when $\Phi_n = \Phi_b$ for all $n$, $\Phi$-equilibrium reduces to \textit{Correlated Equilibrium} (CE).

Approximate $\Phi$-equilibrium can be found via the following uncoupled no-regret learning dynamic.
At each round $t \in [T]$, each player $n$ proposes $p_t^\prn \in \Delta(d)$, forming an uncorrelated distribution $\p_t = (p_t^{(1)},\ldots,p_t^{(N)})$,
and receives a loss vector $\ell_t^\prn \in [0,1]^{d}$ as the feedback where $\ell^\prn_{t,a} \triangleq \E_{\a\sim\p_t}[\ell^\prn\rbr{a, \a^{(-n)}}]$, for any $a \in [d]$. The $\Phi_n$-regret for player $n$ is then defined as $\Reg_n \triangleq \max_{\phi\in \Phi_n} \Reg_n(\phi) = \max_{\phi\in \Phi_n}\sum_{t=1}^T\langle p_t^\prn-\phi(p_t^\prn), \ell_t^{(n)}\rangle$,
and we denote the special case of external regret for $\Phi_n = \Phi_{\ext}$ as $\Reg_n^{\ext}$.
The following proposition from~\citet{greenwald2003general} builds the connection between no-$\Phi$-regret learning and convergence to $\Phi$-equilibrium.
\begin{proposition}[\citep{greenwald2003general}]\label{prop:phi-eq-regret}
    The empirical distribution of joint strategy profiles,
    that is, uniform over $\p_1, \ldots, \p_T$,
    is a $\frac{\max_{n\in[N]}\{\Reg_{n}\}}{T}$-approximate $\Phi$-equilibrium.
\end{proposition}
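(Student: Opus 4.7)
The plan is to directly unpack the definition of an $\epsilon$-approximate $\Phi$-equilibrium applied to the empirical distribution $\bar{\p}$, uniform over $\p_1,\ldots,\p_T$, and rewrite each side in terms of the per-round inner products that define $\Reg_n(\phi)$. Fix an arbitrary player $n \in [N]$ and an arbitrary transformation $\phi \in \Phi_n$. First I would write
\[
\E_{\a\sim\bar{\p}}[\ell^\prn(\a)] = \frac{1}{T}\sum_{t=1}^T \E_{\a\sim\p_t}[\ell^\prn(\a)].
\]
Because each round's joint profile $\p_t = (p_t^{(1)},\ldots,p_t^{(N)})$ is a product distribution (the dynamic is uncoupled), the inner expectation factors, and by the definition $\ell_{t,a}^\prn = \E_{\a\sim\p_t}[\ell^\prn(a, \a^{(-n)})]$ it collapses to $\innerp{p_t^\prn}{\ell_t^\prn}$.

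Next I would handle the counterfactual side analogously. Since $\p_t$ is a product distribution and $\phi$ is a linear transformation on $\Delta(d)$, averaging a pure action $a^{(n)}$ sampled from $p_t^\prn$ against $\phi(a^{(n)})$ gives the mixed action $\phi(p_t^\prn)$, so
\[
\E_{\a\sim\p_t}\bigl[\ell^\prn(\phi(a^{(n)}), \a^{(-n)})\bigr] = \innerp{\phi(p_t^\prn)}{\ell_t^\prn},
\]
where again I use the definition of $\ell_t^\prn$ as the marginal expected loss against $\p_t^{(-n)}$. Subtracting these two identities and averaging over $t$ yields
\[
\E_{\a\sim\bar{\p}}[\ell^\prn(\a)] - \E_{\a\sim\bar{\p}}\bigl[\ell^\prn(\phi(a^{(n)}), \a^{(-n)})\bigr] = \frac{1}{T}\sum_{t=1}^T \innerp{p_t^\prn - \phi(p_t^\prn)}{\ell_t^\prn} = \frac{\Reg_n(\phi)}{T}.
\]

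Finally, bounding this by $\max_{\phi \in \Phi_n} \Reg_n(\phi)/T = \Reg_n/T$ and then by $\max_{n\in[N]} \Reg_n/T$ (uniformly over $\phi$ and $n$) matches the definition of an $\epsilon$-approximate $\Phi$-equilibrium with $\epsilon = \max_{n\in[N]} \Reg_n / T$. There is no real obstacle here: the argument is a short chain of linearity-of-expectation manipulations, and the only small care needed is to exploit the product structure of each $\p_t$ and the linearity of $\phi$ when interpreting $\phi(a^{(n)})$ as the pushforward of a pure action under the row-stochastic matrix $\phi$. No auxiliary results beyond the definitions in \pref{sec:equilibrium} are required.
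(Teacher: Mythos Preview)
Your proof is correct and is the standard argument for this classical fact. Note, however, that the paper does not actually supply its own proof of \pref{prop:phi-eq-regret}: it is stated as a known result from \citet{greenwald2003general} and used as background for the connection between $\Phi$-regret and $\Phi$-equilibria. So there is no ``paper's proof'' to compare against; your derivation---unfolding the empirical distribution, using the product structure of each $\p_t$ and linearity of $\phi$ to identify the gap with $\Reg_n(\phi)/T$, then taking the max over $\phi$ and $n$---is exactly the canonical one-paragraph proof one would expect.
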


While one can apply our \pref{alg:meta_1} or \pref{alg: approach 2} directly for each player to obtain $1/\sqrt{T}$ convergence rate that is adaptive to the complexity of $\Phi$,
we are interested in achieving accelerated $\otil(1/T)$ convergence rate that has been shown possible in recent years for canonical $\Phi$.
For example, for CCE, \citet{daskalakis2021near, farina2022near, soleymani2025faster} show the following $\polylog(T)$ bound on $\Reg_n^\ext$ respectively: $\order(N\log d\log^4 T), \order(Nd\log T), \order(N\log^2 d \log T)$;
for CE, \citet{anagnostides2022near, anagnostides2022uncoupled} show the following bound on $\Reg_n$: $\order(Nd\log d\log^4 T)$ and $\order(Nd^{2.5}\log T)$.
Our goal is to achieve similar fast rates while at the same time being adaptive to the complexity of $\Phi$,
and we successfully achieve so, albeit only for the following class of games.

\begin{definition}[Nonnegative-social-external-regret games]\label{def:non-negative-regret}
    We call a game a \emph{nonnegative-social-external-regret game} if $\sum_{n=1}^N\Reg_n^{\ext}\geq 0$ always holds.
\end{definition}

This class was explicitly considered in \citet{anagnostides2022last} and contains a broad family of well-studied games, including constant-sum polymatrix games, polymatrix strategically zero-sum games, and quasiconvex-quasiconcave games.
Therefore, we believe that our results are still very general and non-trivial.
We are unable to deal with general games using ideas from aforementioned recent work due to the two-layer nature of our algorithms.
In fact, even when considering only this subclass of games, it is unclear to us how to make our first approach discussed in \pref{sec:approach 1} or the algorithm of~\citet{lu2025sparsity} work,
and we have to resort to extending our \pref{alg: approach 2}.
In the following, we discuss how we design our algorithm (shown in \pref{alg:game}) based on similar ideas of \pref{alg: approach 2} and what extra ingredients are needed.

\begin{algorithm}[t]
\caption{Meta Algorithm for Accelerated and Adaptive Convergence in Games}\label{alg:game}
\setcounter{AlgoLine}{0}
\KwIn{learning rate $\eta_m>0$, correction scale $\lambda > 0$}
\nl \textbf{Initialize:} $d+2$ base learners $\calB_1, \ldots, \calB_{d+2}$, all with learning rate $\eta=\frac{1}{16N}$.
For $k<d+2$, $\calB_k$ is an instance of \pref{alg:BMHedge-sub} with prior  $\psi^k$ and $\SubAlg$ being OMWU (\pref{alg:OMWU-sub}); $\calB_{d+2}$ is an instance of \pref{alg:OMWU-sub} (with uniform prior); set $\wh{w}_1=[\frac{1}{2d},\dots,\frac{1}{2d}, \frac{1}{4},\frac{1}{4}]\in\Delta(d+2)$.

\For{$t=1,2\dots,T$}{
    \nl Receive $\phi_t^{k}\in \calS$ from base learner $\calB_k$ for each $k\in [d+2]$.
    
    \nl Compute $c_t\in \R^{d+2}$ where $c_{t,k}=\lambda\|\wt{p}_{t-1}^{k} - \wt{p}_{t-2}^{k}\|_1^2\cdot \mathbbm{1}\{t\geq 3\}$ and $\wt{p}_t^k=\phi_{t}^{k}(p_{t})$. \label{line:correction} 
    
    \nl Compute $m_t^{w}\in \R^{d+2}$ where $m_{t,k}^{w}=\inner{\phi_t^k, p_{t-1}\ell_{t-1}^\top}\cdot \mathbbm{1}\{t\geq 2\}$ for $k\in[d+2]$. \label{line:m_t}
    
    \nl Compute $w_t$ such that $w_{t,k}\propto \wh{w}_{t,k}\exp(-\eta_m (m_{t,k}^w+c_{t,k}))$. \;\label{line: w_t_opt} 
    
    \nl Compute $\phi_t=\sum_{k=1}^{d+2}w_{t,k}\phi_{t}^k$ and play stationary distribution $p_t$ satisfying $p_t=\phi_t(p_t)$.
    
    \nl Receive $\ell_t$ and compute $\ell_t^w\in\R^{d+2}$ where $\ell_{t,k}^w=\inner{\phi_t^k, p_t\ell_t^\top}$ for $k\in[d+2]$.
    
    \nl Update $\wh{w}_{t+1}$ such that $\wh{w}_{t+1,k}\propto \wh{w}_{t,k}\exp(-\eta_m (\ell_{t,k}^w+c_{t,k}))$.\label{line: w_t_opt_hat} 
    
    \nl Send $p_t\ell_t^\top $ to $\calB_k$ for $k\in[d+1]$ and send $\ell_t$ to $\calB_{d+2}$.
}
\end{algorithm}

\paragraph{Base learners} Compared to \pref{alg: approach 2}, there are several differences in the base learner design. First, while we still maintain a base learner $\calB_k$ (\pref{alg:BMHedge-sub}) for each prior $\psi^k$, we do not need to maintain different copies of it to account for different learning rates, since in the end we will use a fixed constant learning rate, similar to prior work on accelerated convergence. 
Second, inspired by a long line of work showing that optimism accelerates convergence, for each $\calB^k$, we replace its subroutines from MWU to Optimistic MWU (OMWU)~\citep{rakhlin2013optimization, syrgkanis2015fast} (\pref{alg:OMWU-sub}). Finally, besides these $d+1$ base learners, we additionally include a base learner $\calB_{d+2}$, an instance of OMWU (\pref{alg:OMWU-sub}) with a uniform prior, to explicitly minimize external regret. This last modification is in a way most crucial to our analysis, 
since it allows us to utilize the nonnegative-social-external-regret property and show that the path-length of the entire learning dynamic is $T$-independent and of order $\order(N\log d)$ only; see \pref{app: analysis_game} for details.

\paragraph{Meta learner} In addition, there are also several modifications to the meta learner compared to \pref{alg: approach 2}. First, similar to the base learners, instead of using MWU, we apply OMWU to compute $w_t$ and the auxiliary $\wh{w}_t$ (\pref{line: w_t_opt} and \pref{line: w_t_opt_hat} of \pref{alg:game}). 
Importantly, the update of $w_t$ uses a ``predictive loss vector'' $m_{t}^w$ such that $m_{t,k}^{w}=\inner{\phi_t^k, p_{t-1}\ell_{t-1}^\top}$ (\pref{line:m_t}).
The fact that $m_{t}^{w}$ is not simply the previous loss vector $\ell_{t-1}^w$, a canonical setup for OMWU, is important for the analysis, as already shown in \citet{zhang2022no} under a different context.
Second, also inspired by \citet{zhang2022no,zhao2024adaptivity}, in order to aggregate the guarantee for all base learners, in both the update of $w_t$ and $\wh{w}_t$, we propose to add a stability correction term $c_t$ (\pref{line:correction} of \pref{alg:game}), which 
guides the meta algorithm to bias toward the more stable base learners, hence also stabilizing the final decision.
While the idea is similar, the specific value of $c_{t,k}$ is tailored to our analysis and takes into account not only the stability of $\phi_t^k$ from the base learner $\calB_k$ but also the stability of the stationary distribution $p_t$.
Our main result is as follows.

\begin{theorem}\label{thm:game}
    For an $N$-player normal-form general-sum game satisfying \pref{def:non-negative-regret}, if each player $n\in[N]$ runs \pref{alg:game} with $\eta_m=\frac{1}{64N}$ and $\lambda=N$, then we have $\Reg_n(\phi)=\order(c_{\phi}N\log d + N^2\log d)$ and $\Reg_n^{\ext}=\order(N\log d)$ for all $n\in[N]$.
    Consequently, the uniform distribution over their joint strategy profiles is an $\order\rbr{\frac{N\log d}{T}}$-approximate CCE and also an $\order\rbr{\frac{\max_{n\in[N], \phi\in \Phi_n}c_{\phi}N\log d + N^2\log d}{T}}$-approximate $\Phi$-equilibrium, simultaneously for all $\Phi \subseteq \calS^N$. 
\end{theorem}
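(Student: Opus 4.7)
The plan is to decompose $\Reg_n(\phi)$ into a meta-regret against the ``best matching'' base learner plus the base-regret of that learner, bound each via an RVU-style optimistic analysis, and close the loop by using the nonnegative-social-external-regret hypothesis to show that the path-length of the joint dynamic is $T$-independent.

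First I would fix player $n$ and, as in the proof of \pref{thm:special_prior} and \pref{thm: binary approach 2}, reduce to $\phi \in \Phi_b$ by taking convex combinations. For such $\phi$, pick $k(\phi) \in [d+1]$ maximizing $\pi_{\psi^{k}}(\phi)$, so that $\log(1/\pi_{\psi^{k(\phi)}}(\phi)) = \order(c_\phi\log d)$. Writing
\[
\Reg_n(\phi) = \sum_{t=1}^T\inner{\phi_t - \phi_t^{k(\phi)},\, p_t\ell_t^\top} + \sum_{t=1}^T\inner{\phi_t^{k(\phi)} - \phi,\, p_t\ell_t^\top},
\]
reduces the task to bounding a meta-regret against the comparator $e_{k(\phi)}$ and the base-regret of $\calB_{k(\phi)}$; an analogous decomposition through $\calB_{d+2}$ handles $\Reg_n^\ext$.

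For the base-regret, I would apply the RVU property of OMWU inside the prior-aware BM-reduction to obtain a bound of the form
\[
\order\!\rbr{\tfrac{\log(1/\pi_{\psi^{k(\phi)}}(\phi))}{\eta} + \eta\sum_{t\ge 2}\|p_t\ell_t^\top - p_{t-1}\ell_{t-1}^\top\|_\infty^2 - \tfrac{1}{\eta}\sum_{t\ge 2}\|\phi_t^{k(\phi)}-\phi_{t-1}^{k(\phi)}\|_1^2},
\]
with an analogous bound for $\calB_{d+2}$ where $\log d$ replaces $\log(1/\pi_{\psi^{k(\phi)}}(\phi))$. In a game, the variation $\|p_t\ell_t^\top - p_{t-1}\ell_{t-1}^\top\|_\infty^2$ is bounded by $\|p_t-p_{t-1}\|_1^2$ plus contributions from opponents' path-lengths. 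For the meta-regret, I would apply OMWU with prediction $m_t^w$ and stability correction $c_t$ (borrowing ideas from \citet{zhang2022no}). Since $m_{t,k}^w = \inner{\phi_t^k, p_{t-1}\ell_{t-1}^\top}$, the prediction error $\|\ell_t^w - m_t^w\|_\infty$ is $\order(\|p_t\ell_t^\top - p_{t-1}\ell_{t-1}^\top\|_\infty)$; and since $p_t = \sum_k w_{t,k}\wt p_t^k$, the weighted correction $\sum_k w_{t,k}c_{t,k}$ dominates $\lambda\|p_t-p_{t-1}\|_1^2$ modulo a term absorbed by the meta's own OMWU stability. Combined, $\Reg_n(\phi)$ is bounded by $\order(c_\phi N\log d + N^2\log d)$ plus a small positive multiple of the joint path-length minus $\Omega(1/N)$ times $\sum_t\|p_t-p_{t-1}\|_1^2$.

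The main obstacle is closing the path-length recursion. Here the inclusion of $\calB_{d+2}$ is crucial: taking $\phi = \mathbf{1}e_i^\top$ in the above bound produces, for every player $n$ simultaneously, an inequality of the form $\Reg_n^\ext \le \order(N\log d) + O(\eta_m)\sum_{n'=1}^N P^{(n')} - \Omega(1/N)\cdot P^{(n)}$, where $P^{(n)} \triangleq \sum_t\|p_t^{(n)} - p_{t-1}^{(n)}\|_1^2$. Summing over $n$ and using $\sum_n \Reg_n^\ext \ge 0$ (the nonnegative-social-external-regret hypothesis) converts the left-hand side into a useful constraint; choosing $\eta_m = 1/64N$ and $\lambda = N$ so that the coefficient on $\sum_{n'} P^{(n')}$ is strictly smaller than that on $\sum_n P^{(n)}$ solves to $\sum_n P^{(n)} = \order(N\log d)$. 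Substituting back yields $\Reg_n^\ext = \order(N\log d)$ and $\Reg_n(\phi) = \order(c_\phi N\log d + N^2\log d)$; the equilibrium rates then follow from \pref{prop:phi-eq-regret}.
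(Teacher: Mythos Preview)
Your proposal is essentially correct and follows the same two-level RVU strategy as the paper: decompose into meta-regret plus base-regret, pick the base learner $\calB_{k(\phi)}$ matching the comparator (the paper does this as two separate cases $k=d+1$ and $k=i_0$ rather than via a single $k(\phi)$, but this is cosmetic), bound both via optimistic RVU inequalities with the correction $c_t$ generating a negative own-path-length term in the meta bound, and close the recursion by summing the external-regret inequalities over players and invoking $\sum_n \Reg_n^{\ext}\ge 0$.

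One point deserves sharpening. You write that the combined bound for $\Reg_n(\phi)$ through $\calB_{k(\phi)}$ retains a term $-\Omega(1/N)\sum_t\|p_t-p_{t-1}\|_1^2$, and that ``taking $\phi=\mathbf{1}e_i^\top$ in the above bound'' then closes the path-length loop. That is not quite how the cancellation works. For $k\in[d+1]$, the positive correction at the comparator is $\lambda\sum_t\|\wt p_t^{k}-\wt p_{t-1}^{k}\|_1^2$, and since $\wt p_t^{k}=(\phi_t^{k})^\top p_t$ depends on $p_t$, this term unavoidably contains a $2\lambda\sum_t\|p_t-p_{t-1}\|_1^2$ piece (\pref{lem:bound-correction-term} in the paper) that overwhelms the meta's $-\tfrac{\lambda}{4}\sum_t\|p_t-p_{t-1}\|_1^2$; so no negative own-path-length survives in the $\calB_{k(\phi)}$ route. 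The place where the negative term \emph{does} survive is precisely the $\calB_{d+2}$ decomposition you mention earlier: because $\phi_t^{d+2}=\mathbf{1}(\wt p_t^{d+2})^\top$ with $\wt p_t^{d+2}$ the raw OMWU iterate (independent of $p_t$), the comparator-side correction $\lambda\|\wt p_t^{d+2}-\wt p_{t-1}^{d+2}\|_1^2$ is \emph{exactly} the base-OMWU stability term and cancels cleanly, leaving the full $-\tfrac{\lambda}{4}\sum_t\|p_t-p_{t-1}\|_1^2$. This is the real reason $\calB_{d+2}$ is ``crucial,'' and it is this bound (not the $\calB_{k(\phi)}$ bound specialized to $\phi=\mathbf{1}e_i^\top$) that one sums over players and combines with the nonnegative-social-external-regret hypothesis to obtain $\sum_{n}P^{(n)}=\order(N\log d)$. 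Once that is in hand, the $\calB_{k(\phi)}$ route for general $\phi$ only needs the residual \emph{positive} path-length terms (with coefficient $\order(\lambda)=\order(N)$) to be absorbed, yielding the $N^2\log d$ additive term.
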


To our knowledge, our result achieves the first adaptive and accelerated $\Phi$-equilibrium guarantee. 
For the special case of CCE, the rate $\order(\frac{N\log d}{T})$ matches that of OMWU (for nonnegative-social-external-regret games),
and for CE, it is unclear at all what better results one can obtain for nonnegative-social-external-regret games than those rates from~\citep{anagnostides2022near,anagnostides2022uncoupled} for general games.
If we compare their bounds to ours, since $\max_{n\in [N],\phi\in \Phi_n}c_\phi = d$ in this case, we improve over~\citet{anagnostides2022uncoupled} on the $d$-dependence and remove any $\polylog(T)$ dependence compared to~\citet{anagnostides2022near, anagnostides2022uncoupled}.
One disadvantage of our results is the additive term of $N^2\log d$ for $\Phi$-equilibrium other than CCE. Removing this term is an interesting future direction.

\section{Conclusion and Future Directions}
In this work, we significantly improve over a recent work by~\citet{lu2025sparsity} regarding comparator adaptive $\Phi$-regret, by developing simpler algorithms, better bounds, and broader applications to games.
The most interesting future direction is to improve our results for games, especially to remove the requirement on nonnegative social external regret.
The idea of high-order stability from~\citet{daskalakis2021near, anagnostides2022near} might be useful, but appropriately combining this idea with our approaches requires further investigation. For the expert problem, it is also interesting to derive comparator-adaptive $\Phi$-regret with respect to other complexity measure of the comparator. 

\paragraph{Acknowledgement}
HL thanks Shinji Ito for initial discussion on this topic. He is supported by NSF award IIS-1943607.

\newpage
\bibliographystyle{plainnat}
\bibliography{ref}

\newpage
\appendix
\section{Omitted Details in \pref{sec:preliminaries} and \pref{sec:non-uniform prior}}\label{app:non-uniform prior}

As mentioned, \citet{lu2025sparsity} define $c_\phi$ as $\min \{d-d^\self_\phi,d-d^\unif_\phi+1\}$ for all $\phi \in \calS$,
while our definition for $\phi \in \calS\setminus\Phi_b$ is different.
The following shows that ours is strictly better.

\begin{proposition}\label{prop: compare complexity}
    For any $\phi\in\calS$, we have $c_\phi\leq \min \{d-d^\self_\phi,d-d^\unif_\phi+1\}$. Moreover, there exists $\phi\in\calS$ such that $c_\phi=\order(1)$ and $\min \{d-d^\self_\phi,d-d^\unif_\phi+1\}=\Omega(d)$.
\end{proposition}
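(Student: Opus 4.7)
The plan is to prove the two upper bounds separately and then exhibit a single $\phi$ that witnesses the strict separation. For $c_\phi\le d-\dself_\phi$, the key observation is a ``forcing'' argument: whenever $\phi(e_i)=e_i$, the $i$-th row $\phi_{i:}$ equals the degenerate distribution $e_i^\top$, so in every $q\in Q_\phi$ the random binary row $\phi'_{i:}$ must equal $e_i^\top$ almost surely (for $j\ne i$, $\E_{q}[\phi'_{ij}]=\phi_{ij}=0$ together with $\phi'_{ij}\in\{0,1\}$ forces $\phi'_{ij}=0$ a.s.). Consequently $\dself_{\phi'}\ge \dself_\phi$ almost surely, so $c_{\phi'}\le d-\dself_{\phi'}\le d-\dself_\phi$ almost surely, and taking expectation finishes this case.

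For $c_\phi\le d-\dunif_\phi+1$, let $k=\dunif_\phi$ and fix a set $S\subseteq[d]$ of size $k$ whose rows of $\phi$ all coincide with some common $r\in\Delta(d)$. I would construct $q\in Q_\phi$ by the coupling: draw one $j\sim r$ and set $\phi'_{i:}=e_j^\top$ for every $i\in S$ simultaneously, while for each $i\notin S$ independently sample $\phi'_{i:}$ from $\phi_{i:}$. Every row marginal of $\phi'$ still matches $\phi_{i:}$, so $q\in Q_\phi$; by construction the $k$ rows indexed by $S$ are identical, giving $\dunif_{\phi'}\ge k$ almost surely. Hence $c_{\phi'}\le d-\dunif_{\phi'}+1\le d-k+1$ almost surely, proving the second bound after taking expectation.

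For the strict separation, take $\phi=\psiself=\frac{d-2}{d-1}\mathbf{I}+\frac{1}{d(d-1)}\one\one^\top$. Direct computation gives $\phi_{ii}=(d-1)/d<1$, so $\dself_\phi=0$; and since each row has its strict maximum $(d-1)/d$ on its own diagonal entry (all other entries equal $1/(d(d-1))$), no two rows agree and $\dunif_\phi=1$, so $\min\{d-\dself_\phi,d-\dunif_\phi+1\}=d=\Omega(d)$. On the other hand, the distribution $q\in\Delta(\Phi_b)$ placing mass $\frac{d-2}{d-1}$ on $\mathbf{I}$ and mass $\frac{1}{d(d-1)}$ on each $\one e_j^\top$ for $j\in[d]$ satisfies $\E_{q}[\phi']=\frac{d-2}{d-1}\mathbf{I}+\frac{1}{d(d-1)}\one\one^\top=\psiself$, so $q\in Q_{\psiself}$. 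Since $c_{\mathbf{I}}=0$ (as $\dself_{\mathbf{I}}=d$) and $c_{\one e_j^\top}=1$ (as $\dunif=d$), we get $c_{\psiself}\le \E_{q}[c_{\phi'}]=\frac{1}{d-1}=O(1)$, completing the proof.

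The main obstacle I anticipate is the $\dunif$ bound: a naive row-independent decomposition would only produce in expectation the multiplicity of the mode of $r$, which can be as small as $1$ (if $r$ is the uniform distribution) and is far too weak. The decisive structural observation is that since the $k$ rows indexed by $S$ are literally equal in $\phi$, one may couple them to the same random basis vector without violating any row marginal, thereby obtaining a deterministic lower bound $\dunif_{\phi'}\ge k$ rather than an expectational one.
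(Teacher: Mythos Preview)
Your proposal is correct and follows essentially the same approach as the paper: the coupling for the $\dunif$ bound is exactly the paper's construction, your almost-sure forcing argument for $\dself$ is a minor variant of the paper's trace argument ($\dself_\phi\le\trace\phi=\E_q[\dself_{\phi'}]$ for every $q\in Q_\phi$), and your separating example $\psiself$ is the $\epsilon=1/d$ instance of the paper's $\phi_1$ family. The only cosmetic difference is that you exhibit an explicit decomposition of $\psiself$ into $\mathbf I$ and the $\one e_j^\top$'s, yielding $c_{\psiself}\le 1/(d-1)$ directly rather than via the trace bound.
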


\begin{proof}[Proof of \pref{prop: compare complexity}]
Since 
\begin{align*}
    c_\phi&=\min_{q\in Q_\phi} \mathbb{E}_{\phi'\sim q}[c_{\phi'} ]=\min_{q\in Q_\phi} \mathbb{E}_{\phi'\sim q}[\min \{d-d^\self_{\phi'},d-d^\unif_{\phi'}+1\} ]\\
    &\le \min_{q\in Q_\phi} \min\{ \mathbb{E}_{\phi'\sim q}[d-d^\self_{\phi'}],\mathbb{E}_{\phi'\sim q}[d-d^\unif_{\phi'}+1 ]\}\\
    &=\min\{ \min_{q\in Q_\phi}\mathbb{E}_{\phi'\sim q}[d-d^\self_{\phi'}],\min_{q\in Q_\phi}\mathbb{E}_{\phi'\sim q}[d-d^\unif_{\phi'}+1 ]  \}\\
    &=\min\{ d-\trace{\phi}, d-\max_{q \in Q_\phi}\mathbb{E}_{\phi'\sim q}[\dunif_{\phi'}] +1 \},
\end{align*}
it suffices to prove $\dself_\phi\le \trace{\phi}$ and $\dunif_\phi\le \max_{q \in Q_\phi}\mathbb{E}_{\phi'\sim q}[\dunif_{\phi'}]$ separately. 
First, by the definition of $\dself_\phi$, it directly follows that $\dself_\phi\le \trace{\phi}$.

Second, we construct a distribution $p\in\Delta(\Phi_b)$ such that $\phi=\sum_{\phi'\in\Phi_b}p(\phi')\phi'$ and show that $ \sum_{\phi'\in\Phi_b}p(\phi')\dunif_{\phi'}\ge \dunif_\phi$, which in turn implies that $ \max_{q\in Q_\phi}\mathbb{E}_{\phi'\sim q}[\dunif_{\phi'}]\ge \dunif_\phi$.
Suppose that the most frequent element in $\{\phi(e_1),\cdots,\phi(e_d)\}$ is $q\in \Delta(d)$ and $\phi(e_j)=q$ for all $j\in \calA \subseteq [d]$ with $|\calA|=\dunif_\phi$. Then, we can write $\phi$ as
$\phi=\sum_{i=1}^dq_i\phi_i$,
where $\phi_i(e_j)=e_i$ for all $j\in \calA$ and $\phi_i(e_j)=\phi(e_j)$ for all $j\notin \calA$. This guarantees that $\dunif_{\phi_i}\geq \dunif_{\phi}$. 
Furthermore, let $\phi_i=\sum_{\phi_i'\in \Phi_b}p_i(\phi_i')\cdot \phi_i'$ be any convex decomposition of $\phi_i$,
and note that for any $\phi_i'$ in the support of $p_i$, we must have $\phi_i'(e_j)=e_i$ for all $j\in \calA$, meaning that $\dunif_{\phi_i'} \ge \dunif_{\phi_i}$. 
Now we have constructed a convex combination for $\phi$:
\begin{align*}
    \phi=\sum_{i=1}^d q_i\phi_i=\sum_{i=1}^d \sum_{\phi_i'\in \Phi_b}q_i\cdot p_i(\phi_i')\cdot \phi_i'    
\end{align*}
and consequently,
\begin{align*}
    \dunif_\phi \le \sum_{i=1}^d q_i\cdot \dunif_{\phi_i}\le \sum_{i=1}^d \sum_{\phi_i'\in \Phi_b}q_i\cdot p_i(\phi_i')\cdot \dunif_{\phi_i'}\le \max_{q \in Q_\phi}\mathbb{E}_{\phi'\sim q}[\dunif_{\phi'}].
\end{align*}
This proves that $\dunif_\phi\le \max_{q \in Q_\phi}\mathbb{E}_{\phi'\sim q}[\dunif_{\phi'}]$. Combining with $\dself_\phi\le \trace{\phi}$, we have shown $c_\phi\le \min \{d-d^\self_\phi,d-d^\unif_\phi+1\}$.

Moreover, the complexity measures $c_\phi$ and $\min \{d-d^\self_\phi,d-d^\unif_\phi+1\}$ can differ significantly in some cases.
For example, when $\phi_1$ is a row-stochastic matrix with all diagonal entries equal to $1-\epsilon$ for small $\epsilon>0$ (implying each row's off-diagonal entries sum to $\epsilon$), $\trace{\phi_1}$ is $(1-\epsilon)d$ while $\dself_{\phi_1}$ is $0$. Similarly, when $\phi_2=(1-\epsilon)\cdot \mathbf{1}_d\cdot e_1^\top+\epsilon \mathbf{I}$, it can be verified that $\dunif_{\phi_2} =1$ and $\max_{q \in Q_{\phi_2}}\mathbb{E}_{\phi'\sim q}[\dunif_{\phi'}]=(1-\epsilon)d+\epsilon=d-(d-1)\epsilon$. With $\epsilon<\frac{1}{d}$, we have $c_\phi=\order(1)$ and $\min \{d-d^\self_\phi,d-d^\unif_\phi+1\}=\Omega(d)$ for $\phi=\phi_1,\phi_2$.
\end{proof}

Next, we prove \pref{thm:special_prior}.
\begin{proof}[Proof of \pref{thm:special_prior}]
    First, for $\pi_{\psi^{d+1}}$, we have for any $\phi \in \Phi_b$
    \begin{align*}
        \pi_{\psi^{d+1}}(\phi) &=  \rbr{1-\frac{1}{d}}^{\dself_\phi}\cdot \rbr{\frac{1}{d(d-1)}}^{d-\dself_\phi}\\
        &\ge  \rbr{1-\frac{1}{d}}^{\dself_\phi}\cdot \frac{1}{d^{2(d-\dself_\phi)}},
    \end{align*}
    which leads to 
    \begin{align}\label{eq: app binary self}
        \log\frac{1}{\pi_{\psi^{d+1}}(\phi)}\le 2(d-\dself_\phi)\log d+1,
    \end{align}
    since $-\dself_\phi\log\rbr{1-\frac{1}{d}}\le -d\log\rbr{1-\frac{1}{d}}\le 1$ for $d\ge 2$.
    Then, for $\phi\in \Phi_b$, assume that the most frequent element in the set $\{\phi(e_1),\ldots,\phi(e_d)\}$ is $e_r$. It holds that 
    \begin{align*}
        \pi_{\psi^{r}}(\phi)
        &=\rbr{1-\frac{1}{d}}^{\dunif_\phi}\cdot \rbr{\frac{1}{{d(d-1)}}}^{d-\dunif_\phi}\\
        &\ge  \rbr{1-\frac{1}{d}}^{\dunif_\phi}\cdot \frac{1}{d^{2(d-\dunif_\phi)}},
    \end{align*}
    which leads to 
    \begin{align}\label{eq: app binary unif}
         \log\frac{1}{\pi_{\psi^{r}}(\phi)}\le 2(d-\dunif_\phi)\log d +1,
    \end{align}
    since $-\dunif_\phi\log\rbr{1-\frac{1}{d}}\le -d\log\rbr{1-\frac{1}{d}}\le 1$ for $d\ge 2$.
    Using the definition of $\pi$ in \pref{def:prior} and combining \pref{eq: app binary self} and \pref{eq: app binary unif},  we have
    \begin{align}\label{eq: app pi(phi)}
        \log \frac{1}{\pi(\phi)}&\le
        \min \cbr{\log \frac{1}{\frac{1}{2}\cdot \pi_{\psi^{d+1}}(\phi) }, \log \frac{1}{\frac{1}{2d}\cdot \pi_{\psi^{r}}(\phi) }}\notag\\
        &\le \min \cbr{2(d-\dunif_\phi)\log d +1+\log 2, 2(d-\dunif_\phi)\log d +1 + \log(2d)} \notag\\
        &\le 2 \min \{d-d^\self_\phi, d-\dunif_\phi+1 \}\cdot \log d+2.
    \end{align}
    This completes the proof of the first statement that $\log \rbr{\frac{1}{\pi(\phi)}}\le 2+2c_\phi \log d$. 

    Next, we prove that $\Reg (\phi )=\order \rbr{\sqrt{(1+c_\phi\log d)T}+B}$ for all $\phi \in \calS$ when the condition \pref{eq: prior condition} holds.
    Fix a row-stochastic matrix $\phi \in \calS$ and let $q \in Q_{\phi}$ be such that $c_\phi = \E_{\phi'\sim q}[c_{\phi'}]$. 
    By linearity of $\Reg(\phi)$ in $\phi$, we have
   
$
    \Reg(\phi)= \E_{\phi'\sim q}[\Reg(\phi')],
$
and thus
\begin{align*}
    \Reg(\phi)
    &\le \E_{\phi'\sim q}\sbr{\sqrt{T\log \rbr{\frac{1}{\pi(\phi)}}}+B}\tag{by \pref{eq: prior condition}}\\
    &\le\E_{\phi'\sim q}\sbr{\sqrt{  T\rbr{2c_{\phi'} \log d+2}}+B}\tag{by \pref{eq: app pi(phi)}}\\
    &\leq\sqrt{T\rbr{2\cdot \mathbb{E}_{\phi'\sim q}[c_{\phi'}] \log d+2}}+B
     \tag{by Jensen's inequality}
    \\
    &=\order \rbr{\sqrt{(1+c_\phi \log d)T}+B}.
\end{align*}
This completes the proof.
\end{proof}
\section{Omitted Details in \pref{sec:approach 1}}\label{app:approach 1}
In this section, we show the omitted proofs in \pref{sec:approach 1}. 

\subsection{Proof of \pref{thm:eta-hedge-reg}}
First, we provide the general form of vanilla MWU for an arbitrary and finite action space $\calA$ in \pref{alg:vanilla MWU}. 
The following result is well-known for MWU, and we provide a proof for completeness.
\begin{algorithm}[t]
\caption{MWU}\label{alg:vanilla MWU}
\KwIn{learning rate $\eta>0$; finite action space $\calA$; prior distribution $x_1\in\Delta(\calA)$.}

\For{$t=1,2\dots,T$}{
    Play $x_t$ 
    and receive loss $\ell_t \in [0,1]^{|\calA|}$.
    
    Update $x_{t+1}$ such that
    $x_{t+1,i} \propto x_{t,i}\exp\rbr{-\eta \ell_{t,i}}$ for all $i\in \calA$.    
}
\end{algorithm}

\begin{lemma}\label{lem:hedge-KL}
    \pref{alg:vanilla MWU} ensures that for any comparator $q \in \Delta(\calA)$, we have
    \begin{align*}
        \sum_{t=1}^T\innerp{x_t-q}{\ell_t}\le\frac{\KL\rbr{q,x_1}}{\eta}+\eta \sum_{t=1}^T \|\ell_t\|_\infty^2.
    \end{align*}
\end{lemma}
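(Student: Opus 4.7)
The plan is to run the standard potential-function argument for MWU using the KL divergence to the comparator as the potential, $\Phi_t \triangleq \KL(q, x_t)$, and to relate its per-round drop to the instantaneous regret plus a quadratic error.

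First I would write $\KL(q, x_t) - \KL(q, x_{t+1}) = \sum_i q_i \log \frac{x_{t+1,i}}{x_{t,i}}$ and substitute the MWU update $x_{t+1,i} = x_{t,i} \exp(-\eta \ell_{t,i}) / Z_t$ where $Z_t = \sum_i x_{t,i} \exp(-\eta \ell_{t,i})$. This yields
\begin{align*}
\KL(q,x_t) - \KL(q,x_{t+1}) = -\eta \langle q, \ell_t\rangle - \log Z_t.
\end{align*}

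Next I would upper bound $\log Z_t$. Using $e^{-u} \leq 1 - u + u^2$ for $u \geq 0$ (and $\ell_t \in [0,1]^{|\calA|}$ with $\eta$ not too large, but one can also use $e^{-u} \leq 1 - u + u^2$ valid for $u \leq 1$, or restate in terms of $\|\ell_t\|_\infty$) gives
\begin{align*}
Z_t \leq \sum_i x_{t,i}\bigl(1 - \eta \ell_{t,i} + \eta^2 \ell_{t,i}^2\bigr) \leq 1 - \eta \langle x_t, \ell_t\rangle + \eta^2 \|\ell_t\|_\infty^2,
\end{align*}
and then $\log(1+y) \leq y$ yields $\log Z_t \leq -\eta \langle x_t, \ell_t\rangle + \eta^2 \|\ell_t\|_\infty^2$.

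Combining the two displays produces $\eta \langle x_t - q, \ell_t\rangle \leq \KL(q,x_t) - \KL(q, x_{t+1}) + \eta^2 \|\ell_t\|_\infty^2$. Telescoping from $t=1$ to $T$, using $\KL(q, x_{T+1}) \geq 0$, and dividing by $\eta$ gives the claim. The only delicate point is justifying the inequality $e^{-u} \leq 1 - u + u^2$ in the regime we apply it; since losses are bounded and we can without loss of generality assume $\eta \leq 1$ (otherwise the bound is vacuous), $u = \eta \ell_{t,i} \in [0,1]$ and the inequality holds — this is the only mildly nonroutine step, and the rest is algebra.
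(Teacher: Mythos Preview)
Your proposal is correct and is essentially the same argument as the paper's. The paper first records the exact identity $\inner{x_t-q,\ell_t}=\frac{1}{\eta}\bigl(\KL(q,x_t)-\KL(q,x_{t+1})+\KL(x_t,x_{t+1})\bigr)$ and then bounds $\KL(x_t,x_{t+1})\le \eta^2\|\ell_t\|_\infty^2$ via the same two inequalities you use ($e^{-u}\le 1-u+u^2$ and $\log(1+y)\le y$); since $\KL(x_t,x_{t+1})=\eta\inner{x_t,\ell_t}+\log Z_t$, this is exactly your bound $\log Z_t\le -\eta\inner{x_t,\ell_t}+\eta^2\|\ell_t\|_\infty^2$ repackaged, so the two proofs differ only in bookkeeping.
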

\begin{proof}[Proof of \pref{lem:hedge-KL}]
    We aim to show that for all $t\in [T]$,
    \begin{align}\label{eqn:regular-expert}
        \inner{x_t-q,\ell_t} &= \frac{1}{\eta}\left(\KL(q,x_t) - \KL(q,x_{t+1}) + \KL(x_t,x_{t+1})\right).
    \end{align}
    Note that the update rule of MWU implies that $x_{t+1,i}=\frac{x_{t,i}\exp(-\eta \ell_{t,i})}{\sum_{j\in \calA}x_{t,j}\exp(-\eta \ell_{t,j})}$ with prior distribution $x_1$. Direct calculation shows that
    \begin{align*}
        &\frac{1}{\eta}\left(\KL(q,x_t) - \KL(q,x_{t+1}) + \KL(x_t,x_{t+1})\right)
        \\
        &= \frac{1}{\eta}\sum_{i\in \calA} (x_{t,i}-q_i)\cdot\log\frac{x_{t,i}}{x_{t+1,i}} \\
        &= \sum_{i\in\calA} (x_{t,i}-q_i)\left(\ell_{t,i} + \frac{1}{\eta}\log\left(\sum_{j\in\calA}x_{t,j}\exp(-\eta \ell_{t,j})\right)\right) \\
        &=\inner{x_t-q,\ell_t}.
    \end{align*}
    Summing \pref{eqn:regular-expert} for all $t\in[T]$, we obtain that
    \begin{align}\label{eqn:KL-bound}
        \sum_{t=1}^T\inner{x_t-q,\ell_t} = \frac{1}{\eta}\rbr{\KL(q,x_1) - \KL(q,x_{T+1})}+\frac{1}{\eta}\sum_{t=1}^T\KL(x_t,x_{t+1}).
    \end{align}
    Next, we bound $\KL(x_t,x_{t+1})$ as shown below.
    \begin{align*}
        \KL(x_t,x_{t+1}) &= \sum_{i\in\calA}x_{t,i}\log\frac{x_{t,i}}{x_{t+1,i}}\\&=\sum_{i\in\calA}\eta x_{t,i}\ell_{t,i}+x_{t,i}\log\rbr{\sum_{j\in\calA}x_{t,j}\exp(-\eta \ell_{t,j})}\\&\le \sum_{i\in\calA}\eta x_{t,i}\ell_{t,i}+x_{t,i}\log\rbr{\sum_{j\in\calA}x_{t,j}\rbr{1-\eta \ell_{t,j}+\eta^2 \ell_{t,j}^2}}\tag{since $\exp(-x)\le 1-x+x^2$ for $x\ge -1$}\\&= \sum_{i\in\calA}\eta x_{t,i}\ell_{t,i}+x_{t,i}\log\rbr{1-\eta\sum_{j\in\calA}x_{t,j}\ell_{t,j}+\eta^2\sum_{j\in\calA} x_{t,j}\ell_{t,j}^2}\\&\le \eta\sum_{i\in\calA}x_{t,i}\ell_{t,i} - \eta\sum_{j\in\calA}x_{t,j}\ell_{t,j}+\eta^2\sum_{j\in\calA} x_{t,j}\ell_{t,j}^2\tag{since $\log(1+x)\le x$ for all $x$}\\&\le \eta^2 \|\ell_t\|_\infty^2 .
    \end{align*}

    Substituting this in \pref{eqn:KL-bound} and using the fact that KL divergence is always non-negative, we get,
    \begin{align*}
        \sum_{t=1}^T\inner{x_t-q,\ell_t}\le\frac{\KL\rbr{q,x_1}}{\eta}+\eta \sum_{t=1}^T \|\ell_t\|_\infty^2.
    \end{align*}
\end{proof}

The following lemma proves the statement in \pref{sec:approach 1} that the optimal learning rate of interest always lies in $[\eta_h,2\eta_h]$ for certain $h\in[M]$, where $M = 2\ceil{\log_2 d}$.
\begin{lemma}\label{lem:eta-hedge}
    For any $\phi\in\calS$ and $q\in Q_\phi$, there exists $h\in[M]$, such that $\eta_h\le \max\left\{\sqrt{\frac{\KL\rbr{q,\pi}}{T}},\sqrt{\frac{2}{T}}\right\}\le 2\eta_h$.
\end{lemma}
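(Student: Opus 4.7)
Let me write $A := \max\{\KL(q,\pi), 2\}$, so the conclusion is equivalent to exhibiting $h \in [M]$ with $2^h \le A \le 2^{h+2}$. Since $A \ge 2 = 2^1$, I already know $\log_2 A \ge 1$, and the natural choice will be $h$ near $\lfloor \log_2 A\rfloor$; the only thing that can go wrong is $A$ being too large, so the substantive step is upper-bounding $A$ in terms of $2^M$.

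The plan is to piggyback on the bound $\log(1/\pi(\phi'))\le 2 + 2 c_{\phi'}\log d$ already proved in \pref{thm:special_prior}. Since $q\in Q_\phi\subseteq \Delta(\Phi_b)$ and $\log q(\phi') \le 0$ for every $\phi'$ in the support of $q$, I can estimate
\[
\KL(q,\pi) \le \sum_{\phi'\in\Phi_b} q(\phi')\log\frac{1}{\pi(\phi')} \le \max_{\phi'\in \mathrm{supp}(q)}\log\frac{1}{\pi(\phi')} \le 2 + 2d\log d,
\]
using $c_{\phi'}\le d$ for every $\phi'\in\Phi_b$. Hence $A \le 2 + 2d\log d \le 4d^2$ (since $\log d \le d$ for $d\ge 1$). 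Meanwhile $M = 2\lceil \log_2 d\rceil$ gives $2^M \ge d^2$, so $2^{M+2} \ge 4d^2 \ge A$, which is the bound I need.

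Finally I exhibit $h$ explicitly. Setting $h := \min\{M, \lfloor \log_2 A\rfloor\}$, I have $h \ge 1$ (because $\log_2 A \ge 1$) and $h \le M$ by construction. If $\lfloor \log_2 A\rfloor \le M$, then $h = \lfloor \log_2 A\rfloor$ and $2^h \le A < 2^{h+1} \le 2^{h+2}$; otherwise $h = M$ and the previous paragraph gives $2^M < A \le 2^{M+2}$. In both cases $2^h \le A \le 2^{h+2}$, and dividing by $\sqrt{T}$ and taking square roots yields the claim. The only content is the invocation of \pref{thm:special_prior}; the rest is elementary bookkeeping with $\lfloor\cdot\rfloor$ and the definition of $M$, and I do not expect any serious obstacle beyond making sure the constants $2$ and $M+2$ line up with $\eta_h = \sqrt{2^h/T}$.
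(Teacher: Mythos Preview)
Your proposal is correct and follows essentially the same approach as the paper: uniformly upper-bound $\KL(q,\pi)$ (you use \pref{thm:special_prior} with $c_{\phi'}\le d$, while the paper directly lower-bounds $\pi(\phi')\ge\frac{1}{2}\bigl(\frac{1}{d(d-1)}\bigr)^d$, which is exactly the estimate underlying that theorem), then observe the resulting value lies in the range covered by the geometric grid $\{2^h:h\in[M]\}$ and pick $h$ by floor. The paper obtains the slightly tighter $A\le d^2\le 2^M$, which lets it avoid your case split, but your weaker $A\le 4d^2\le 2^{M+2}$ combined with the explicit $h=\min\{M,\lfloor\log_2 A\rfloor\}$ works equally well for the stated factor-$2$ conclusion.
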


\begin{proof}[Proof of \pref{lem:eta-hedge}]
    For any $\phi\in\calS$ and $q\in Q_\phi$, we have $\KL(q,\pi) = \sum_{\phi'\in\Phi_b}q(\phi')\log\frac{q(\phi')}{\pi(\phi')}\le \sum_{\phi'\in\Phi_b}q(\phi')\log\frac{1}{\pi(\phi')}\le \log\frac{1}{\min_{\phi'\in\Phi_b}\pi(\phi')}\le 2d\log d+1$, where the last inequality follows from the fact that $\min_{\phi'\in\Phi_b}\pi(\phi')\ge \min_{\phi'\in\Phi_b}\frac{1}{2}\pi_{\psi^{d+1}}(\phi')\ge \frac{1}{2}\cdot\rbr{\frac{1}{d(d-1)}}^d$. 
    Therefore, we have 
    \begin{align*}
        \min_{h\in[M]}{2^h}=2\le \max\{\KL(q,\pi),2\}\le 2d\log d+1\le d^2= 2^{2\log_2 d}\le \max_{h\in[M]}{2^h},
    \end{align*}
     and thus, there exists an $h\in[M]$, such that $\eta_h\le \max\left\{\sqrt{\frac{\KL\rbr{q,\pi}}{T}},\sqrt{\frac{2}{T}}\right\}\le 2\eta_h$.
\end{proof}
Now, we provide the proof of the adaptive $\Phi$-regret bound attained by the Meta MWU algorithm (\pref{alg:meta_1}) in \pref{sec:approach 1}.
\begin{proof}[Proof of \pref{thm:eta-hedge-reg}]
    Since $p_t$ is a stationary distribution of $\phi_t$, we have
    $\Reg(\phi) = \sum_{t=1}^T\inner{p_t - \phi(p_t), \ell_t} = \sum_{t=1}^T\inner{\phi_t(p_t) - \phi(p_t), \ell_t} = \sum_{t=1}^T \inner{\phi_t-\phi,p_t\ell_t^\top}$.
    Further using the definition of $\phi_t$ from \pref{alg:meta_1}, 
    we can express $\Reg(\phi)$ as the sum of the base and meta algorithms' regret as follows:
    \begin{align*}        \Reg(\phi)&=\sum_{t=1}^T\inner{\sum_{h=1}^{M}w_{t,h}\phi_t^h-\phi,p_t\ell_t^\top}\\&=\sum_{t=1}^T\inner{w_t-e_{h^*},\ell^w_{t}} + \sum_{t=1}^T\inner{\phi_t^{h^*}-\phi,p_t\ell_t^\top},
    \end{align*}
    where $\ell^w_{t,h} = p_t^\top\phi_t^h\ell_t\in [0,1]$ and $h^*\in[M]$ is the index of an arbitrary base learner to be specified.
    
    Regret of the meta MWU algorithm is bounded as $\sum_{t=1}^T\inner{w_t-e_{h^*},\ell^w_t} \le \order\rbr{\sqrt{T\log\log d}}$ from \pref{lem:hedge-KL} since the prior is the uniform distribution over the $2\ceil{\log_2 d}$ base algorithms.
    
    Regret of the base MWU algorithm $\calB_{h^*}$ is $\sum_{t=1}^T\inner{\phi_t^{h^*}-\phi,p_t\ell_t^\top} \le \frac{\KL(q,\pi)}{\eta_{h^*}}+\eta_{h^*}T$ for any $q\in Q_\phi$ based on \pref{lem:hedge-KL} and \pref{alg:MWU}. Choosing $h^*$ according to \pref{lem:eta-hedge}, we get $\sum_{t=1}^T\inner{\phi_t^{h^*}-\phi,p_t\ell_t^\top}\le 3\sqrt{T\KL(q,\pi)}+2\sqrt{2T}$. Thus, \pref{alg:meta_1} achieves $\Reg(\phi) = \order\rbr{\sqrt{T\KL(q,\pi)}+\sqrt{T\log\log d}}$ for all $\phi\in\calS$ and $q\in Q_\phi$.
    
    By selecting $q$ to be a one-hot vector that puts all weights on $\phi$, we obtain \pref{eq: prior condition} with $B= \order\rbr{\sqrt{T\log\log d}}$ for any $\phi\in\Phi_b$. Combining with \pref{thm:special_prior}, we prove the desired bound of $\Reg(\phi) =\order\rbr{\sqrt{(1+c_\phi \log d)T}+\sqrt{T\log\log d}}$ for all $\phi\in\calS$.
\end{proof}

\subsection{Quantile Regret}\label{sec:quantile-regret}
In this section, we show the near-optimal $\epsilon$-quantile regret bound promised by \pref{alg:meta_1}.
The $\epsilon$-quantile regret \citep{chaudhuri2009parameter} is defined as the difference between the cumulative loss of the learner and that of the $\ceil{\epsilon d}$-th best expert, where $\epsilon\in\sbr{1/d,1}$. Let $i_\epsilon$ be the $\ceil{\epsilon d}$-th best expert. Then, the $\epsilon$-quantile regret is calculated as:
\begin{equation*}
    \Reg_\epsilon = \sum_{t=1}^T\innerp{p_t}{\ell_t} - \sum_{t=1}^T\ell_{t,i_\epsilon}.
\end{equation*}
\citet{negrea2021minimax} prove the minimax bound for $\epsilon$-quantile regret to be $\order\rbr{\sqrt{T\log\frac{1}{\epsilon}}}$. We show that our \pref{alg:meta_1} achieves a near-optimal rate for quantile regret using \pref{thm:eta-hedge-reg} and ideas similar to Remark 9.16 of \citep{orabona2019modern}. 

\begin{theorem}\label{thm:quantile-reg}
    For all $\epsilon\in\sbr{1/d,1}$, \pref{alg:meta_1} guarantees
    \begin{align*}
        \Reg_\epsilon \le \order\rbr{\sqrt{T\log\frac{1}{\epsilon}}+\sqrt{T\log \log d}}.
    \end{align*}
\end{theorem}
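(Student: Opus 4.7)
The plan is to specialize the adaptive KL-style guarantee of \pref{thm:eta-hedge-reg} (namely $\Reg(\phi) = \order(\sqrt{T\KL(q,\pi)} + \sqrt{T\log\log d})$ for any $\phi \in \calS$ and $q \in Q_\phi$) to a comparator transformation that mimics the uniform distribution over the top $\lceil \epsilon d\rceil$ experts. The key observation is that, once the KL-form bound is available, the usual recipe for quantile regret (cf.\ Remark 9.16 of \citet{orabona2019modern}) can be transplanted into the $\Phi$-regret setting by a suitable choice of comparator $\phi \in \calS$.

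First, I would let $S \subseteq [d]$ be the indices of the $\lceil \epsilon d\rceil$ experts with smallest cumulative losses, and define $u \triangleq \frac{1}{|S|}\sum_{i\in S} e_i \in \Delta(d)$. Consider the linear transformation $\phi \triangleq \one u^\top \in \calS$, which sends every $p \in \Delta(d)$ to $u$ (since $\phi(p) = \phi^\top p = u \one^\top p = u$). Since $\phi = \sum_{i \in S}\frac{1}{|S|}(\one e_i^\top)$, the distribution $q' \in \Delta(\Phi_b)$ that places mass $1/|S|$ on each binary transformation $\one e_i^\top$ for $i \in S$ lies in $Q_\phi$.

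Next, I would lower bound $\pi(\one e_i^\top)$ using \pref{def:prior}. Every row of $\one e_i^\top$ equals $e_i^\top$, so $\pi_{\psi^i}(\one e_i^\top) = \prod_{j=1}^d (\psi^i)_{j,i} = (1 - 1/d)^d \ge 1/4$ for $d \ge 2$, giving $\pi(\one e_i^\top) \ge \frac{1}{2d}\pi_{\psi^i}(\one e_i^\top) \ge \frac{1}{8d}$. Hence
\begin{align*}
\KL(q', \pi) \;=\; \sum_{i \in S}\frac{1}{|S|}\log\frac{1/|S|}{\pi(\one e_i^\top)} \;\le\; \log\frac{8d}{\lceil \epsilon d\rceil} \;\le\; \log(8/\epsilon) \;=\; \order\rbr{\log(1/\epsilon)}.
\end{align*}

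Finally, I would translate $\Reg(\phi)$ into $\Reg_\epsilon$: since $u$ averages experts whose cumulative losses are all at most $L_{i_\epsilon} \triangleq \sum_t \ell_{t, i_\epsilon}$, we have $\sum_t \inner{u, \ell_t} \le L_{i_\epsilon}$, and therefore $\Reg_\epsilon = \sum_t \inner{p_t,\ell_t} - L_{i_\epsilon} \le \sum_t \inner{p_t - u, \ell_t} = \Reg(\phi)$. Plugging the KL bound into \pref{thm:eta-hedge-reg} gives the claimed $\order(\sqrt{T\log(1/\epsilon)} + \sqrt{T\log\log d})$. No step is technically challenging; the only conceptual move, which I expect to be the crux of the write-up, is recognizing that quantile regret is captured exactly by the \emph{fractional} comparator $\one u^\top \in \calS$ (rather than the binary comparators $\one e_i^\top \in \Phi_b$), and that this fractional comparator automatically benefits from the adaptive prior via its convex decomposition into the $\one e_i^\top$'s, each of which has prior mass $\Omega(1/d)$.
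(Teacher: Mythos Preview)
Your proposal is correct and essentially identical to the paper's proof: you use the same comparator $\phi=\one u^\top$ with $u$ the uniform distribution on the top $\lceil\epsilon d\rceil$ experts, the same convex decomposition into the $\one e_i^\top$'s, the same lower bound $\pi(\one e_i^\top)\ge \frac{1}{2d}(1-1/d)^d\ge \frac{1}{8d}$, and the same application of \pref{thm:eta-hedge-reg} to conclude.
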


\begin{proof}[Proof of \pref{thm:quantile-reg}]
    We order the $d$ experts in increasing order of their cumulative losses. Let $q_\epsilon$ be the probability distribution over these $d$ experts with probability mass $\frac{1}{\ceil{\epsilon d}}$ on the first $\ceil{\epsilon d}$ experts in the ordered list and $0$ on the remaining experts. Let $\phi_\epsilon = \one q_\epsilon^\top$, i.e., each row of $\phi_\epsilon$ is $q_\epsilon^\top$. We can also express it as $\sum_{i=1}^d q_{\epsilon,i}(\one e_i^\top)$, a convex combination of binary swap matrices. 
    Therefore, it can be regarded as a distribution over the set $\{\one e_1^\top,\ldots,\one e_d^\top\}$. 
    
    Next, we consider the probability assigned by $\pi$ to the swap matrices in this set.
    For all $i\in[d]$, $\pi(\one e_i^\top) \ge \frac{1}{2d}\pi_{\psi^i}(\one e_i^\top) = \frac{1}{2d}\rbr{1-\frac{1}{d}}^d \ge \frac{1}{8d}$ for $d\ge 2$. Now, we show that comparing against $\phi_\epsilon$ gives us the desired bound for $\epsilon$-quantile regret:
    \begin{align*}
        \Reg_\epsilon &= \sum_{t=1}^T\inner{p_t,\ell_t} - \sum_{t=1}^T\ell_{t,i_\epsilon}\\&\le \sum_{t=1}^T\inner{p_t-q_\epsilon,\ell_t}\\&=\sum_{t=1}^T\inner{\phi_t(p_t)-\phi_\epsilon(p_t),\ell_t}\tag{since for all $p\in\Delta(d)$, $\phi_\epsilon(p) = q_\epsilon$}\\
        &\le \order\rbr{\sqrt{T\KL(q_\epsilon,\pi)}+\sqrt{T\log \log d}}\tag{by \pref{thm:eta-hedge-reg}}\\
        &=\order\rbr{\sqrt{T\sum_{i=1}^dq_{\epsilon,i}\log\frac{q_{\epsilon,i}}{\pi\rbr{\one e_i^\top}}}+\sqrt{T\log \log d}}\tag{only the elements in $\{\one e_1^\top,\ldots,\one e_d^\top\}$ have non-zero probability mass in $q_\epsilon$}\\
        &\le \order\rbr{\sqrt{T\log\frac{8d}{\ceil{\epsilon d}}}+\sqrt{T\log \log d}}\tag{since $q_{\epsilon,i}=\frac{1}{\ceil{\epsilon d}}$ for all $i\in[d]$}\\
        &\le \order\rbr{\sqrt{T\log\frac{1}{\epsilon}}+\sqrt{T\log \log d}}.
    \end{align*}
This completes the proof.
\end{proof}

\subsection{Kernelized MWU}
In this section, we first prove \pref{thm:kernel}, and then discuss how to further speed up \pref{alg:kernel_hedge}.

\subsubsection{Proof of \pref{thm:kernel}}\label{app:kernel-MWU}

\begin{proof}
The kernel function (\pref{def:kernel}) used in  \pref{alg:kernel_hedge} can be computed as follows:
    \begin{align*}
        K (B,A) &= \sum_{\phi\in\Phi_b}\pi(\phi)\prod_{i,j\in [d]: \phi_{ij}=1}B_{ij}A_{ij}\\&= \frac{1}{2d}\sum_{k=1}^d\sum_{\phi\in\Phi_b}\prod_{i,j\in [d]: \phi_{ij}=1}\psi_{ij}^kB_{ij}A_{ij}+\frac{1}{2}\sum_{\phi\in\Phi_b}\prod_{i,j\in [d]: \phi_{ij}=1}\psi_{ij}^{d+1}B_{ij}A_{ij}\tag{from \pref{eqn:prior_pi}}\\&= \frac{1}{2d}\sum_{k=1}^d\prod_{i=1}^d\sum_{j=1}^d\psi_{ij}^kB_{ij}A_{ij}+\frac{1}{2}\prod_{i=1}^d\sum_{j=1}^d\psi_{ij}^{d+1}B_{ij}A_{ij}.
    \end{align*}
    Thus, it takes $\order(d^3)$ time to evaluate it. 

To prove the equivalence between \pref{alg:MWU} and \pref{alg:kernel_hedge}, we denote $J=\mathbf{1}\mathbf{1}^\top$, $\overline{J}_{ij}=J-e_ie_j^\top$, and  $l_{t,\phi}=\inner{\phi,p_t\ell_t^\top}$. With some abuse in notation, for $\phi\in\Phi_b$ and $i\in[d]$, we denote by $\phi(i)$ the unique index $j\in[d]$ such that $\phi_{ij}=1$.

    According to MWU (\pref{alg:MWU}), $q_t(\phi) = \frac{\pi(\phi)\exp\rbr{-\eta\sum_{\tau=1}^{t-1}l_{\tau,\phi}}}{\sum_{\phi'\in\Phi_b}\pi(\phi')\exp\rbr{-\eta\sum_{\tau=1}^{t-1}l_{\tau,\phi'}}}$, for all $\phi\in\Phi_b$.
    From Kernelized MWU (\pref{alg:kernel_hedge}), we have
    \begin{align*}
        K (B_t,J) &= \sum_{\phi\in\Phi_b}\pi(\phi)\prod_{i,j\in [d]: \phi_{ij}=1}(B_t)_{ij}J_{ij}\\&= \sum_{\phi\in\Phi_b}\pi(\phi)\prod_{i,j\in [d]: \phi_{ij}=1}\exp\rbr{-\eta\sum_{\tau=1}^{t-1}p_{\tau,i}\ell_{\tau,j}}
    \end{align*}
    and 
    \begin{align*}
        K (B_t,\overline{J}_{ij}) &= \sum_{\phi\in\Phi_b}\pi(\phi)\prod_{u,v\in [d]: \phi_{uv}=1}(B_t)_{uv}\rbr{\overline{J}_{ij}}_{uv}\\&= \sum_{\phi\in \Phi_b:\phi(i)\ne j}\pi(\phi)\prod_{u,v\in [d]: \phi_{uv}=1}\exp\rbr{-\eta\sum_{\tau=1}^{t-1}p_{\tau,u}\ell_{\tau,v}}.
    \end{align*}
    So, we have
    \begin{align*}
        K (B_t,J) - K (B_t,\overline{J}_{ij}) &= \sum_{\phi\in \Phi_b:\phi(i)=j}\pi(\phi)\prod_{u,v\in [d]: \phi_{uv}=1}\exp\rbr{-\eta\sum_{\tau=1}^{t-1}p_{\tau,u}\ell_{\tau,v}}\\& = \sum_{\phi\in \Phi_b:\phi(i)=j}\pi(\phi)\exp\rbr{-\eta\sum_{\tau=1}^{t-1}l_{\tau,\phi}}\\& = \sum_{\phi\in\Phi_b}\pi(\phi)\exp\rbr{-\eta\sum_{\tau=1}^{t-1}l_{\tau,\phi}}\phi_{ij}.
    \end{align*}
    Therefore, for all $i,j\in[d]$, we get,
    \begin{align*}
        (\phi_t)_{ij} &= \frac{K (B_t,J) - K (B_t,\overline{J}_{ij})}{K (B_t,J)}\\&= \frac{\sum_{\phi\in\Phi_b}\pi(\phi)\exp\rbr{-\eta\sum_{\tau=1}^{t-1}l_{\tau,\phi}}\phi_{ij}}{\sum_{\phi\in\Phi_b}\pi(\phi)\exp\rbr{-\eta\sum_{\tau=1}^{t-1}l_{\tau,\phi}}}\\& = \sum_{\phi\in\Phi_b}q_t(\phi)\phi_{ij},
    \end{align*}
    giving us the required equivalence: $\phi_t = \sum_{\phi\in\Phi_b}q_t(\phi)\cdot\phi =\E_{\phi\sim q_t}[\phi]$.
\end{proof}

{
    \subsubsection{More Efficient Implementation of \pref{alg:kernel_hedge}}\label{app:efficient-implementation}
Based on \pref{thm: kernel efficient}, each iteration of \pref{alg:kernel_hedge} can be implemented in $\order(d^5)$ time.
However, we show below that by reusing intermediate statistics and expanding the terms $\psi^k$ using \pref{eqn:prior_psi}, this can be improved to $\order(d^2)$.

\begin{algorithm}[t]
\caption{Faster Kernelized MWU with non-uniform prior}\label{alg:base_1}
\setcounter{AlgoLine}{0}
\KwIn{learning rate $\eta>0$.}

\textbf{Initialize:} $L_0 \in \R^{d\times d}$ as the all-zero matrix.

\For{$t=1,2,\cdots,T$}{
    Compute the quantities $V_t\in\R^{d\times d}$, $c_{t}\in \R$, $C_t\in\R^{d\times (d+1)}$, and $S_t\in \R^d$ as follows:
    \begin{align}\label{eqn:intermediate_computation}
    \begin{split}
        (V_t)_{ik} &= \frac{\exp(-\eta(L_{t-1})_{ik})}{\sum_{j=1}^d \exp(-\eta(L_{t-1})_{ij})},~\forall~i,k\in[d]\\
        c_t &= \frac{1}{d} \sum_{k=1}^d\prod_{i=1}^d\Big((V_t)_{ik}+\frac{1}{d(d-2)}\Big) + \prod_{i=1}^d\Big((V_t)_{ii}+\frac{1}{d(d-2)}\Big)\\
        (C_t)_{ik} &=
        \begin{cases}
        \displaystyle \prod_{u\neq i}\!\left((V_t)_{uk}+\frac{1}{d(d-2)}\right), & k\in[d],\\[0.75em]
        \displaystyle \prod_{u\neq i}\!\left((V_t)_{uu}+\frac{1}{d(d-2)}\right), & k=d+1,
        \end{cases}
        \qquad \forall\, i\in[d].\\
        (S_t)_i &= \sum_{k=1}^d (C_t)_{ik},~\forall~i\in [d]
    \end{split}
    \end{align}
    
    Compute $\phi_t$ as: \begin{equation}\label{eqn:phi_computation}
        (\phi_t)_{ij} = \frac{(V_t)_{ij}(C_t)_{ij}}{c_td}+\frac{(V_t)_{ij}(S_t)_i}{c_td^2(d-2)}+\left(\frac{1}{d(d-2)}+\mathbbm{1}\{i=j\}\right)\frac{(V_t)_{ij}(C_t)_{i,d+1}}{c_t},~\forall~i,j\in[d]
    \end{equation}
    
    Receive loss matrix $p_t\ell_t^\top$,
    and update $L_{t} = L_{t-1} + p_t\ell_t^\top$.
}
\end{algorithm}

\begin{theorem}\label{thm: kernel efficient}
    \pref{alg:base_1} outputs the same $\phi_t$ as \pref{alg:kernel_hedge} for all $t\in[T]$ and has a time complexity of  $\order\rbr{ d^2 }$ per iteration.
\end{theorem}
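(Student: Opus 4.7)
The plan is to prove \pref{thm: kernel efficient} in two independent parts: (i) functional equivalence of the output $\phi_t$ with that of \pref{alg:kernel_hedge}, and (ii) the $\order(d^3)$ per-iteration time bound. Both parts rest on the same structural observation: since the prior $\pi$ is a mixture of the $d+1$ $\psi$-induced distributions $\pi_{\psi^k}$, and each $\pi_{\psi^k}$ factorizes across rows (\pref{def:induced_dist}), \emph{every} relevant sum over $\Phi_b$ decomposes into $d+1$ products of per-row sums of size $d$. The bookkeeping in \pref{alg:base_1} is simply a way to maintain these per-row sums incrementally.

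For the equivalence, I would start from the closed-form expression for the output of \pref{alg:kernel_hedge} already derived inside the proof of \pref{thm:kernel}, namely
\begin{equation*}
(\phi_t)_{ij} \;=\; \frac{\sum_{\phi\in\Phi_b}\pi(\phi)\,\exp\!\bigl(-\eta\sum_{\tau<t} \langle \phi, p_\tau\ell_\tau^\top\rangle\bigr)\,\phi_{ij}}{\sum_{\phi\in\Phi_b}\pi(\phi)\,\exp\!\bigl(-\eta\sum_{\tau<t} \langle \phi, p_\tau\ell_\tau^\top\rangle\bigr)}.
\end{equation*}
Introduce the abbreviation $S_k^i(t) \triangleq \sum_{j=1}^d \psi_{ij}^k \exp(-\eta (L_{t-1})_{ij})$ and the component normalizer $Z_k(t) \triangleq \prod_{i=1}^d S_k^i(t)$. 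Plugging in $\pi = \frac{1}{2d}\sum_k \pi_{\psi^k} + \frac{1}{2}\pi_{\psi^{d+1}}$ and using the factorization $\pi_{\psi^k}(\phi)\exp(-\eta\sum_\tau \langle \phi,p_\tau\ell_\tau^\top\rangle) = \prod_i \psi_{i,\phi(i)}^k \exp(-\eta (L_{t-1})_{i,\phi(i)})$, the denominator becomes $\frac{1}{2d}\sum_{k=1}^d Z_k(t) + \frac{1}{2}Z_{d+1}(t)$, which is exactly the normalizer of $w_t^k$ in \eqref{eqn:weights}. For the numerator, fixing $(i,j)$ and summing only over $\phi$ with $\phi(i)=j$ detaches row $i$ from the product and yields $\psi_{ij}^k\exp(-\eta (L_{t-1})_{ij})\cdot \prod_{i'\neq i} S_k^{i'}(t) = Z_k(t)\cdot (Q_t^k)_{ij}$ by \eqref{eqn:Q}. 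Combining numerator and denominator, $(\phi_t)_{ij} = \sum_k w_t^k (Q_t^k)_{ij}$, which is precisely the output of \pref{alg:base_1}.

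For the complexity, I would show that each of the following steps costs $\order(d^3)$. Maintaining $L_t = L_{t-1}+p_t\ell_t^\top$ is $\order(d^2)$. For each $k\in[d+1]$ and each $i\in[d]$, computing $S_k^i(t+1)$ takes $\order(d)$ time, giving $\order(d^3)$ overall; once $S_k^i(t+1)$ is cached, both the per-row normalization of $Q_{t+1}^k$ in \eqref{eqn:Q} and the product defining $w_{t+1}^k$ in \eqref{eqn:weights} are immediate (the latter being a product of $d$ already-computed scalars). Finally, computing $\phi_t = \sum_{k=1}^{d+1} w_t^k Q_t^k$ is a weighted sum of $d+1$ matrices of size $d\times d$, costing $\order(d^3)$. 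The key savings over a naive $\order(d^5)$ implementation is that \pref{alg:kernel_hedge} would recompute $K(B_t, \bar J_{ij})$ for each of $d^2$ entries of $\phi_t$, whereas the $d+1$ row-sums $S_k^i(t)$ implicitly encode all these kernel evaluations simultaneously.

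The main conceptual obstacle, I expect, is being careful in the equivalence step to match normalizations correctly: one must verify that the mixture weights $w_t^k$ reproduce the posterior probability under $q_t$ that the sampled $\phi$ came from the $k$-th component of the prior, and that, conditioned on this component, $(Q_t^k)_{ij}$ is the correct marginal probability that $\phi(i)=j$. This requires using the independence-across-rows of each $\pi_{\psi^k}$ (hence of the posterior under a product loss) so that the conditional distribution over $\phi(i)$ is exactly proportional to $\psi_{ij}^k\exp(-\eta(L_{t-1})_{ij})$, independent of the other rows. Once this is observed, both claims follow.
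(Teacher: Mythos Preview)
Your proposal is correct and follows essentially the same approach as the paper: both arguments exploit the factorization of each $\pi_{\psi^k}$ across rows to rewrite the numerator and denominator of the kernelized update as the mixture-normalized combination $\sum_k w_t^k (Q_t^k)_{ij}$, and both count the $\order(d^3)$ cost by tallying the $(d{+}1)\times d$ per-row sums and the weighted sum of $d{+}1$ matrices. Your notation $S_k^i(t),Z_k(t)$ and the posterior interpretation are a slightly cleaner packaging, but the proof is the same.
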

\begin{proof}
Similarly, we denote $J=\mathbf{1}\mathbf{1}^\top$ and $\overline{J}_{ij}=J-e_ie_j^\top$. With some abuse in notation, for $\phi\in\Phi_b$ and $i\in[d]$, we denote by $\phi(i)$ the unique index $j\in[d]$ such that $\phi_{ij}=1$.    
Direct calculation shows that
\begin{align*}
    &K (B_t,J) = \sum_{\phi\in\Phi_b}\pi(\phi)\prod_{i,j\in [d]:\phi_{ij}=1}(B_t)_{ij}(J)_{ij}\\&= \sum_{\phi\in\Phi_b}\rbr{\frac{1}{2d}\sum_{k=1}^d\prod_{i,j\in [d]:\phi_{ij}=1}\psi_{ij}^k+\frac{1}{2}\prod_{i,j\in [d]:\phi_{ij}=1}\psi_{ij}^{d+1}}\prod_{i,j\in [d]:\phi_{ij}=1}\exp\rbr{-\eta(L_{t-1})_{ij}}
    \\&= \frac{1}{2d}\sum_{k=1}^d\prod_{i=1}^d\sum_{j=1}^d\psi_{ij}^k\exp\rbr{-\eta(L_{t-1})_{ij}}+\frac{1}{2}\prod_{i=1}^d\sum_{j=1}^d\psi_{ij}^{d+1}\exp\rbr{-\eta(L_{t-1})_{ij}},
\end{align*}
where $L_{t-1}$ is defined in \pref{alg:base_1}.

Using the definition of $\psi^k$ in \pref{eqn:prior_psi}, for $i,k\in[d]$, we have
\begin{align}\label{eqn:intermediate_sum}
\begin{split}
    \sum_{j=1}^d\psi_{ij}^k\exp(-\eta (L_{t-1})_{ij})&=\left(1-\frac{1}{d}\right)\exp(-\eta(L_{t-1})_{ik}) + \frac{1}{d(d-1)}\sum_{j\ne k}\exp(-\eta(L_{t-1})_{ij})\\&=\left(\frac{d-2}{d-1}\right)\exp(-\eta(L_{t-1})_{ik}) + \frac{1}{d(d-1)}\sum_{j= 1}^d\exp(-\eta(L_{t-1})_{ij})\\&=\left(\sum_{j=1}^d\exp(-\eta(L_{t-1})_{ij})\right)\left(\left(\frac{d-2}{d-1}\right)\frac{\exp(-\eta(L_{t-1})_{ik})}{\sum_{j=1}^d\exp(-\eta(L_{t-1})_{ij})} + \frac{1}{d(d-1)}\right)\\&=\left(\frac{d-2}{d-1}\right)\left(\sum_{j=1}^d\exp(-\eta(L_{t-1})_{ij})\right)\left((V_t)_{ik}+\frac{1}{d(d-2)}\right).
\end{split}
\end{align}

Similarly, using the definition of $\psi^{d+1}$ in \pref{eqn:prior_psi}, we have
\begin{align}\label{eqn:intermediate_sum_2}
    \sum_{j=1}^d\psi_{ij}^{d+1}\exp(-\eta (L_{t-1})_{ij})&=\left(\frac{d-2}{d-1}\right)\left(\sum_{j=1}^d\exp(-\eta(L_{t-1})_{ij})\right)\left((V_t)_{ii}+\frac{1}{d(d-2)}\right).
\end{align}

Thus,
\begin{align*}
    K (B_t,J) &= \frac{1}{2d}\sum_{k=1}^d{\left(\frac{d-2}{d-1}\right)}^d\prod_{i=1}^d\left(\sum_{j=1}^d\exp(-\eta(L_{t-1})_{ij})\right)\prod_{i=1}^d\left((V_t)_{ik}+\frac{1}{d(d-2)}\right)\\&\quad+\frac{1}{2}{\left(\frac{d-2}{d-1}\right)}^d\prod_{i=1}^d\left(\sum_{j=1}^d\exp(-\eta(L_{t-1})_{ij})\right)\prod_{i=1}^d\left((V_t)_{ii}+\frac{1}{d(d-2)}\right)\\&= \frac{1}{2}{\left(\frac{d-2}{d-1}\right)}^dc_t\prod_{i=1}^d\left(\sum_{j=1}^d\exp(-\eta(L_{t-1})_{ij})\right).
\end{align*}

Similarly, we have
\begin{align*}
&
    K (B_t,\overline{J}_{ij}) = \sum_{\phi\in\Phi_b}\pi(\phi)\prod_{u,v\in [d]: \phi_{uv}=1}(B_t)_{uv}(\overline{J}_{ij})_{uv}\\&= \sum_{\phi\in\Phi_b}\rbr{\frac{1}{2d}\sum_{k=1}^d\prod_{u,v\in [d]: \phi_{uv}=1}\psi_{uv}^k+\frac{1}{2}\prod_{u,v\in [d]: \phi_{uv}=1}\psi_{uv}^{d+1}}\prod_{u,v\in [d]: \phi_{uv}=1}(B_t)_{uv}(\overline{J}_{ij})_{uv}\\&= \sum_{\phi:\phi(i)\ne j}\rbr{\frac{1}{2d}\sum_{k=1}^d\prod_{u,v\in [d]: \phi_{uv}=1}\psi_{uv}^k+\frac{1}{2}\prod_{u,v\in [d]: \phi_{uv}=1}\psi_{uv}^{d+1}} \prod_{u,v\in [d]: \phi_{uv}=1}\exp\rbr{-\eta(L_{t-1})_{uv}}.
\end{align*}

This implies:
\begin{align*}
    &K (B_t,J) - K (B_t,\overline{J}_{ij})\\&= \sum_{\phi:\phi(i)= j}\rbr{\frac{1}{2d}\sum_{k=1}^d\prod_{u,v\in [d]: \phi_{uv}=1}\psi_{uv}^k+\frac{1}{2}\prod_{u,v\in [d]: \phi_{uv}=1}\psi_{uv}^{d+1}}  \prod_{u,v\in [d]: \phi_{uv}=1}\exp\rbr{-\eta(L_{t-1})_{uv}}\\&= \frac{1}{2d}\sum_{k=1}^d\psi_{ij}^k\exp\rbr{-\eta(L_{t-1})_{ij}}\sum_{\phi:\phi(i)=j}\prod_{u\ne i:\phi_{uv}=1}\psi_{uv}^k\exp\rbr{-\eta(L_{t-1})_{uv}} \\&\quad+ \frac{1}{2}\psi_{ij}^{d+1}\exp\rbr{-\eta(L_{t-1})_{ij}}\sum_{\phi:\phi(i)=j}\prod_{u\ne i:\phi_{uv}=1}\psi_{uv}^{d+1}\exp\rbr{-\eta(L_{t-1})_{uv}}\\&= \frac{1}{2d}\sum_{k=1}^d\psi_{ij}^k\exp\rbr{-\eta(L_{t-1})_{ij}}\prod_{u\ne i}\sum_{v=1}^d\psi_{uv}^k\exp\rbr{-\eta(L_{t-1})_{uv}} \\&\quad+ \frac{1}{2}\psi_{ij}^{d+1}\exp\rbr{-\eta(L_{t-1})_{ij}}\prod_{u\ne i}\sum_{v=1}^d\psi_{uv}^{d+1}\exp\rbr{-\eta(L_{t-1})_{uv}} \\
    &= \frac{1}{2d}\sum_{k=1}^d\psi_{ij}^k\exp\rbr{-\eta(L_{t-1})_{ij}}\prod_{u\ne i}\left(\frac{d-2}{d-1}\right)\left(\sum_{v=1}^d\exp(-\eta(L_{t-1})_{uv})\right)\left((V_t)_{uk}+\frac{1}{d(d-2)}\right) \\&\quad+ \frac{1}{2}\psi_{ij}^{d+1}\exp\rbr{-\eta(L_{t-1})_{ij}}\prod_{u\ne i}\left(\frac{d-2}{d-1}\right)\left(\sum_{v=1}^d\exp(-\eta(L_{t-1})_{uv})\right)\left((V_t)_{uu}+\frac{1}{d(d-2)}\right)\tag{from \pref{eqn:intermediate_sum} and \pref{eqn:intermediate_sum_2}} \\&= \frac{1}{2d}\sum_{k=1}^d\psi_{ij}^k\frac{\exp\rbr{-\eta(L_{t-1})_{ij}}}{\sum_{v=1}^d \exp\rbr{-\eta(L_{t-1})_{iv}}}{\left(\frac{d-2}{d-1}\right)}^{d-1}\left(\prod_{u=1}^d\sum_{v=1}^d\exp(-\eta(L_{t-1})_{uv})\right)(C_t)_{ik} \\&\quad+ \frac{1}{2}\psi_{ij}^{d+1}\frac{\exp\rbr{-\eta(L_{t-1})_{ij}}}{\sum_{v=1}^d \exp\rbr{-\eta(L_{t-1})_{iv}}}{\left(\frac{d-2}{d-1}\right)}^{d-1}\left(\prod_{u=1}^d\sum_{v=1}^d\exp(-\eta(L_{t-1})_{uv})\right)(C_t)_{i,d+1}\tag{from \pref{eqn:intermediate_computation}}\\&= \frac{1}{2}{\left(\frac{d-2}{d-1}\right)}^{d-1}\left(\prod_{u=1}^d\sum_{v=1}^d\exp(-\eta(L_{t-1})_{uv})\right)(V_t)_{ij}\rbr{\frac{1}{d}\sum_{k=1}^d\psi_{ij}^k(C_t)_{ik}+\psi_{ij}^{d+1}(C_t)_{i,d+1}}\tag{from \pref{eqn:intermediate_computation}}
\end{align*}

Therefore, we get
\begin{align*}
\frac{K (B_t,J) - K (B_t,\overline{J}_{ij})}{K (B_t,J)} 
= \frac{d-1}{c_t(d-2)}(V_t)_{ij}\rbr{\frac{1}{d}\sum_{k=1}^d\psi_{ij}^k(C_t)_{ik}+\psi_{ij}^{d+1}(C_t)_{i,d+1}}
\end{align*}
where the left-hand side is how $(\phi_t)_{ij}$ is defined in \pref{alg:kernel_hedge}.

Finally, we consider the following two cases.\\
{Case 1: for $j\ne i$, we have}
\begin{align*}
(\phi_t)_{ij} &= \frac{d-1}{c_{t}d(d-2)}(V_t)_{ij}\Big(\Big(1-\frac{1}{d}\Big)(C_t)_{ij}+\frac{1}{d(d-1)}\sum_{k\ne j}(C_t)_{ik}\Big)\\&\quad+\frac{d-1}{c_{t}(d-2)}(V_t)_{ij}\frac{1}{d(d-1)}(C_t)_{i,d+1}\tag{from \pref{eqn:prior_psi}}\\&= \frac{(V_t)_{ij}(C_t)_{ij}}{c_td}+\frac{(V_t)_{ij}(S_t)_i}{c_td^2(d-2)}+\frac{1}{d(d-2)}\frac{(V_t)_{ij}(C_t)_{i,d+1}}{c_t}\tag{from \pref{eqn:intermediate_computation}}.
\end{align*}
{Case 2: for $j= i$, we have}
\begin{align*}
(\phi_t)_{ij} &= \frac{d-1}{c_{t}d(d-2)}(V_t)_{ij}\Big(\Big(1-\frac{1}{d}\Big)(C_t)_{ij}+\frac{1}{d(d-1)}\sum_{k\ne j}(C_t)_{ik}\Big)\\&\quad+\frac{d-1}{c_{t}(d-2)}(V_t)_{ij}\Big(\frac{d-1}{d}\Big)(C_t)_{i,d+1}\tag{from \pref{eqn:prior_psi}}\\&= \frac{(V_t)_{ij}(C_t)_{ij}}{c_td}+\frac{(V_t)_{ij}(S_t)_i}{c_td^2(d-2)}+\frac{(d-1)^2}{d(d-2)}\cdot\frac{(V_t)_{ij}(C_t)_{i,d+1}}{c_t}\tag{from \pref{eqn:intermediate_computation}}.
\end{align*}

Combining the cases above gives us how $(\phi_t)_{ij}$ is defined in \pref{alg:base_1}, establishing the claimed equivalence.

To calculate the time complexity of computing $\phi_t$, note that $V_t$ can be calculated in $\order\rbr{d^2}$ time. Given $V_t$, computing $c_{t}$ takes another $\order\rbr{d^2}$ time. To compute $C_t$, we can first compute $\prod_{u=1}^d\left((V_t)_{uk}+\frac{1}{d(d-2)}\right),~\forall~k\in[d]$ and $\prod_{u=1}^d\left((V_t)_{uu}+\frac{1}{d(d-2)}\right)$ in $\order\rbr{d^2}$ time. Then, these values can be used to calculate the matrix $C_t$ in $\order\rbr{d^2}$ time because we can compute each entry of $C_t$ in constant time. With $C_t$, $S_t$ can also be computed in $\order\rbr{d^2}$ time. Therefore, computing $\phi_{t}$ takes $\order\rbr{d^2}$ time.
\end{proof}
}
\section{Omitted Details in \pref{sec:approach 2}}\label{app:approach 2}

First, we include the meta MWU algorithm discussed in \pref{sec:approach 2} in \pref{alg: approach 2}, which uses a base algorithm shown in \pref{alg:BMHedge-sub}.
We use $\calU\triangleq [d+1]\times [M]$ for notational convenience, where $M=2\ceil{\log_2 d}$.
We now show the guarantee for our proposed prior-aware BM-reduction (\pref{alg:BMHedge-sub}).

\begin{algorithm}[t]
\caption{Meta MWU Algorithm for Learning Multiple BM-Reductions}\label{alg: approach 2}
\setcounter{AlgoLine}{0}
\nl \textbf{Initialization:} Set learning rate $\eta=\sqrt{\frac{\log ((d+1)\cdot 2\ceil{\log_2 d}) }{T}}$ and $w_1=\frac{1}{|\calU|}\one\in \Delta(\calU)$, where
$\calU = [d+1]\times [M]$;
initialize $|\calU|$ base-learner $\calB_{k,h}$, $(k,h)\in \calU$, where $\calB_{k,h}$ is an instance of \pref{alg:BMHedge-sub} with prior $\psi^k$, learning rate $\eta_h=\sqrt{{2^h}/{T}}$, and subroutine $\SubAlg$ being MWU (\pref{alg:vanilla MWU} with $\calA =[d]$). 

\For{$t=1,2,\cdots,T$}{
    \nl Receive $\phi^{k,h}_{t}\in \calS$ from  $\calB_{k,h} $ for each $(k,h)\in \calU$ and 
    compute  $\phi_{t}=\sum_{(k,h)\in \calU}w_{t,k,h} \phi_{t}^{k,h}$.

    \nl Play the stationary distribution $p_t$ of $\phi_t$ (that is, $p_t =\phi_t(p_t)$) and receive loss $\ell_t$.
    
    \nl Update $w_{t+1}$ such that $w_{t+1,k,h}\propto w_{t,k,h}\exp(-\eta\ell_{t,k,h}^{w})$ where $\ell_{t,k,h}^w=\iinner{\phi_t^{k,h},p_t\ell_t^\top}$.\label{line: wt adv approach 2}
    
    \nl Send loss matrix $p_t \ell_t^\top$ to $\calB_{k,h}$ for each $(k,h)\in \calU$.\label{line: approach 2 loss}
}
\end{algorithm}

\begin{algorithm}[t]
\caption{Prior-Aware BM-Reduction}\label{alg:BMHedge-sub}
\setcounter{AlgoLine}{0}
\KwIn{a prior $\psi \in \calS$, a learning rate $\eta>0$, and an external regret minimization subroutine $\SubAlg$.}
\nl \textbf{Initialize:} 
$d$ instances of $\SubAlg$, denoted by $\SubAlg_1, \ldots, \SubAlg_d$, where $\SubAlg_k$ uses learning rate $\eta$ and prior distribution $\psi_{k:}\in \Delta(d)$.\label{line: optimism_game_BM}

\For{$t=1,2\dots,T$}{
    \nl Propose $\phi_t\in \calS$ where the $k$-th row $\phi_{t,k:}\in \Delta(d)$ is the output of $\SubAlg_k$.
    
    \nl Receive a loss matrix $p_t \ell_t^\top$ and send the $k$-th row to $\SubAlg_k$ for each $k\in [d]$.
}
\end{algorithm}

\begin{theorem}\label{thm: induced dist to reg}
    Suppose that $\pi_\psi$ is a $\psi$-induced distribution as defined in \pref{def:induced_dist}. Then \pref{alg:BMHedge-sub} with prior $\pi_\psi$, learning rate $\eta > 0$, and $\SubAlg$ being MWU (\pref{alg:vanilla MWU} with $\calA=[d]$) guarantees $\sum_{t=1}^T \inner{\phi_t-\phi, p_t\ell_t^\top} \le \frac{\log \frac{1}{\pi_\psi(\phi)}}{\eta}+\eta T$ for all $\phi \in \Phi_b$.
\end{theorem}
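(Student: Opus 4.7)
The plan is to decompose the OLO regret $\sum_{t=1}^T \inner{\phi_t-\phi, p_t\ell_t^\top}$ into a sum of row-wise regrets, one per subroutine $\SubAlg_k$, and then apply the standard MWU guarantee (\pref{lem:hedge-KL}) to each with the appropriate prior.

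First, I would expand the matrix inner product by rows. Writing $(p_t\ell_t^\top)_{k:} = p_{t,k}\ell_t^\top$, we get
\begin{align*}
\inner{\phi_t - \phi, p_t\ell_t^\top} = \sum_{k=1}^d p_{t,k}\inner{\phi_{t,k:} - \phi_{k:}, \ell_t},
\end{align*}
so the total OLO regret splits as $\sum_{k=1}^d \sum_{t=1}^T \inner{\phi_{t,k:} - \phi_{k:}, p_{t,k}\ell_t}$. By construction, $\phi_{t,k:}$ is exactly the distribution played by $\SubAlg_k$ at time $t$, and $p_{t,k}\ell_t \in [0,1]^d$ is exactly the loss vector fed to it. Hence the inner sum is the external regret of $\SubAlg_k$ against the comparator $\phi_{k:}$.

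Next, I would invoke \pref{lem:hedge-KL} for each $\SubAlg_k$, which is an instance of MWU with prior $\psi_{k:}$ and learning rate $\eta$. This gives
\begin{align*}
\sum_{t=1}^T \inner{\phi_{t,k:} - \phi_{k:}, p_{t,k}\ell_t} \le \frac{\KL(\phi_{k:}, \psi_{k:})}{\eta} + \eta \sum_{t=1}^T \|p_{t,k}\ell_t\|_\infty^2 \le \frac{\KL(\phi_{k:}, \psi_{k:})}{\eta} + \eta \sum_{t=1}^T p_{t,k}^2,
\end{align*}
using $\ell_t \in [0,1]^d$. Summing over $k$ and using $\sum_k p_{t,k}^2 \le \sum_k p_{t,k} = 1$ yields the $\eta T$ term, leaving us with $\sum_t \inner{\phi_t-\phi, p_t\ell_t^\top} \le \eta^{-1}\sum_{k=1}^d \KL(\phi_{k:}, \psi_{k:}) + \eta T$.

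The final step, which is the key observation that connects to the $\psi$-induced prior, is that for any $\phi \in \Phi_b$ the row $\phi_{k:}$ is the one-hot vector $e_{\phi(k)}$, so $\KL(\phi_{k:}, \psi_{k:}) = \log(1/\psi_{k,\phi(k)}) = \log(1/\inner{\psi_{k:}, \phi_{k:}})$. Therefore
\begin{align*}
\sum_{k=1}^d \KL(\phi_{k:}, \psi_{k:}) = \log \frac{1}{\prod_{k=1}^d \inner{\psi_{k:}, \phi_{k:}}} = \log \frac{1}{\pi_\psi(\phi)}
\end{align*}
by \pref{def:induced_dist}, which gives the claimed bound. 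There is no serious obstacle here; the proof is essentially a direct application of MWU regret combined with the fact that the $\psi$-induced prior factorizes across rows, so row-wise KL divergences to $\psi_{k:}$ exactly telescope into $\log(1/\pi_\psi(\phi))$ when the comparator is binary. The slightly subtle point worth highlighting is the use of $\sum_k p_{t,k}^2 \le 1$ (rather than the naive $d$) to avoid a spurious $d$ factor on the $\eta T$ term.
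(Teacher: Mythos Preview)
Your proposal is correct and follows essentially the same approach as the paper: decompose the matrix inner product row by row, apply \pref{lem:hedge-KL} to each $\SubAlg_k$, and use that $\KL(\phi_{k:},\psi_{k:})=\log(1/\psi_{k,\phi(k)})$ for binary $\phi$ to recombine into $\log(1/\pi_\psi(\phi))$. The only cosmetic difference is that the paper bounds $\|p_{t,k}\ell_t\|_\infty^2\le p_{t,k}$ and uses $\sum_k p_{t,k}=1$, whereas you keep $p_{t,k}^2$ and use $\sum_k p_{t,k}^2\le 1$; both yield the same $\eta T$ term.
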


\begin{proof}[Proof of \pref{thm: induced dist to reg}]
First we decompose $\sum_{t=1}^T \langle \phi_t-\phi,p_t \ell_t^\top\rangle$ as $\sum_{t=1}^T \sum_{i=1}^d \langle \phi_{t,i:}-\phi_{i:},p_{t,i} \ell_t\rangle$.
For each $i$, based on the algorithm and \pref{lem:hedge-KL}, we have 
\[
\sum_{i=1}^d \langle \phi_{t,i:}-\phi_{i:},p_{t,i} \ell_t\rangle \leq \frac{\log \frac{1}{\psi_{i,\phi(i)}}}{\eta}+\eta \sum_{t=1}^T p_{t,i}
\]
where $\phi(i)$ denotes the unique index $j$ such that $\phi_{ij}=1$.
Noting that $\sum_{i=1}^d \psi_{i,\phi(i)}$ is exactly $\pi_\psi(\phi)$ by definition, we have thus proven
\begin{align}
    \sum_{t=1}^T \langle \phi_t-\phi,p_t \ell_t^\top\rangle\notag
    \le \sum_{i=1}^d \rbr{\frac{\log \frac{1}{\psi_{i,\phi(i)}}}{\eta}+\eta \sum_{t=1}^T p_{t,i}} 
    =\frac{\log \frac{1}{\pi_{\psi}(\phi)}}{\eta}+\eta T.\notag
\end{align}
\end{proof}

Next, we provide the proof for the adaptive $\Phi$-regret achieved by \pref{alg: approach 2}.

\begin{proof}[Proof of \pref{thm: binary approach 2}]
Since $p_t$ is a stationary distribution of $\phi_t$, we have
    $\Reg(\phi) = \sum_{t=1}^T\inner{p_t - \phi(p_t), \ell_t} = \sum_{t=1}^T\inner{\phi_t(p_t) - \phi(p_t), \ell_t} = \sum_{t=1}^T \inner{\phi_t-\phi,p_t\ell_t^\top}$.
Using the definition of $\phi_t$ and $\ell_t^w$ from \pref{alg: approach 2},     
for any $(k,h)\in \calU$,
we decompose $\Reg(\phi)$ as
\begin{align*}
    \Reg(\phi)&=\sum_{t=1}^T \langle \phi_t-\phi,p_t\ell_t^\top \rangle\\
    &=\sum_{t=1}^T \inner{ \sum_{(k,h)\in \calU} w_{t,k,h}\phi_t^{k,h}-\phi,p_t\ell_t^\top }\\
    &=  \sum_{t=1}^T \inner{ w_{t}-e_{k,h},\ell_t^w}+\sum_{t=1}^T \inner {\phi_{t}^{k,h}-\phi,p_t \ell_t^\top}.
\end{align*}
Applying \pref{lem:hedge-KL}, the first term can be bounded as
\begin{align}\label{eq: base reg approach 2}
    \sum_{t=1}^T \langle w_{t}-e_{k,h}, \ell_t^w \rangle
    \le 2\sqrt{T\log \rbr{(d+1)\cdot 2\ceil{\log_2 d}}}\le 4\sqrt{T\log d}.
\end{align}

For the second term, by \pref{thm: induced dist to reg}, it holds that
\begin{align}\label{eq: reg phi app 5}
    \sum_{t=1}^T \langle \phi_{t}^{k,h}-\phi,p_t \ell_t^\top \rangle\le \frac{\log \frac{1}{\pi_{\psi^k}(\phi)}}{\eta_h}+\eta_h T.
\end{align}
Summing up \pref{eq: base reg approach 2} and \pref{eq: reg phi app 5}, we can bound $\Reg(\phi) $ as 
\begin{align*}
    \Reg(\phi)\le \frac{\log \frac{1}{\pi_{\psi^k}(\phi)}}{\eta_h}+\eta_h T+ 4\sqrt{T\log d}.
\end{align*}

Since the above inequality holds for all $k\in [d+1]$, we have
\begin{align}
    \Reg(\phi)&\le  \min_{k\in [d+1]} \frac{\log \frac{1}{\pi_{\psi^{k}}(\phi)}}{\eta_h}+\eta_h T+ 4\sqrt{T\log d}\notag\\
    &=  \frac{\log \frac{1}{\max_{k\in [d+1]} \pi_{\psi^{k}}(\phi)}}{\eta_h}+\eta_h T+ 4\sqrt{T\log d}\notag
\\
    &\le \frac{\log \frac{1}{\pi(\phi)}}{\eta_h}+\eta_h T+ 4\sqrt{T\log d}\label{eq: reg app2 pi},
\end{align}
by the definition of $\pi$ (\pref{def:prior}).
It is clear that \pref{eq: reg app2 pi} attains its minimum when $\eta_h$ is $\sqrt{\frac{{\log \frac{1}{\pi(\phi)}}}{T}}$. 
Similar to \pref{lem:eta-hedge}, we now show that there exists $h^\star$ such that $\eta_{h^\star}$ is close to this optimum. Since $ \psi_{ij}^k\ge \frac{1}{d^2}$ for all $k\in [d+1]$ and $i$, $j\in [d]$, 
it holds that
\begin{align*}
    \min_h {2^h}=2\le \max \cbr{\log \frac{1}{\pi(\phi)},2}\le d^2= 2^{2\log_2 d}\le \max_h{2^h}
\end{align*}
Therefore, there exists $h^\star$ such that
\begin{align*}
     2^{h^\star}\le  \max \cbr{\log \frac{1}{\pi(\phi)},2}\le  2^{h^\star+1},
\end{align*}
and thus
\begin{align*}
    &\frac{ {\log \frac{1}{\pi(\phi)}}}{\eta_{h^\star}}+\eta_{h^\star} T
    =   {\log \frac{1}{\pi(\phi)}} \cdot \sqrt{\frac{T}{2^{h^\star}}}+\sqrt{\frac{2^{h^\star}}{T}} T
    \le 3\sqrt{T\log \frac{1}{\pi(\phi)}}+2 \sqrt{T}.
\end{align*}
Substituting it into \pref{eq: reg app2 pi} (by picking $h=h^\star$), we have 
$\Reg(\phi)\le 3\sqrt{T\log \frac{1}{\pi(\phi)}}+2 \sqrt{T}+4\sqrt{T\log d} = \order\rbr{\sqrt{T\log \frac{1}{\pi(\phi)}}+\sqrt{T\log d}}$.
Therefore, \pref{eq: prior condition} is satisfied with $B=\sqrt{T\log d}$. The second statement of the theorem then follows directly from \pref{thm:special_prior}.
\end{proof}
\section{Omitted Details in \pref{sec:equilibrium}}\label{app:game}
In this section, we provide the omitted details and proofs for our results in \pref{sec:equilibrium}. The section is organized as follows. In \pref{app:omwu_sub}, we include the pseudocode for OMWU. In \pref{app:aux}, we introduce several important lemmas that will be useful in our analysis. Then, in \pref{app: analysis_game}, we provide the full proof for \pref{thm:game}. Specifically, we start with a proof sketch, showing how we utilize the nonnegative-social-external-regret property to show that the path-length of the entire learning dynamic is bounded by $\order(N\log d)$, followed by a full proof of \pref{thm:game}. Importantly, following the notation convention introduced in \pref{sec:equilibrium}, \textbf{we use superscript $(n)$ to denote variables associated with agent/player $n$}.

\subsection{Pseudocode for OMWU}\label{app:omwu_sub}
Here, we include the pseudocode for OMWU (\pref{alg:OMWU-sub}) that is used in \pref{alg:game}. There are two possible outputs for \pref{alg:OMWU-sub} at each round $t$. For base learner $\calB_{d+2}$ in \pref{alg:game}, the output in round $t$ is $\phi_t\in\R^{d\times d}$, while for subroutines used by $\calB_{k}$ for $k\in[d+1]$, the output is $p_t\in\Delta(d)$. 
\begin{algorithm}[h]
\caption{OMWU}\label{alg:OMWU-sub}
\setcounter{AlgoLine}{0}
\KwIn{learning rate $\eta>0$; a prior distribution $\wh{p}_1\in\Delta(d)$.}
\nl \textbf{Initialize:} $\ell_0=\mathbf{0}\in\R^d$. 

\For{$t=1,2\dots,T$}{
    \nl Compute $p_t$ such that $p_{t,i}\propto\wh{p}_{t,i}\exp(-\eta\ell_{t-1,i})$ for $i\in[d]$ and $\phi_t=\one p_t^\top\in \R^{d\times d}$.
    
    \nl Receive $\ell_t$ and compute $\wh{p}_{t+1}$ such that $\wh{p}_{t+1,i}\propto\wh{p}_{t,i}\exp(-\eta\ell_{t,i})$ for $i\in[d]$.\;%
}
\end{algorithm}

\subsection{Auxiliary Lemmas}\label{app:aux}
To analyze the performance of OMWU, we use following lemma from \citet{syrgkanis2015fast}.
\begin{lemma}[Theorem 18 in \citep{syrgkanis2015fast}]\label{lem:OMWU}
    OMWU (\pref{alg:OMWU-sub}) with learning rate $\eta>0$ guarantees that 
    \begin{align*}
        \sum_{t=1}^T\inner{p_t-u,\ell_t} \leq \frac{\KL(u,p_1)}{\eta} + \eta\sum_{t=2}^T\|\ell_t-\ell_{t-1}\|_{\infty}^2 - \frac{1}{8\eta}\sum_{t=2}^T\|p_t-p_{t-1}\|_{1}^2.
    \end{align*}
\end{lemma}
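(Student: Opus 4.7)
The plan is to view OMWU (\pref{alg:OMWU-sub}) as an instance of Optimistic Mirror Descent with the negative-entropy regularizer, so that the two exponential-weight updates per round admit the equivalent variational characterizations
\[
p_t = \argmin_{p \in \Delta(d)}\cbr{\eta\inner{p,\ell_{t-1}} + \KL(p,\widehat{p}_t)}, \qquad \widehat{p}_{t+1} = \argmin_{p \in \Delta(d)}\cbr{\eta\inner{p,\ell_t} + \KL(p,\widehat{p}_t)}.
\]
Thus $p_t$ and $\widehat{p}_{t+1}$ are two KL-projections of the common base point $\widehat{p}_t$ against the predicted loss $\ell_{t-1}$ and the realized loss $\ell_t$ respectively; this shared-base structure is exactly what drives the negative path-length term in the RVU-type bound.

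With this setup, I would first apply the three-point Bregman projection inequality to each minimizer, obtaining $\eta\inner{\widehat{p}_{t+1} - u,\ell_t} \le \KL(u,\widehat{p}_t) - \KL(u,\widehat{p}_{t+1}) - \KL(\widehat{p}_{t+1},\widehat{p}_t)$ and $\eta\inner{p_t - \widehat{p}_{t+1},\ell_{t-1}} \le \KL(\widehat{p}_{t+1},\widehat{p}_t) - \KL(\widehat{p}_{t+1},p_t) - \KL(p_t,\widehat{p}_t)$. The decomposition $\inner{p_t - u,\ell_t} = \inner{p_t - \widehat{p}_{t+1},\ell_{t-1}} + \inner{p_t - \widehat{p}_{t+1},\ell_t - \ell_{t-1}} + \inner{\widehat{p}_{t+1} - u,\ell_t}$ then lets the two $\KL(\widehat{p}_{t+1},\widehat{p}_t)$ pieces cancel, and H\"older's inequality together with AM-GM handles the optimism middle term: $\eta\inner{p_t - \widehat{p}_{t+1},\ell_t - \ell_{t-1}} \le \tfrac{1}{4}\|p_t - \widehat{p}_{t+1}\|_1^2 + \eta^2\|\ell_t - \ell_{t-1}\|_\infty^2$. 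Pinsker's inequality converts the residual $\KL(\widehat{p}_{t+1},p_t)$ and $\KL(p_t,\widehat{p}_t)$ into squared $\ell_1$ distances; one half of the bound on $\KL(\widehat{p}_{t+1},p_t)$ absorbs the $\tfrac{1}{4}\|p_t - \widehat{p}_{t+1}\|_1^2$ coming from AM-GM, and after these simplifications I land on the per-round inequality
\[
\eta\inner{p_t - u,\ell_t} \le \KL(u,\widehat{p}_t) - \KL(u,\widehat{p}_{t+1}) - \tfrac{1}{4}\|p_t - \widehat{p}_{t+1}\|_1^2 - \tfrac{1}{2}\|p_t - \widehat{p}_t\|_1^2 + \eta^2\|\ell_t - \ell_{t-1}\|_\infty^2.
\]

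Finally I would sum this over $t=1,\ldots,T$, telescope the KL differences using $\widehat{p}_1 = p_1$ (which holds because $\ell_0 = 0$ forces $p_1 = \widehat{p}_1$), drop the non-positive $-\KL(u,\widehat{p}_{T+1})$, and divide by $\eta$ to turn $\eta^2\|\ell_t - \ell_{t-1}\|_\infty^2$ into the advertised $\eta\|\ell_t - \ell_{t-1}\|_\infty^2$. The main obstacle will be extracting the stated $\tfrac{1}{8\eta}\sum_{t=2}^T\|p_t - p_{t-1}\|_1^2$ out of the two negative quadratics, which live in adjacent summands: round $t-1$ supplies $-\tfrac{1}{4}\|p_{t-1} - \widehat{p}_t\|_1^2$ whereas round $t$ supplies $-\tfrac{1}{2}\|p_t - \widehat{p}_t\|_1^2$. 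I would pair them via $\|p_t - p_{t-1}\|_1^2 \le 2\|p_t - \widehat{p}_t\|_1^2 + 2\|\widehat{p}_t - p_{t-1}\|_1^2$ to yield exactly $-\tfrac{1}{8}\|p_t - p_{t-1}\|_1^2$ for each $t\ge 2$, using only a $\tfrac{1}{4}\|p_t - \widehat{p}_t\|_1^2$ slice of the stronger term; the boundary round $t=1$ is clean since $p_1 = \widehat{p}_1$ annihilates its own quadratic, and the unpaired $-\tfrac{1}{4}\|p_T - \widehat{p}_{T+1}\|_1^2$ at the right end is non-positive and can be safely dropped.
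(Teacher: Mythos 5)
The paper does not prove this lemma; it cites Theorem~18 of \citet{syrgkanis2015fast}, so there is no in-paper proof to compare against. Your derivation is the standard optimistic mirror descent (``predictable sequences'') analysis, which is essentially the route taken in that reference: you invoke the variational characterization of the two exponential-weight steps, apply the three-point Bregman inequality to each, cancel the shared $\KL(\hat p_{t+1},\hat p_t)$, handle the optimism cross term via H\"older and AM--GM, convert the residual divergences to squared $\ell_1$ norms via Pinsker, and pair the negative quadratics across adjacent rounds. I checked each of these steps and the per-round inequality
\[
\eta\inner{p_t-u,\ell_t}\le \KL(u,\hat p_t)-\KL(u,\hat p_{t+1})-\tfrac14\|p_t-\hat p_{t+1}\|_1^2-\tfrac12\|p_t-\hat p_t\|_1^2+\eta^2\|\ell_t-\ell_{t-1}\|_\infty^2
\]
is correct, as is the cross-round pairing $\frac14\|\hat p_t-p_{t-1}\|_1^2+\frac14\|p_t-\hat p_t\|_1^2\ge\frac18\|p_t-p_{t-1}\|_1^2$ (using half of the $\frac12$-coefficient term), with the $t=1$ and $t=T$ boundary terms handled as you describe.

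One small point you should surface explicitly: after summing and dividing by $\eta$, your variation term is $\eta\sum_{t=1}^T\|\ell_t-\ell_{t-1}\|_\infty^2=\eta\|\ell_1\|_\infty^2+\eta\sum_{t=2}^T\|\ell_t-\ell_{t-1}\|_\infty^2$, because $\ell_0=\mathbf 0$. The lemma as stated in the paper starts that sum at $t=2$, so your bound carries an extra additive $\eta\|\ell_1\|_\infty^2\le\eta$. This first-round contribution does not appear to be avoidable by the argument you outline (indeed, since $p_1=\hat p_1$ one can check $\eta\inner{p_1-\hat p_2,\ell_1}\ge 0$, so there is genuinely something positive to pay at $t=1$). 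The discrepancy is benign: losses lie in $[0,1]$ and the paper uses $\eta=\Theta(1/N)$, so it is absorbed by the $\order(\lambda)=\order(N)$ slack already present wherever the lemma is applied. Still, a reader comparing your conclusion to the stated inequality would notice the mismatch, so you should either carry the $\eta\|\ell_1\|_\infty^2$ term explicitly or note that the cited statement implicitly folds it into the additive constant.
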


The next lemma shows that the loss vector difference between consecutive rounds for each agent $n$ is bounded by the sum of the strategy differences over all other agents.
\begin{lemma}\label{lem:loss-diff}
For any $t\in[T]$, $n\in[N]$, we have
    \begin{align*}
        \|\ell_t^\prn - \ell_{t-1}^\prn\|_{\infty}^2 \leq (N-1) \sum_{j \neq n} \|p_t^{(j)} - p_{t-1}^{(j)}\|_1^2.
    \end{align*}
\end{lemma}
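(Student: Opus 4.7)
The plan is to unfold the definition of $\ell_t^{(n)}$ as an expectation over a product distribution, bound the difference in expectation by an $\ell_1$ distance between product distributions, decompose that via a standard telescoping identity, and finally apply Cauchy--Schwarz.

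More concretely, by the definition in Section~\ref{sec:equilibrium}, for every action $a \in [d]$ we have
\[
\ell^{(n)}_{t,a} \;=\; \sum_{\a^{(-n)} \in [d]^{N-1}} \Big(\prod_{j \neq n} p^{(j)}_{t,a_j}\Big)\, \ell^{(n)}\!\big(a,\a^{(-n)}\big).
\]
Since $\ell^{(n)}$ takes values in $[0,1]$, for any fixed $a$ the difference $\ell^{(n)}_{t,a} - \ell^{(n)}_{t-1,a}$ is bounded in absolute value by the $\ell_1$ distance between the two product distributions $\prod_{j\ne n}p_t^{(j)}$ and $\prod_{j\ne n}p_{t-1}^{(j)}$ over $[d]^{N-1}$. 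This step is just $|\E_P f - \E_Q f| \le \|f\|_\infty \|P-Q\|_1$.

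Next I would invoke the well-known telescoping bound on the $\ell_1$ distance between product distributions: for any distributions $\{q^{(j)}\}$ and $\{\tilde q^{(j)}\}$,
\[
\Big\|\prod_{j \neq n} q^{(j)} - \prod_{j \neq n} \tilde q^{(j)}\Big\|_1 \;\le\; \sum_{j \neq n} \|q^{(j)} - \tilde q^{(j)}\|_1,
\]
which follows by a one-coordinate-at-a-time hybrid argument (swap one factor at a time; each swap contributes $\|q^{(j)} - \tilde q^{(j)}\|_1$ since the remaining factors are probability distributions whose total mass is $1$). Applied with $q^{(j)} = p_t^{(j)}$ and $\tilde q^{(j)} = p_{t-1}^{(j)}$, and then taking the maximum over $a$, this yields
\[
\|\ell_t^{(n)} - \ell_{t-1}^{(n)}\|_\infty \;\le\; \sum_{j\ne n} \|p_t^{(j)} - p_{t-1}^{(j)}\|_1.
\]

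Finally, squaring both sides and applying Cauchy--Schwarz to the sum of $N-1$ terms gives
\[
\|\ell_t^{(n)} - \ell_{t-1}^{(n)}\|_\infty^2 \;\le\; \Big(\sum_{j\ne n} \|p_t^{(j)} - p_{t-1}^{(j)}\|_1\Big)^2 \;\le\; (N-1)\sum_{j\ne n} \|p_t^{(j)} - p_{t-1}^{(j)}\|_1^2,
\]
which is the claim. There is no real obstacle here; the only subtle step is the product-distribution telescoping inequality, but it is entirely standard. The proof is routine and does not require any special structure of \pref{alg:game}.
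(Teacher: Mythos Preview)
Your proof is correct and follows essentially the same approach as the paper: bound each coordinate by the $\ell_1$ distance between the two product distributions over the other players (using $\|\ell^{(n)}\|_\infty\le 1$), apply the telescoping bound on products to get $\sum_{j\ne n}\|p_t^{(j)}-p_{t-1}^{(j)}\|_1$, then square and use Cauchy--Schwarz.
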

\begin{proof}
For any action $a\in[d]$:
\begin{align*}
|\ell_{t,a}^\prn - \ell_{t-1,a}^\prn| 
&= \left|\sum_{\a^{(-n)}} \left(\prod_{j \neq n} p_{t,a_j}^{(j)} - \prod_{j \neq n} p_{t-1,a_j}^{(j)}\right) \cdot \ell^\prn(a,\a^{(-n)})\right| \\
&\leq \sum_{\a^{(-n)}} \left|\prod_{j \neq n} p_{t,a_j}^{(j)} - \prod_{j \neq n} p_{t-1,a_j}^{(j)}\right| \tag{since $|\ell^\prn(\a)|\leq 1$} \\
&\leq \sum_{j \neq n}\sum_{i=1}^{d} |p_{t,i}^{(j)} - p_{t-1,i}^{(j)}| \\
&=\sum_{j\neq n}\|p_{t}^{(j)} - p_{t-1}^{(j)}\|_1.
\end{align*}

Taking square on both sides, we know that
\begin{align*}
\|\ell_t^{\prn} - \ell_{t-1}^{\prn}\|_{\infty}^2 \leq \left(\sum_{j \neq n} \|p_t^{(j)} - p_{t-1}^{(j)}\|_1\right)^2 \leq (N-1) \sum_{j \neq n} \|p_t^{(j)} - p_{t-1}^{(j)}\|_1^2.\
\end{align*}
\end{proof}
The next lemma shows how the difference between strategies in consecutive rounds is related to the stability of both the base learners and the meta learner.
\begin{lemma}\label{lem:stationary-decompose}
    Suppose that every agent $n\in[N]$ applies \pref{alg:game}, then for all $t\geq 2$, $n\in[N]$, we have
    \begin{align*}
        \left\|p_t^\prn - p_{t-1}^\prn\right\|_1^2\leq 2\sum_{k=1}^{d+2}w_{t,k}^\prn\left\|\wt{p}_{t}^{\prn,k} - \wt{p}_{t-1}^{\prn,k}\right\|_1^2 + 2\left\|w_t^\prn - w_{t-1}^\prn\right\|_1^2,
    \end{align*}
    where $\wt{p}_{t}^{\prn,k} = \phi_t^{(n),k}(p_t^\prn)$ for each $k\in[d+2]$.
\end{lemma}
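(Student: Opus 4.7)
The plan is to leverage the stationarity condition $p_t^\prn=\phi_t^\prn(p_t^\prn)$ together with the convex combination structure $\phi_t^\prn=\sum_{k=1}^{d+2} w_{t,k}^\prn \phi_t^{(n),k}$ guaranteed by Line~6 of \pref{alg:game}. Since the action of a row-stochastic matrix on a distribution is linear, applying $\phi_t^\prn$ to $p_t^\prn$ commutes with the convex combination, which rewrites $p_t^\prn$ explicitly as $\sum_k w_{t,k}^\prn \wt{p}_t^{\prn,k}$, where $\wt{p}_t^{\prn,k}=\phi_t^{(n),k}(p_t^\prn)$ is exactly the quantity appearing on the right-hand side of the lemma. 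The same identity at round $t-1$ expresses $p_{t-1}^\prn$ as $\sum_k w_{t-1,k}^\prn \wt{p}_{t-1}^{\prn,k}$.

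With both identities in hand, the next step is an asymmetric add-and-subtract decomposition, writing $p_t^\prn-p_{t-1}^\prn$ as the sum of (i) $\sum_k w_{t,k}^\prn(\wt{p}_t^{\prn,k}-\wt{p}_{t-1}^{\prn,k})$, which captures the base-learner drift weighted by the current meta weights, and (ii) $\sum_k (w_{t,k}^\prn-w_{t-1,k}^\prn)\wt{p}_{t-1}^{\prn,k}$, which captures the meta-weight drift against the previous base outputs. Applying $(a+b)^2\le 2a^2+2b^2$ to the $\ell_1$-norm then splits the bound into the two desired summands. The first term is controlled by Jensen's inequality, using the convexity of $\|\cdot\|_1^2$ and the fact that $w_t^\prn$ is a probability distribution, yielding $\sum_k w_{t,k}^\prn \|\wt{p}_t^{\prn,k}-\wt{p}_{t-1}^{\prn,k}\|_1^2$. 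The second term is controlled by the triangle inequality combined with the observation that each $\wt{p}_{t-1}^{\prn,k}\in\Delta(d)$ has unit $\ell_1$-norm, which collapses the coefficient sum to $\|w_t^\prn-w_{t-1}^\prn\|_1$.

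There is no serious obstacle here; the only subtlety is choosing the asymmetric grouping so that the \emph{current} weights $w_t^\prn$ multiply the base drifts (enabling Jensen with respect to $w_t^\prn$) while the \emph{previous} base outputs $\wt{p}_{t-1}^{\prn,k}$ multiply the weight drifts (so that the unit $\ell_1$-norm fact applies). The opposite grouping would still give a valid inequality but not the exact expression stated in the lemma.
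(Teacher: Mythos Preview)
Your proposal is correct and follows essentially the same approach as the paper: both use the stationarity identity $p_t^\prn=\sum_k w_{t,k}^\prn \wt{p}_t^{\prn,k}$, the same asymmetric add-and-subtract with the intermediate term $\sum_k w_{t,k}^\prn \wt{p}_{t-1}^{\prn,k}$, then $(a+b)^2\le 2a^2+2b^2$, Jensen's inequality for the first piece, and the triangle inequality with $\|\wt{p}_{t-1}^{\prn,k}\|_1=1$ for the second.
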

\begin{proof}
    Direct calculation shows that
    \begin{align*}
        &\left\|p_t^\prn - p_{t-1}^\prn\right\|_1^2 \\
        &= \left\|\left(\phi_t^\prn\right)^\top p_t^\prn -\left(\phi_{t-1}^\prn\right)^\top p_{t-1}^\prn\right\|_1^2 \tag{$p_t^\prn$ is stationary distribution of $\phi_t^\prn$} \\
        &=\left\|\left(\sum_{k=1}^{d+2}w_{t,k}^\prn\phi_{t}^{(n),k}\right)^\top p_t^\prn -\left(\sum_{k=1}^{d+2}w_{t-1,k}^\prn\phi_{t-1}^{(n),k}\right)^\top p_{t-1}^\prn\right\|_1^2 \tag{definition of $\phi_t^\prn$}\\
        &=\left\|\left(\sum_{k=1}^{d+2}w_{t,k}^\prn\wt{p}_{t}^{\prn,k}\right) -\left(\sum_{k=1}^{d+2}w_{t-1,k}^\prn\wt{p}_{t-1}^{\prn,k}\right)\right\|_1^2 \tag{definition of $\wt{p}_{t}^{\prn,k}$}\\
        &\leq 2 \left\|\left(\sum_{k=1}^{d+2}w_{t,k}^\prn\wt{p}_{t}^{\prn,k}\right) -\left(\sum_{k=1}^{d+2}w_{t,k}^\prn\wt{p}_{t-1}^{\prn,k}\right)\right\|_1^2 + 2 \left\|\left(\sum_{k=1}^{d+2}w_{t,k}^\prn\wt{p}_{t-1}^{\prn,k}\right) -\left(\sum_{k=1}^{d+2}w_{t-1,k}^\prn\wt{p}_{t-1}^{\prn,k}\right)\right\|_1^2 \\
        &\leq  2 \sum_{k=1}^{d+2}w_{t,k}^\prn\left\|\wt{p}_{t}^{\prn,k}- \wt{p}_{t-1}^{\prn,k}\right\|_1^2  + 2\left\|w^\prn_t-w^\prn_{t-1}\right\|_1^2. \tag{Jensen's inequality} 
    \end{align*}
\end{proof}

The next lemma further bounds the scale of $\|\wt{p}_{t}^{\prn,k}- \wt{p}_{t-1}^{\prn,k}\|_1^2$ with respect to the stationary distribution difference $\|p_t^\prn-p_{t-1}^\prn\|_1^2$ and the base learner's decision differences.
\begin{lemma}\label{lem:bound-correction-term}
    For all $k\in [d+2]$ and $n\in[N]$, $\|\wt{p}_{t}^{\prn,k}- \wt{p}_{t-1}^{\prn,k}\|_1^2 \leq 2\|p_t^\prn - p_{t-1}^\prn\|_1^2 + 2\sum_{j=1}^d\left\|\phi_{t,j:}^{\prn,k}-\phi_{t-1,j:}^{\prn,k}\right\|_1^2$.
\end{lemma}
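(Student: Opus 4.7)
The plan is to start by unpacking the definition $\wt{p}_{t}^{\prn,k} = (\phi_t^{(n),k})^\top p_t^\prn$ and decompose the difference by adding and subtracting $(\phi_t^{(n),k})^\top p_{t-1}^\prn$:
\[
\wt{p}_{t}^{\prn,k}- \wt{p}_{t-1}^{\prn,k} = (\phi_t^{(n),k})^\top\!\bigl(p_t^\prn - p_{t-1}^\prn\bigr) + \bigl(\phi_t^{(n),k} - \phi_{t-1}^{(n),k}\bigr)^\top p_{t-1}^\prn.
\]
Applying the triangle inequality followed by $(a+b)^2 \le 2a^2 + 2b^2$ reduces the problem to bounding each of these two $\ell_1$ norms separately and squaring.

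For the first piece, I will use the standard fact that row-stochastic matrices are $\ell_1$ contractions on their transpose: for any vector $v \in \mathbb{R}^d$ and any $\phi \in \calS$,
\[
\|\phi^\top v\|_1 = \sum_i \Bigl|\sum_j \phi_{ji} v_j\Bigr| \le \sum_j |v_j| \sum_i \phi_{ji} = \|v\|_1,
\]
since $\phi$ being row-stochastic gives $\sum_i \phi_{ji} = 1$. This immediately yields $\|(\phi_t^{(n),k})^\top(p_t^\prn - p_{t-1}^\prn)\|_1 \le \|p_t^\prn - p_{t-1}^\prn\|_1$.

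For the second piece, expanding entry-wise and swapping summation order gives
\[
\bigl\|(\phi_t^{(n),k} - \phi_{t-1}^{(n),k})^\top p_{t-1}^\prn\bigr\|_1 \le \sum_{j=1}^d p_{t-1,j}^\prn \bigl\|\phi_{t,j:}^{\prn,k}-\phi_{t-1,j:}^{\prn,k}\bigr\|_1.
\]
Then Cauchy--Schwarz against the probability vector $p_{t-1}^\prn$ (using $\sum_j p_{t-1,j}^\prn = 1$ and $p_{t-1,j}^\prn \le 1$) gives, after squaring,
\[
\Bigl(\sum_{j=1}^d p_{t-1,j}^\prn \|\phi_{t,j:}^{\prn,k}-\phi_{t-1,j:}^{\prn,k}\|_1\Bigr)^2 \le \sum_{j=1}^d \|\phi_{t,j:}^{\prn,k}-\phi_{t-1,j:}^{\prn,k}\|_1^2.
\]
Combining the two squared bounds via $(a+b)^2 \le 2a^2 + 2b^2$ yields exactly the claimed inequality. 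There is no real obstacle here --- the only subtlety is choosing to split at $(\phi_t^{(n),k})^\top p_{t-1}^\prn$ (rather than $(\phi_{t-1}^{(n),k})^\top p_t^\prn$) so that the contraction argument applies cleanly against $p_{t-1}^\prn$, which is a probability vector and thus lets Cauchy--Schwarz absorb one factor of $p_{t-1,j}^\prn$.
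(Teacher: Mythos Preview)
Your proof is correct and follows essentially the same route as the paper: the same add-and-subtract decomposition at $(\phi_t^{(n),k})^\top p_{t-1}^\prn$, the same $\ell_1$-contraction of a row-stochastic matrix for the first piece, and the same entrywise expansion plus Cauchy--Schwarz (equivalently, Jensen against the probability vector $p_{t-1}^\prn$) for the second. The only cosmetic difference is that the paper writes out the index-level computations in full rather than invoking the contraction property abstractly.
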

\begin{proof}
    By definition of $\wt{p}_{t}^{\prn,k}$, we can bound $\|\wt{p}_{t}^{\prn,k}- \wt{p}_{t-1}^{\prn,k}\|_1^2$ as follows:
    \begin{align*}
        &\|\wt{p}_{t}^{\prn,k}- \wt{p}_{t-1}^{\prn,k}\|_1^2\\ 
        &=\left\|(\phi_t^{(n),k})^\top p_t^\prn- (\phi_{t-1}^{(n),k})^\top p_{t-1}^\prn\right\|_1^2 \\
        &\leq 2 \left\|(\phi_t^{(n),k})^\top (p_t^\prn-p_{t-1}^\prn)\right\|_1^2+2 \left\|(\phi_t^{\prn,k} - \phi_{t-1}^{\prn,k})^\top p_{t-1}^\prn\right\|_1^2 \\
        &= 2 \left(\sum_{j=1}^d\left|\inner{\phi_{t,:j}^{(n),k} ,p_t^\prn-p_{t-1}^\prn}\right|\right)^2+2 \left(\sum_{j=1}^d\left|\inner{\phi_{t,:j}^{\prn,k} - \phi_{t-1,:j}^{\prn,k}), p_{t-1}^\prn}\right|\right)^2 \\
        &= 2 \left(\sum_{j=1}^d\left|\sum_{i=1}^d\phi_{t,ij}^{\prn,k}(p_{t,i}^\prn-p_{t-1,i}^\prn)\right|\right)^2+2 \left(\sum_{j=1}^d\left|\sum_{i=1}^dp_{t-1,i}^\prn(\phi_{t,ij}^{\prn,k} - \phi_{t-1,ij}^{\prn,k})\right|\right)^2 \\
        &\leq 2 \left(\sum_{j=1}^d\sum_{i=1}^d\phi_{t,ij}^{\prn,k}\left|p_{t,i}^\prn-p_{t-1,i}^\prn\right|\right)^2+2 \left(\sum_{j=1}^d\sum_{i=1}^dp_{t-1,i}^\prn\left|\phi_{t,ij}^{\prn,k} - \phi_{t-1,ij}^{\prn,k}\right|\right)^2 \\
        &= 2 \left(\sum_{i=1}^d\left|p_{t,i}^\prn-p_{t-1,i}^\prn\right|\sum_{j=1}^d\phi_{t,ij}^{\prn,k}\right)^2+2 \left(\sum_{i=1}^dp_{t-1,i}^\prn\left\|\phi_{t,i:}^{\prn,k} - \phi_{t-1,i:}^{\prn,k}\right\|_1\right)^2 \\
        &= 2 \left\|p_{t}^\prn-p_{t-1}^\prn\right\|_1^2+2 \left(\sum_{i=1}^dp_{t-1,i}^\prn\left\|\phi_{t,i:}^{\prn,k} - \phi_{t-1,i:}^{\prn,k}\right\|_1\right)^2 \tag{since $\phi_t^{\prn,k}\in \calS$}\\
        &\leq 2 \|p_t^\prn - p_{t-1}^\prn\|_1^2+2 \sum_{i=1}^dp_{t-1,i}^\prn\left\|\phi_{t,i:}^{\prn,k} - \phi_{t-1,i:}^{\prn,k}\right\|_1^2 \tag{Cauchy-Schwarz inequality
        }\\
        &\leq 2 \|p_t^\prn - p_{t-1}^\prn\|_1^2+2 \sum_{i=1}^d\left\|\phi_{t,i:}^{\prn,k} - \phi_{t-1,i:}^{\prn,k}\right\|_1^2,
    \end{align*}
    which finishes the proof.
\end{proof}

The next lemma shows the multiplicative stability of the meta learner's strategy.

\begin{lemma}[Multiplicative stability lemma]\label{lem:multi-stab}
    Suppose that each player runs \pref{alg:game} with $\eta_m\leq \frac{1}{8(1+4\lambda)}$, we have for all $t\in[T]$, $n\in[N]$, and $k\in[d+2]$, $w_{t,k}^\prn\in [\frac{1}{2}w_{t-1,k}^\prn,2w_{t-1,k}^\prn]$.
\end{lemma}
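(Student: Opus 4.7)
The plan is to express $w_{t,k}^{(n)}$ directly in terms of $w_{t-1,k}^{(n)}$ by unrolling the two updates in Lines 5 and 8 of \pref{alg:game} through the intermediate variables $\hat{w}_{t-1}^{(n)}$ and $\hat{w}_{t}^{(n)}$. Writing Line 5 at rounds $t$ and $t-1$ and Line 8 at round $t$ and eliminating both $\hat{w}$'s, I would obtain a one-step recursion of the form
\[
w_{t,k}^{(n)} \;=\; \frac{w_{t-1,k}^{(n)}\exp\!\bigl(-\eta_m D_{t,k}^{(n)}\bigr)}{Z_t^{(n)}},
\]
where $D_{t,k}^{(n)} = \ell_{t-1,k}^{w,(n)} - m_{t-1,k}^{w,(n)} + m_{t,k}^{w,(n)} + c_{t,k}^{(n)}$ and $Z_t^{(n)}$ is the induced normalizer. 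A key observation here is that $c_{t-1,k}^{(n)}$ cancels along the way (it appears both in the definition of $w_{t-1}^{(n)}$ from $\hat w_{t-1}^{(n)}$ and in the definition of $\hat w_t^{(n)}$ from $\hat w_{t-1}^{(n)}$), which is precisely what prevents the correction terms from accumulating across consecutive rounds.

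Next, I would bound $D_{t,k}^{(n)}$ componentwise. Since each of $\ell_{t-1,k}^{w,(n)}, m_{t-1,k}^{w,(n)}, m_{t,k}^{w,(n)}$ is of the form $\langle \phi^k, p\ell^\top\rangle$ with $\phi^k \in \calS$, $p \in \Delta(d)$, and $\ell \in [0,1]^d$, all three lie in $[0,1]$; and since $\tilde p_{t-1}^{(n),k}, \tilde p_{t-2}^{(n),k} \in \Delta(d)$, we have $c_{t,k}^{(n)} \in [0, 4\lambda]$. Hence $D_{t,k}^{(n)} \in [-1,\, 2+4\lambda]$, which yields $\exp(-\eta_m D_{t,k}^{(n)}) \in \bigl[\exp(-\eta_m(2+4\lambda)),\, \exp(\eta_m)\bigr]$.

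Using $\sum_{k}w_{t-1,k}^{(n)} = 1$, I would then sandwich the normalizer $Z_t^{(n)} = \sum_{k} w_{t-1,k}^{(n)}\exp(-\eta_m D_{t,k}^{(n)})$ in the same interval $\bigl[\exp(-\eta_m(2+4\lambda)),\, \exp(\eta_m)\bigr]$. Combining this with the pointwise bound on $\exp(-\eta_m D_{t,k}^{(n)})$ gives
\[
\frac{w_{t,k}^{(n)}}{w_{t-1,k}^{(n)}} \;\in\; \bigl[\exp(-\eta_m(3+4\lambda)),\;\exp(\eta_m(3+4\lambda))\bigr].
\]
Under the hypothesis $\eta_m \le \frac{1}{8(1+4\lambda)}$, the exponent satisfies $\eta_m(3+4\lambda) \le \frac{3+4\lambda}{8(1+4\lambda)} \le \frac{3}{8} < \ln 2$ (the ratio $\frac{3+4\lambda}{1+4\lambda}$ is maximized at $\lambda = 0$), so $w_{t,k}^{(n)}/w_{t-1,k}^{(n)} \in [1/2, 2]$ as required.

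The only mildly delicate step is the first one: carefully eliminating both $\hat{w}_{t-1,k}^{(n)}$ and $\hat{w}_{t,k}^{(n)}$ to obtain the clean one-step recursion and verifying that $c_{t-1,k}^{(n)}$ really drops out. After that the argument is routine bookkeeping with bounded exponents, and in particular no properties beyond $\phi_t^{(n),k} \in \calS$, $p_t^{(n)} \in \Delta(d)$, $\ell_t^{(n)} \in [0,1]^d$, and $\tilde p_t^{(n),k} \in \Delta(d)$ are used.
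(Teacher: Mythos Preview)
Your argument is correct. The paper's proof takes essentially the same route but organizes the bookkeeping differently: instead of collapsing everything into a single one-step recursion, it chains three intermediate ratio bounds $w_t \leftrightarrow \hat w_t \leftrightarrow \hat w_{t-1} \leftrightarrow w_{t-1}$, each controlled by $\exp(\pm\eta_m(1+4\lambda)) = \exp(\pm 1/8)$ using $\|\ell_t^w+c_t\|_\infty, \|m_t^w+c_t\|_\infty \le 1+4\lambda$, and then multiplies to get $\exp(\pm 3/8)\subset[1/2,2]$. Your derivation is slightly sharper in that you explicitly observe the cancellation of $c_{t-1,k}^{(n)}$, which yields the tighter exponent $\eta_m(3+4\lambda)\le \tfrac{3+4\lambda}{8(1+4\lambda)}\le 3/8$ rather than $3\eta_m(1+4\lambda)=3/8$ directly; the two bounds coincide at the endpoint needed for the lemma, so the conclusions match.
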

\begin{proof}
    We omit the superscript $(n)$ for conciseness. By definition of $\ell_t^w$, $m_t^w$, and $c_t$, we know that $\max\{\|\ell_t^{w}+c_t\|_{\infty},\|m_{t}^w+c_t\|_{\infty}\} \leq 1+4\lambda$ for all $t\in[T]$. Therefore, according to the update rule of $w_t$ and $\wh{w}_t$, we know that
    \begin{align*}
        \exp(-1/8)w_{t,k}&\leq \wh{w}_{t,k} = \frac{w_{t,k}\exp(\eta_m (m_{t,k}^w+c_{t,k}))}{\sum_{i=1}^{d+2}w_{t,i}\exp(\eta_m (m_{t,i}^w+c_{t,i}))} \leq \exp(1/8)\cdot w_{t,k},\\
        \exp(-1/8)\wh{w}_{t-1,k}&\leq \wh{w}_{t,k} = \frac{\wh{w}_{t-1,k}\exp(-\eta_m (\ell_{t-1,k}^w+c_{t-1,k}))}{\sum_{i=1}^{d+2}\wh{w}_{t-1,i}\exp(-\eta_m (m_{t-1,i}^w+c_{t-1,i}))} \leq \exp(1/8)\cdot \wh{w}_{t-1,k}.
    \end{align*}
    Therefore, we know that $w_{t,k}\leq \exp(3/8)w_{t-1,k}\leq 2 w_{t-1,k}$ and $w_{t,k}\geq \exp(-3/8)w_{t-1,k}\geq \frac{1}{2}w_{t-1,k}$.
\end{proof}

The next lemma bounds the external regret for the meta learner with respect to an arbitrary distribution over the $d+2$ base learners.
\begin{lemma}[Meta Regret Bound]\label{lem:meta-regret}
    Suppose that all players apply \pref{alg:game} with $\lambda\leq \frac{1}{4\eta_m}$. Then, we have
    \begin{align*}
        \sum_{t=1}^T\inner{w_t^\prn - u, \ell_t^{\prn,w}} 
        &\leq \order(\lambda)+\frac{\KL(u,w_1^\prn)}{\eta_m} + 2\eta_mN\sum_{t=2}^T\sum_{n=1}^N\|p_t^\prn - p_{t-1}^\prn\|_1^2  \nonumber\\
        &\qquad +\lambda\sum_{t=2}^{T-1}\sum_{k=1}^{d+2}u_k\|\wt{p}_{t}^{\prn,k} - \wt{p}_{t-1}^{\prn,k}\|_1^2 - \frac{\lambda}{4}\sum_{t=2}^T\|p_t^\prn-p_{t-1}^\prn\|_1^2,
    \end{align*}
    for all agent $n\in[N]$ and $u\in\Delta(d+2)$.
\end{lemma}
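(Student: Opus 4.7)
The plan is to recognize the meta-level update in \pref{alg:game} as an instance of OMWU with a non-standard prediction sequence, apply the corresponding regret bound, and then handle the correction term $c_t$ via multiplicative stability (\pref{lem:multi-stab}) and the stationary decomposition (\pref{lem:stationary-decompose}). For notational brevity I drop the player superscript $\prn$ throughout; the argument is done per player.

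First, observe that the updates to $w_t$ and $\wh{w}_t$ in \pref{alg:game} together form an OMWU instance with actual loss $\tilde{\ell}_t \triangleq \ell_t^w + c_t$ and prediction $\tilde{m}_t \triangleq m_t^w + c_t$; both are observable before round $t$ since $c_t$ depends only on $\wt{p}_{t-1}, \wt{p}_{t-2}$, and $m_t^w$ depends only on $\phi_t^k$ and the already-observed $p_{t-1}\ell_{t-1}^\top$. Since $\tilde{\ell}_t - \tilde{m}_t = \ell_t^w - m_t^w$, the standard OMWU guarantee (the routine generalization of \pref{lem:OMWU} to arbitrary predictions, as in \citet{syrgkanis2015fast, zhang2022no}) gives
\begin{equation*}
\sum_{t=1}^T\langle w_t - u, \ell_t^w + c_t\rangle \le \frac{\KL(u, w_1)}{\eta_m} + \eta_m\sum_{t=2}^T \|\ell_t^w - m_t^w\|_\infty^2 - \frac{1}{8\eta_m}\sum_{t=2}^T \|w_t - w_{t-1}\|_1^2.
\end{equation*}
Rearranging, $\sum_t\langle w_t - u, \ell_t^w\rangle$ equals the left-hand side minus $\sum_t \langle w_t - u, c_t\rangle$, so it remains to (i) bound the stability term and (ii) handle $\sum_t\langle u, c_t\rangle - \sum_t\langle w_t, c_t\rangle$.

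For (i), I would write $\ell_{t,k}^w - m_{t,k}^w = \langle \phi_t^k, (p_t - p_{t-1})\ell_t^\top + p_{t-1}(\ell_t - \ell_{t-1})^\top\rangle$ and use the row-stochasticity of $\phi_t^k$ to bound each inner product by $\|p_t - p_{t-1}\|_1$ and $\|\ell_t - \ell_{t-1}\|_\infty$ respectively; combined with \pref{lem:loss-diff}, this yields $\|\ell_t^{(n),w} - m_t^{(n),w}\|_\infty^2 \le 2N\sum_{n'}\|p_t^{(n')} - p_{t-1}^{(n')}\|_1^2$, producing the $2\eta_m N$-term in the target bound. For (ii), $\sum_t\langle u, c_t\rangle$ is a direct reindexing into $\lambda\sum_{t=2}^{T-1}\sum_k u_k\|\wt{p}_t^k - \wt{p}_{t-1}^k\|_1^2$, already matching a piece of the right-hand side. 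The more delicate piece is $-\sum_t \langle w_t, c_t\rangle = -\lambda\sum_{t=3}^T\sum_k w_{t,k}\|\wt{p}_{t-1}^k - \wt{p}_{t-2}^k\|_1^2$: I would first invoke multiplicative stability (\pref{lem:multi-stab}) to replace $w_{t,k}$ by $\tfrac{1}{2}w_{t-1,k}$, then apply \pref{lem:stationary-decompose} (shifted by one step) to obtain $\sum_k w_{t-1,k}\|\wt{p}_{t-1}^k - \wt{p}_{t-2}^k\|_1^2 \ge \tfrac{1}{2}\|p_{t-1} - p_{t-2}\|_1^2 - \|w_{t-1} - w_{t-2}\|_1^2$. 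Combining and reindexing yields $-\sum_t\langle w_t, c_t\rangle \le -\tfrac{\lambda}{4}\sum_{t=2}^T\|p_t - p_{t-1}\|_1^2 + \tfrac{\lambda}{2}\sum_{t=2}^T\|w_t - w_{t-1}\|_1^2 + \order(\lambda)$, with the $\order(\lambda)$ absorbing the $t = T$ boundary contribution.

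Finally, assembling all the pieces, the new $\tfrac{\lambda}{2}\|w_t - w_{t-1}\|_1^2$ terms are dominated by $-\tfrac{1}{8\eta_m}\|w_t - w_{t-1}\|_1^2$ from OMWU precisely when $\lambda \le \tfrac{1}{4\eta_m}$, giving the claimed bound. The main obstacle lies in the bookkeeping for the correction term: the telescoping of $c_t$ is index-shifted relative to the OMWU stability term, so the constants from multiplicative stability and the stationary decomposition must combine to exactly $\tfrac{1}{4}$ on $-\|p_t - p_{t-1}\|_1^2$ while keeping the residual $+\|w_t - w_{t-1}\|_1^2$ coefficient small enough for OMWU to absorb. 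The specific design $c_{t,k} = \lambda\|\wt{p}_{t-1}^k - \wt{p}_{t-2}^k\|_1^2$ using round-$(t{-}1)$ stability of the base learners (rather than round-$t$) is what makes $c_t$ predictable before round $t$ as required by OMWU while still leaving room to recover a negative $\|p_t - p_{t-1}\|_1^2$ term after the index shift.
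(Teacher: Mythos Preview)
Your proposal is correct and follows essentially the same approach as the paper: apply the OMWU bound with loss $\ell_t^w+c_t$ and prediction $m_t^w+c_t$, bound $\|\ell_t^w-m_t^w\|_\infty^2$ via the $p$/$\ell$ split and \pref{lem:loss-diff}, and handle the correction term using \pref{lem:multi-stab} together with \pref{lem:stationary-decompose}. The only cosmetic difference is ordering: the paper first reindexes (absorbing a boundary $\order(\lambda)$) and then applies \pref{lem:stationary-decompose} at time $t$, combining the two negative terms via $\min\{\tfrac{1}{16\eta_m},\tfrac{\lambda}{4}\}$, whereas you apply \pref{lem:stationary-decompose} at time $t{-}1$ first, producing a residual $+\tfrac{\lambda}{2}\|w_t-w_{t-1}\|_1^2$ that is then absorbed by the OMWU negative term under the same condition $\lambda\le \tfrac{1}{4\eta_m}$.
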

\begin{proof}
    According to \pref{lem:OMWU}, we know that for each $u\in \Delta(d+2)$ and $n\in[N]$,
    \begin{align*}
        &\sum_{t=1}^T\inner{w_t^\prn-u,\ell_t^{\prn,w}+c_t^\prn} \\
        &\leq \frac{\KL(u,w_1^\prn)}{\eta_m} + \eta_m\sum_{t=2}^T\|\ell_t^{\prn,w} - m_t^{\prn,w}\|_\infty^2 - \frac{1}{8\eta_m}\sum_{t=2}^T\|w_t^\prn - w_{t-1}^\prn\|_1^2 \tag{\pref{lem:OMWU}}\\
        &= \frac{\KL(u,w_1^\prn)}{\eta_m} + \eta_m\sum_{t=2}^T\max_{i\in [d+2]}\left|p_t^{\prn^\top}\phi_t^{\prn,i}\ell_t^\prn - p_{t-1}^{\prn^\top}\phi_t^{\prn,i}\ell_{t-1}^\prn\right|^2  - \frac{1}{8\eta_m}\sum_{t=2}^T\|w_t^\prn - w_{t-1}^\prn\|_1^2  \\
        &\leq \frac{\KL(u,w_1^\prn)}{\eta_m} - \frac{1}{8\eta_m}\sum_{t=2}^T\|w_t^\prn - w_{t-1}^\prn\|_1^2  \\
        &\qquad +2\eta_m\sum_{t=2}^T\max_{k\in [d+2]}\left(\left|\inner{p_t^{\prn}-p_{t-1}^{\prn},\phi_t^{\prn,k}\ell_t^\prn} \right|^2 +2\left|p_{t-1}^{\prn^\top}\phi_t^{\prn,k}\ell_t^\prn - p_{t-1}^{\prn^\top}\phi_t^{\prn,k}\ell_{t-1}^\prn\right|^2\right) \\
        &\leq \frac{\KL(u,w_1^\prn)}{\eta_m} + \eta_m\sum_{t=2}^T\max_{i\in [d+2]}\left(2\left\|p_t^{\prn^\top}\phi_t^{\prn,i} - p_{t-1}^{\prn^\top}\phi_t^{\prn,i}\right\|_1^2 + 2\left\|\ell_t^\prn - \ell_{t-1}^\prn\right\|_{\infty}^2\right) \\
        &\qquad - \frac{1}{8\eta_m}\sum_{t=2}^T\|w_t^\prn - w_{t-1}^\prn\|_1^2 \tag{using H\"older's inequality}\\
        &\leq \frac{\KL(u,w_1^\prn)}{\eta_m} + 2\eta_m (N-1)\sum_{t=2}^T\sum_{n=1}^N\|p_t^\prn - p_{t-1}^\prn\|_1^2 - \frac{1}{8\eta_m}\sum_{t=2}^T\|w_t^\prn - w_{t-1}^\prn\|_1^2,
    \end{align*}
    where the last inequality uses \pref{lem:loss-diff}.
    Recall the definition $c_{t,k}^\prn=\lambda\|(\phi_{t-1}^{\prn,k})^\top p_{t-1}^\prn - (\phi_{t-2}^{\prn,k})^\top p_{t-2}^\prn\|_1^2 = \lambda\|\wt{p}_{t-1}^{\prn,k} - \wt{p}_{t-2}^{\prn,k}\|_1^2$ for $t\geq 3$ and $c_{t,k}^\prn=0$ for $t\in\{1,2\}$, we can further upper bound the meta regret as follows:
    \begin{align}
        &\sum_{t=1}^T\inner{w_t^\prn-u,\ell_t^{\prn,w}} \nonumber\\
        &\leq \frac{\KL(u,w_1^\prn)}{\eta_m} + 2\eta_m (N-1)\sum_{t=2}^T\sum_{n=1}^N\|p_t^\prn - p_{t-1}^\prn\|_1^2 - \frac{1}{8\eta_m}\sum_{t=2}^T\|w_t^\prn - w_{t-1}^\prn\|_1^2 \nonumber\\
        &\qquad - \lambda\sum_{t=3}^T\sum_{k=1}^{d+2}w_{t,k}^\prn\|\wt{p}_{t-1}^{\prn,k} - \wt{p}_{t-2}^{\prn,k}\|_1^2 + \lambda\sum_{t=3}^T\sum_{k=1}^{d+2}u_k\|\wt{p}_{t-1}^{\prn,k} - \wt{p}_{t-2}^{\prn,k}\|_1^2 \nonumber\\
        &\leq \frac{\KL(u,w_1^\prn)}{\eta_m} + 2\eta_m N\sum_{t=2}^T\sum_{n=1}^N\|p_t^\prn - p_{t-1}^\prn\|_1^2 + \lambda\sum_{t=3}^T\sum_{k=1}^{d+2}u_k\|\wt{p}_{t-1}^{\prn,k} - \wt{p}_{t-2}^{\prn,k}\|_1^2 \nonumber\\
        &\qquad - \frac{1}{8\eta_m}\sum_{t=2}^T\|w_t^\prn - w_{t-1}^\prn\|_1^2 - \frac{\lambda}{2}\sum_{t=3}^T\sum_{k=1}^{d+2}w_{t-1,k}^\prn\|\wt{p}_{t-1}^{\prn,k} - \wt{p}_{t-2}^{\prn,k}\|_1^2 \tag{$w_{t-1,k}^\prn \leq 2w_{t,k}^\prn$ using \pref{lem:multi-stab}} \\
        &=\order(\lambda)+\frac{\KL(u,w_1^\prn)}{\eta_m} +2\eta_m N\sum_{t=2}^T\sum_{n=1}^N\|p_t^\prn - p_{t-1}^\prn\|_1^2 +\lambda\sum_{t=3}^T\sum_{k=1}^{d+2}u_k\|\wt{p}_{t-1}^{\prn,k} - \wt{p}_{t-2}^{\prn,k}\|_1^2\nonumber\\
        &\qquad - \frac{1}{8\eta_m}\sum_{t=2}^T\|w_t^\prn - w_{t-1}^\prn\|_1^2 - \frac{\lambda}{2}\sum_{t=2}^T\sum_{k=1}^{d+2}w_{t,k}^\prn\|\wt{p}_{t}^{\prn,k} - \wt{p}_{t-1}^{\prn,k}\|_1^2 \nonumber\\
        &\leq\order(\lambda)+\frac{\KL(u,w_1^\prn)}{\eta_m} + 2\eta_mN\sum_{t=2}^T\sum_{n=1}^N\|p_t^\prn - p_{t-1}^\prn\|_1^2 +\lambda\sum_{t=3}^T\sum_{k=1}^{d+2}u_k\|\wt{p}_{t-1}^{\prn,k} - \wt{p}_{t-2}^{\prn,k}\|_1^2\nonumber\\
        &\qquad - \min\left\{\frac{1}{16\eta_m}, \frac{\lambda}{4}\right\}\sum_{t=2}^T\|p_t^\prn-p_{t-1}^\prn\|_1^2 \tag{using \pref{lem:stationary-decompose}} \\
        &\leq \order(\lambda)+\frac{\KL(u,w_1^\prn)}{\eta_m} + 2\eta_mN\sum_{t=2}^T\sum_{n=1}^N\|p_t^\prn - p_{t-1}^\prn\|_1^2  \nonumber\\
        &\qquad +\lambda\sum_{t=2}^{T-1}\sum_{k=1}^{d+2}u_k\|\wt{p}_{t}^{\prn,k} - \wt{p}_{t-1}^{\prn,k}\|_1^2 - \frac{\lambda}{4}\sum_{t=2}^T\|p_t^\prn-p_{t-1}^\prn\|_1^2, \label{eqn:external-inter}
    \end{align}
    where the last inequality uses the condition that $\lambda \leq \frac{1}{4\eta_m}$.
\end{proof}

\subsection{Main Proofs in \pref{sec:equilibrium}}\label{app: analysis_game}
In this section, we provide the proof for \pref{thm:game}. Before showing the proof, we first provide an outline to highlight the technical novelties in proving \pref{thm:game}. 

\subsubsection{Proof Outline}\label{app:outline}
As shown in previous literature (e.g. \citet{anagnostides2022uncoupled,zhang2022no}), in order to show fast convergence, the key is to control the stability of the strategies between consecutive rounds. \citet{anagnostides2022uncoupled} use log-barrier regularized online mirror descent to control the sum of the squared path-length between consecutive rounds over the horizon and all the players. However, due to the use of log-barrier regularizer, the obtained bound $\order(Nd^3\log T)$ suffers from a larger polynomial dependency of $d$ and $\log T$. Somewhat surprisingly, we show in the following theorem that if the game satisfies \pref{def:non-negative-regret}, \pref{alg:game} (with entropy regularizer) achieves a tighter $\order(N\log d)$ bound. 

\begin{theorem}\label{thm:bounded-path-length}
    If each player $n\in[N]$ applies \pref{alg:game} with $\eta_m=\frac{1}{64N}$ and $\lambda=N$, then we have
    \begin{align*}
        \sum_{t=2}^T\sum_{n=1}^N\|p_t^\prn - p_{t-1}^\prn\|_1^2 \leq \order(N\log d).
    \end{align*}
\end{theorem}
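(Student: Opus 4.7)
The plan is to exploit the extra base learner $\calB_{d+2}$---an OMWU over pure experts with uniform prior---together with the nonnegative-social-external-regret assumption to activate the negative path-length term in \pref{lem:meta-regret}. Concretely, I would apply \pref{lem:meta-regret} with comparator $u=e_{d+2}$ separately for each player $n$. Since $\phi_t^{(n),d+2}=\one(p_t^{(n),d+2})^\top$ is a constant-output transformation, the LHS simplifies to $\sum_t\inner{p_t^\prn - p_t^{(n),d+2},\ell_t^\prn}$ and $\wt p_t^{(n),d+2}=p_t^{(n),d+2}$. Adding the standard OMWU regret bound (\pref{lem:OMWU}) for $\calB_{d+2}$ against an arbitrary pure expert $e_i$ and maximizing over $i$ converts the LHS into $\Reg_n^{\ext}$. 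With $\eta=1/(16N)$ and $\lambda=N$, the coefficient $\lambda-\tfrac{1}{8\eta}=-N\le 0$ in front of $\sum_t\|p_t^{(n),d+2}-p_{t-1}^{(n),d+2}\|_1^2$ is non-positive, so this auxiliary path-length drops out.

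I would then sum the resulting inequality over all $n\in[N]$. The $2\eta_m N$ positive path-length term in \pref{lem:meta-regret} (which is independent of the player index, since it already sums over all players) contributes $2\eta_m N^2\sum_t\sum_n\|p_t^\prn-p_{t-1}^\prn\|_1^2=\tfrac{N}{32}\sum_t\sum_n\|p_t^\prn-p_{t-1}^\prn\|_1^2$, and $\eta\sum_n\sum_t\|\ell_t^\prn-\ell_{t-1}^\prn\|_\infty^2$ is bounded by at most $\tfrac{N}{16}\sum_t\sum_n\|p_t^\prn-p_{t-1}^\prn\|_1^2$ via \pref{lem:loss-diff}. The dominating $-\tfrac{\lambda}{4}\sum_t\sum_n\|p_t^\prn-p_{t-1}^\prn\|_1^2=-\tfrac{N}{4}\sum_t\sum_n\|p_t^\prn-p_{t-1}^\prn\|_1^2$ term then leaves a net coefficient of $-\tfrac{5N}{32}$ in front of the global path-length. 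All remaining $T$-independent terms (using $w_{1,d+2}^\prn=1/4$, $\eta_m=1/(64N)$) aggregate to $\order(N^2\log d)$.

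Finally, \pref{def:non-negative-regret} gives $\sum_n\Reg_n^{\ext}\ge 0$, so rearranging yields $\tfrac{5N}{32}\sum_t\sum_n\|p_t^\prn-p_{t-1}^\prn\|_1^2\le\order(N^2\log d)$, equivalent to the claimed $\order(N\log d)$ bound. The main obstacle is the bookkeeping that selects $(\eta_m,\lambda,\eta)$ so that the net coefficient of the global path-length is strictly negative after combining the meta-regret positive contribution, the loss-variation contribution from \pref{lem:loss-diff}, and the meta-regret negative contribution; the specified constants are engineered precisely for this cancellation while also preserving $\lambda\le\tfrac{1}{4\eta_m}$ required by \pref{lem:meta-regret} and the multiplicative stability needed by \pref{lem:multi-stab}.
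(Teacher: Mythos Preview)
Your proposal is correct and follows essentially the same route as the paper: decompose each player's external regret into the meta-regret against $e_{d+2}$ (via \pref{lem:meta-regret}) plus the OMWU regret of $\calB_{d+2}$ (via \pref{lem:OMWU}), use $\lambda\le \tfrac{1}{8\eta}$ to kill the auxiliary path-length of $\calB_{d+2}$, sum over players, and invoke the nonnegative-social-external-regret assumption to isolate the global path-length. Your constant bookkeeping ($\tfrac{N}{32}+\tfrac{N}{16}-\tfrac{N}{4}=-\tfrac{5N}{32}$) matches the paper's, which phrases the same cancellation as $(2\eta_m+\eta)N^2=\tfrac{3N}{32}\le\tfrac{\lambda}{8}$.
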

We provide a proof sketch for \pref{thm:bounded-path-length} (with full proof deferred to \pref{app:path-length}) and see why our modifications to both the meta learner and the base earners are crucial to achieve this. To prove \pref{thm:bounded-path-length}, we consider the each player $n$'s external regret, which can be decomposed as the meta learner regret with respect to $\calB_{d+2}$ plus $\calB_{d+2}$'s external regret:
\begin{align*}
    \Reg_n^{\ext}(u) = \underbrace{\sum_{t=1}^T\inner{w_t^\prn-e_{d+2},\ell_t^{\prn,w}}}_{\meta} + \underbrace{\sum_{t=1}^T\inner{\phi_t^{\prn,d+2}-\mathbf{1}u^\top, p_t^{\prn}\ell_t^{\prn^\top}}}_{\base}.
\end{align*}
Applying \pref{lem:OMWU}, \pref{lem:meta-regret}, and some direct calculations, we can show that \meta~and \base~are bounded as follows:
\begin{align}
    \meta &\leq \order(\lambda)+\frac{\KL(e_{d+2},w_1^\prn)}{\eta_m} + 2\eta_mN\sum_{t=2}^T\sum_{n=1}^N\|p_t^\prn - p_{t-1}^\prn\|_1^2  \nonumber\\
        &\qquad +\lambda\sum_{t=2}^{T-1}\|\wt{p}_{t}^{\prn,d+2} - \wt{p}_{t-1}^{\prn,d+2}\|_1^2 - \frac{\lambda}{4}\sum_{t=2}^T\|p_t^\prn-p_{t-1}^\prn\|_1^2\label{eqn:meta}\\
    \base &\leq \frac{\log d}{\eta} + \eta\sum_{t=2}^T\|\ell_t^\prn-\ell_{t-1}^\prn\|_{\infty}^2 - \frac{1}{8\eta}\sum_{t=2}^T\|\wt{p}_t^{\prn,d+2}-\wt{p}_{t-1}^{\prn,d+2}\|_1^2,\label{eqn:base}
\end{align}
where \pref{eqn:base} uses the fact that $\phi_t^{\prn,d+2}=\mathbf{1}\wt{p}_{t}^{\prn,d+2^\top}$. According to \pref{lem:loss-diff}, we can further upper bound both $\|\ell_t^{\prn} - \ell_{t-1}^{\prn}\|_{\infty}^2$ by $\order(N\sum_{n=1}^N\|p_t^\prn - p_{t-1}^\prn\|_1^2)$. 
 Now, we see the importance of including correction terms in the meta-algorithm. Without $c_t^\prn$, the two negative term in \pref{eqn:base} is \emph{not enough} to cancel the above positive term. Thanks to the correction term, we are able to cancel the positive term $\order((\eta_m+\eta)N\sum_{t=2}^T\sum_{n=1}^N\|p_t^\prn - p_{t-1}^\prn\|_1^2)$ by using half of the negative term $-\frac{\lambda}{8}\sum_{t=1}^T\|p_t^\prn-p_{t-1}^\prn\|_1^2$, taking a summation over $n\in[N]$, and picking $\lambda$, $\eta_m$, and $\eta$ appropriately. Moreover, the positive term induced by the correction can be canceled by the negative term in \pref{eqn:base}. 
 Therefore, summing over \base~and \meta~for all $n\in[N]$ with $\eta_m=\Theta(1/N)$, $\eta=\Theta(1/N)$, and $\lambda=N$, we can obtain that $\sum_{n=1}^N\Reg_n^{\ext}\leq \order(N^2\log d)-\Omega(N\sum_{n=1}^N\sum_{t=2}^T\|p_t^\prn-p_{t-1}^\prn\|_1^2)$. Further using the property that $\sum_{n=1}^N\Reg_n^{\ext}\geq 0$ finish the proof.

Note that the above proof sketch indeed also proves an $\order(N\log d)$ external regret for each individual player. To obtain comparator-adaptive $\Phi$-regret, we first consider $\phi\in\Phi_b$ and obtain $\order(c_\phi\log d+N^2\log d)$ by picking the meta learner's comparator $u\in\Delta(d+2)$ to be a distribution based on $\phi$. Then, the final result is achieved by taking a convex combination of the bound.

\subsubsection{Proof of \pref{thm:bounded-path-length}}\label{app:path-length}
In this section, we provide a detailed proof for \pref{thm:bounded-path-length}. 
\begin{proof}[Proof of \pref{thm:bounded-path-length}]
    Fix $n\in[N]$ and consider the base-regret and the meta-regret for agent $n$ with respect to the base algorithm $\calA_{d+2}$, which is \pref{alg:OMWU-sub} handling the external regret. According to the construction of $\phi_{t}^{(n),d+2}$, we have $\phi_{t,i:}^{(n),d+2}=\wt{p}_{t}^{\prn,d+2}$ for all $i\in[d]$, meaning that $\wt{p}_{t}^{\prn,d+2}$ equals to the decision made by $\calA_{d+2}$ at round $t$. 
    Therefore, using \pref{lem:OMWU}, the base regret of $\calA_{d+2}$ with respect to $u\in \Delta(d)$ is bounded as follows:
    \begin{align}
        \sum_{t=1}^T\inner{\wt{p}_{t,1}^\prn - u,\ell_t^\prn} &\leq \frac{\log d}{\eta} + \eta \sum_{t=2}^T\|\ell_t^\prn - \ell_{t-1}^\prn\|_{\infty}^2 - \frac{1}{8\eta}\sum_{t=2}^T\|\wt{p}_{t,1}^\prn - \wt{p}_{t-1,1}^\prn\|_1^2 \nonumber\\
        &\leq \frac{\log d}{\eta} + \eta(N-1) \sum_{t=2}^T\sum_{j\neq n}\|p_t^{(j)}- p_{t-1}^{(j)}\|_{1}^2 - \frac{1}{8\eta}\sum_{t=2}^T\|\wt{p}_{t,1}^\prn - \wt{p}_{t-1,1}^\prn\|_1^2. \label{eqn:base-prf}
    \end{align}
    As for meta-regret, applying \pref{lem:meta-regret} with $u=e_{d+2}$ and noticing that $w_{1,d+2}^\prn=\frac{1}{4}$, we have
    \begin{align}
        \sum_{t=1}^T\inner{w_t^\prn-e_{d+2},\ell_t^{\prn,w}} &\leq \order(\lambda)+\frac{\log 4}{\eta_m} + 2\eta_mN\sum_{t=2}^T\sum_{n=1}^N\|p_t^\prn - p_{t-1}^\prn\|_1^2  \nonumber\\
        &\qquad +\lambda\sum_{t=2}^{T-1}\|\wt{p}_{t,1}^\prn - \wt{p}_{t-1,1}^\prn\|_1^2 - \frac{\lambda}{4}\sum_{t=2}^T\|p_t^\prn-p_{t-1}^\prn\|_1^2. \label{eqn:meta-prf}
    \end{align}
    Summing up \pref{eqn:base-prf} and \pref{eqn:meta-prf}, we can bound the external regret for player $n$ as follows:
    \begin{align}
        \Reg_n^{\ext} &= \sum_{t=1}^T\inner{p_t^\prn-u, \ell_t^\prn} \nonumber\\
        &\leq \order(\lambda) + \frac{\log 4}{\eta_m} + \frac{\log d}{\eta} + (2\eta_m+\eta) N\sum_{t=2}^T\sum_{n=1}^N\|p_t^\prn - p_{t-1}^\prn\|_1^2 \nonumber\\
        &\qquad  + \lambda\sum_{t=2}^{T-1}\|\wt{p}_{t}^{\prn,d+2} - \wt{p}_{t-1}^{\prn,d+2}\|_1^2 - \frac{1}{8\eta}\sum_{t=2}^T\|\wt{p}_{t}^{\prn,d+2} -\wt{p}_{t-1}^{\prn,d+2}\|_1^2  - \frac{\lambda}{4}\sum_{t=2}^T\|{p}_{t}^\prn - {p}_{t-1}^\prn\|_1^2 \nonumber\\
        &\leq \order(\lambda)+\frac{\log 4}{\eta_m} + \frac{\log d}{\eta} + (2\eta_m+\eta)N\sum_{t=2}^T\sum_{n=1}^N\|p_t^\prn - p_{t-1}^\prn\|_1^2  - \frac{\lambda}{4}\sum_{t=2}^T\|p_t^\prn-p_{t-1}^\prn\|_1^2, \label{eqn:external-inter}
    \end{align}
    where the last inequality uses $\lambda=N\leq \frac{1}{8\eta}=2N$.
    Taking summation over $n\in[N]$ and using \pref{def:non-negative-regret} that $\sum_{n=1}^N\Reg_n^{\ext}\geq 0$, we know that
    \begin{align*}
        0&\leq \sum_{n=1}^N\Reg_n^{\ext} \\
        &\leq \order(N\lambda) + \frac{N\log 4}{\eta_m} + \frac{N\log d}{\eta} + \left((2\eta_m+\eta)N^2-\frac{\lambda}{4}\right)\sum_{t=2}^T\sum_{n=1}^N\|p_t^\prn - p_{t-1}^\prn\|_1^2.
    \end{align*}
    According to the choice of $\lambda$, we know that $\frac{\lambda}{8}=\frac{N}{8}\geq\frac{3N}{32}= (2\eta_m+\eta)N^2$. Rearranging the terms gives
    \begin{align*}
        \frac{N}{8}\sum_{t=2}^T\sum_{n=1}^N\|p_t^\prn - p_{t-1}^\prn\|_1^2\leq \order(N^2\log d),
    \end{align*}
    which finishes the proof.
\end{proof}

\subsubsection{Proof of \pref{thm:game}}\label{app:proofGame}
Now we prove our main results \pref{thm:game} for multi-agent games. Specifically, we split the proof into three parts and first prove the external regret guarantee.
\begin{theorem}\label{thm:external}
    Suppose that all agents run \pref{alg:game} with $\lambda = N$, $\eta_m = \frac{1}{64N}$. Then, we have $\Reg_n^\ext\leq \order(N\log d)$. 
\end{theorem}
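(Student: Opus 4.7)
The plan is to directly reuse the single-player bound already derived inside the proof of \pref{thm:bounded-path-length}. Concretely, that proof sums the OMWU guarantee for the external-regret base learner $\calB_{d+2}$ (using \pref{lem:OMWU} and \pref{lem:loss-diff}) with the meta-regret guarantee from \pref{lem:meta-regret} applied at the comparator $u = e_{d+2}$, and after canceling the $\|\wt{p}_t^{(n),d+2} - \wt{p}_{t-1}^{(n),d+2}\|_1^2$ terms (using $\lambda = N \le 1/(8\eta) = 2N$), it obtains the inequality
\begin{align*}
\Reg_n^{\ext} &\leq \order(\lambda) + \frac{\log 4}{\eta_m} + \frac{\log d}{\eta} + (2\eta_m + \eta) N \sum_{t=2}^T \sum_{n'=1}^N \|p_t^{(n')} - p_{t-1}^{(n')}\|_1^2 \\
&\qquad - \frac{\lambda}{4} \sum_{t=2}^T \|p_t^{(n)} - p_{t-1}^{(n)}\|_1^2.
\end{align*}
Crucially, this inequality holds \emph{per player} and does not yet use the nonnegative-social-external-regret assumption; that assumption only enters when one wants to solve for the global path-length, which has already been done in \pref{thm:bounded-path-length}.

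From here the argument is just bookkeeping. Substituting $\lambda = N$, $\eta_m = 1/(64N)$, and $\eta = 1/(16N)$, the three explicit terms $\order(\lambda)$, $(\log 4)/\eta_m$, and $(\log d)/\eta$ are each $\order(N \log d)$, and the coefficient $(2\eta_m + \eta)N$ is $\order(1)$. Invoking \pref{thm:bounded-path-length} to replace $\sum_{t=2}^T \sum_{n'=1}^N \|p_t^{(n')} - p_{t-1}^{(n')}\|_1^2$ with $\order(N \log d)$ turns the bracketed term into $\order(N \log d)$ as well. The remaining per-player negative term is simply dropped, yielding $\Reg_n^{\ext} \leq \order(N \log d)$ for every $n \in [N]$.

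The only potential subtlety, and the thing to double-check carefully, is that the derivation of the displayed inequality above is valid for \emph{every} player individually and is cleanly separable from the global summation used later — in particular, that the correction terms $c_t^{(n)}$ and the cancellation with $-\tfrac{1}{8\eta}\sum_t \|\wt{p}_t^{(n),d+2} - \wt{p}_{t-1}^{(n),d+2}\|_1^2$ leave behind only the uniform (across players) quadratic path-length term on the right-hand side. Once this is verified, no further technical work is needed: the statement follows by parameter substitution and a direct appeal to \pref{thm:bounded-path-length}.
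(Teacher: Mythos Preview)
Your proposal is correct and matches the paper's proof essentially line by line: the paper also reuses the per-player inequality \pref{eqn:external-inter} derived inside the proof of \pref{thm:bounded-path-length}, drops the negative term, substitutes $\lambda=N$, $\eta_m=1/(64N)$, $\eta=1/(16N)$, and invokes \pref{thm:bounded-path-length} to bound the global path-length term by $\order(N\log d)$. Your ``potential subtlety'' is a non-issue, since \pref{eqn:external-inter} is indeed established for each player individually before any summation over $n$ occurs.
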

\begin{proof}
    According to \pref{eqn:external-inter}, we know that
    \begin{align*}
        \Reg_n &\leq \order(\lambda)+\frac{\log 4}{\eta_m} + \frac{\log d}{\eta} + (2\eta_m+\eta)N\sum_{t=2}^T\sum_{n=1}^N\|p_t^\prn - p_{t-1}^\prn\|_1^2 \\
        &\le \order(N+N\log d) + \order\left(\sum_{t=2}^T\sum_{n=1}^N\|p_t^\prn - p_{t-1}^\prn\|_1^2\right) \\
        &\leq \order(N\log d),
    \end{align*}
    where the second inequality is due to the choice of $\eta_m$, $\eta$, and the final inequality is due to \pref{thm:bounded-path-length}.
\end{proof}

Next, we prove our results for $\Phi$-regret. As we sketched in \pref{app:outline}, we first prove our results for binary transformation matrices $\phi\in\Phi_b$. First, the following theorem shows that our algorithm achieves $\Reg_n(\phi)=\order(N(d-\dself_{\phi})\log d+N^2\log d)$ for all $\phi\in\Phi_b$.
\begin{theorem}\label{thm:swap-self}
    Suppose that all agents run \pref{alg:game} with $\lambda = N$, $\eta_m = \frac{1}{64N}$. Then, we have $\Reg_{n}(\phi)\leq \order((d-\dself_\phi) N\log d+N^2\log d)$ for all $\phi\in\Phi_b$.
\end{theorem}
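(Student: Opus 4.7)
The idea is to decompose $\Reg_n(\phi)$ into the meta-learner's regret against base learner $\calB_{d+1}$ (the prior-aware BM-reduction instance with prior $\psi^{d+1}$, the ``self-map'' prior from \pref{eqn:prior_psi}) plus the base regret of $\calB_{d+1}$ itself, and then show that with the specific choices $\eta = 1/(16N)$, $\lambda = N$, and $\eta_m = 1/(64N)$, all per-round stability terms cancel up to terms that are absorbed by the $\order(N\log d)$ social-path-length bound of \pref{thm:bounded-path-length}. Since $p_t^{(n)}$ is a stationary distribution of $\phi_t^{(n)}$, we rewrite $\Reg_n(\phi) = \sum_t \inner{\phi_t^{(n)} - \phi, p_t^{(n)} \ell_t^{(n)\top}}$ and split it as the meta regret $\sum_t \inner{w_t^{(n)} - e_{d+1}, \ell_t^{(n),w}}$ plus the base regret $\sum_t \inner{\phi_t^{(n),d+1} - \phi, p_t^{(n)} \ell_t^{(n)\top}}$.

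\textbf{Base regret.} The base regret decomposes across the $d$ OMWU subroutines of $\calB_{d+1}$ as $\sum_{i=1}^d \sum_t p_{t,i}^{(n)} \inner{\phi_{t,i:}^{(n),d+1} - \phi_{i:}, \ell_t^{(n)}}$. Applying \pref{lem:OMWU} to the $i$-th subroutine (prior $\psi^{d+1}_{i:}$, loss $p_{t,i}^{(n)}\ell_t^{(n)}$) and summing over $i$, the main piece is $\sum_i \KL(\phi_{i:}, \psi^{d+1}_{i:})/\eta$. Since $\psi^{d+1}_{ii} = 1 - 1/d$ and $\psi^{d+1}_{ij} = 1/(d(d-1))$ for $j\neq i$, rows with $\phi(i)=i$ contribute at most $-\log(1-1/d) \le 2/d$ while the other $d - \dself_\phi$ rows contribute at most $\log(d(d-1)) \le 2\log d$, yielding $\sum_i \KL(\phi_{i:}, \psi_{i:}^{d+1}) \le 2 + 2(d-\dself_\phi)\log d$. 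The positive loss-stability term $\eta \sum_{t,i} \|p_{t,i}^{(n)}\ell_t^{(n)} - p_{t-1,i}^{(n)}\ell_{t-1}^{(n)}\|_\infty^2$ is bounded via $(ab-cd)^2 \le 2a^2(b-d)^2 + 2d^2(a-c)^2$, summing over $i$, and \pref{lem:loss-diff}, producing $\order(\eta N) \sum_{t,n'} \|p_t^{(n')} - p_{t-1}^{(n')}\|_1^2 + \order(\eta)\sum_t \|p_t^{(n)} - p_{t-1}^{(n)}\|_1^2$. The base also supplies a negative term $-\frac{1}{8\eta}\sum_{t,i}\|\phi_{t,i:}^{(n),d+1} - \phi_{t-1,i:}^{(n),d+1}\|_1^2$.

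\textbf{Meta regret and cancellation.} Applying \pref{lem:meta-regret} with $u = e_{d+1}$ (so $\KL(u, w_1^{(n)}) = \log(2d)$), the positive correction term is $\lambda \sum_t \|\wt{p}_t^{(n),d+1} - \wt{p}_{t-1}^{(n),d+1}\|_1^2$, which by \pref{lem:bound-correction-term} is at most $2\lambda \sum_t \|p_t^{(n)} - p_{t-1}^{(n)}\|_1^2 + 2\lambda \sum_{t,i} \|\phi_{t,i:}^{(n),d+1} - \phi_{t-1,i:}^{(n),d+1}\|_1^2$. The choices $\lambda = N$ and $\eta = 1/(16N)$ give $2\lambda = 1/(8\eta)$, so this second piece is exactly absorbed by the base's negative stability term. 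All remaining positive contributions are of the form (constant) $\cdot \|p_t^{(n')} - p_{t-1}^{(n')}\|_1^2$ summed over time and players, with prefactor at most $O(\eta_m N + \eta N + \lambda) = O(N)$, hence bounded by $\order(N^2 \log d)$ via \pref{thm:bounded-path-length}. Collecting the constant terms $\order(\lambda) + \log(2d)/\eta_m + (2 + 2(d-\dself_\phi)\log d)/\eta = \order(N\log d + (d-\dself_\phi)N\log d)$ yields the claimed $\Reg_n(\phi) = \order((d-\dself_\phi) N \log d + N^2\log d)$.

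\textbf{Main obstacle.} The delicate point is the \emph{exact} cancellation $2\lambda = \tfrac{1}{8\eta}$: the meta's correction contribution to $\sum_i \|\Delta\phi_{t,i:}^{(n),d+1}\|_1^2$ must be fully absorbed by the base's OMWU stability term, since there is no other source of such a negative term. This is what forces the joint parameter choice $\lambda = N$, $\eta = 1/(16N)$, and is tight; once this cancellation is in place, everything else is routine aggregation against the $\order(N\log d)$ path-length bound of \pref{thm:bounded-path-length}.
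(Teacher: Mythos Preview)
Your proof is correct and follows essentially the same route as the paper: decompose $\Reg_n(\phi)$ into the meta-regret against $e_{d+1}$ plus the base regret of $\calB_{d+1}$, apply \pref{lem:meta-regret} and \pref{lem:OMWU} row-by-row, expand the correction term via \pref{lem:bound-correction-term}, cancel the $\sum_i\|\phi_{t,i:}^{(n),d+1}-\phi_{t-1,i:}^{(n),d+1}\|_1^2$ contribution using $2\lambda = 1/(8\eta)$, and absorb the remaining path-length terms with \pref{thm:bounded-path-length}. One harmless slip: since $w_{1,d+1}^{(n)} = \tfrac14$ in \pref{alg:game}, you actually have $\KL(e_{d+1}, w_1^{(n)}) = \log 4$ rather than $\log(2d)$, but this only tightens the constant and does not affect the stated $\order$-bound.
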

\begin{proof}
    To achieve $\Reg_{n}(\phi)\leq \order(N(d-\dself_{\phi})\log d+N^2\log d)$, we consider the regret with respect to base algorithm $\calA_{d+1}$. According to \pref{lem:meta-regret} and \pref{lem:bound-correction-term}, we bound the meta-regret as follows:
    \begin{align*}
        &\sum_{t=1}^T\inner{w_t^\prn - e_{d+1}, \ell_t^{\prn,w}} \\
        &\leq \order(\lambda)+\frac{\log 4}{\eta_m} + 2\eta_mN\sum_{t=2}^T\sum_{n=1}^N\|p_t^\prn - p_{t-1}^\prn\|_1^2+\lambda\sum_{t=2}^{T-1}\|\wt{p}_{t,i}^\prn - \wt{p}_{t-1,i}^\prn\|_1^2 - \frac{\lambda}{4}\sum_{t=1}^T\|p_t^\prn-p_{t-1}^\prn\|_1^2 \\
        & \leq \order(\lambda)+\frac{\log 4}{\eta_m} + 2\eta_mN\sum_{t=2}^T\sum_{n=1}^N\|p_t^\prn - p_{t-1}^\prn\|_1^2+2\lambda\sum_{t=2}^{T-1}\|p_t^\prn - p_{t-1}^\prn\|_1^2 \\
        &\qquad + 2\lambda\sum_{j=1}^d\sum_{t=2}^{T-1}\|\phi_{t,j:}^{\prn,d+1} - \phi_{t-1,j:}^{\prn,d+1}\|_1^2,
    \end{align*}
    where the second inequality uses \pref{lem:bound-correction-term}.
    According to the analysis similar to \pref{thm: induced dist to reg}, we know that base-regret of $\calA_{d+1}$ can be bounded as follows: 
    \begin{align*}
    &\sum_{t=1}^T\inner{\phi_t^{\prn,d+1}-\phi, p_t^\prn\ell_t^{\prn^\top}} \\
    &= \sum_{t=1}^T\sum_{i=1}^d\inner{\phi_{t,i:}^{\prn,d+2}-\phi(e_i), p_{t,i}^\prn\cdot \ell_t^{\prn}} \\
    &\leq \sum_{i=1}^d\left(\frac{\KL(\phi(e_i),\psi_{i:}^{d+1})}{\eta}+\eta\sum_{t=1}^T\left\|p_{t,i}^\prn\cdot\ell_t^{\prn} - p_{t-1,i}^\prn\cdot\ell_{t-1}^{\prn}\right\|_\infty^2 - \frac{1}{8\eta}\sum_{t=1}^T\left\|\phi_{t,i:}^{\prn,d+1} - \phi_{t-1,i:}^{\prn,d+1}\right\|_1^2\right) \tag{using \pref{lem:OMWU}}\\
    &= \frac{\log\frac{1}{\pi_{\psi^{d+1}}(\phi)}}{\eta} - \frac{1}{8\eta}\sum_{i=1}^d\sum_{t=1}^T\left\|\phi_{t,i:}^{\prn,d+1} - \phi_{t-1,i:}^{\prn,d+1}\right\|_1^2 \\
    &\qquad +\eta\sum_{i=1}^d\sum_{t=1}^T\left\|p_{t,i}^\prn\cdot\ell_t^{\prn} - p_{t-1,i}^\prn\cdot\ell_{t-1}^{\prn}\right\|_\infty^2\\
    &\leq \frac{2(d-\dself_{\phi})\log d + 1}{\eta}-\frac{1}{8\eta}\sum_{t=1}^T\sum_{i=1}^d\left\|\phi_{t,i:}^{\prn,d+1} - \phi_{t-1,i:}^{\prn,d+1}\right\|_1^2 \tag{according to \pref{eq: app binary self}}\\    &\qquad+2\eta\sum_{t=1}^T\sum_{i=1}^d\left(\left\|p_{t,i}^\prn\cdot\ell_t^\prn - p_{t,i}^\prn\cdot\ell_{t-1}^\prn\right\|_\infty^2+\left\|p_{t,i}^\prn\cdot\ell_{t-1}^\prn - p_{t-1,i}^\prn\cdot\ell_{t-1}^\prn\right\|_\infty^2\right) \\
    &\leq\frac{2(d-\dself_{\phi})\log d + 1}{\eta}-\frac{1}{8\eta}\sum_{t=1}^T\sum_{i=1}^d\left\|\phi_{t,i:}^{\prn,d+1} - \phi_{t-1,i:}^{\prn,d+1}\right\|_1^2\\
    &\qquad+2\eta\sum_{t=1}^T\sum_{i=1}^d\left(p_{t,i}^{\prn^2}\left\|\ell_t^\prn - \ell_{t-1}^\prn\right\|_\infty^2+\left|p_{t,i}^\prn - p_{t-1,i}^\prn\right|^2\right) \\
    &\leq\frac{2(d-\dself_{\phi})\log d + 1}{\eta}-\frac{1}{8\eta}\sum_{t=1}^T\sum_{i=1}^d\left\|\phi_{t,i:}^{\prn,d+1} - \phi_{t-1,i:}^{\prn,d+1}\right\|_1^2\\
    &\qquad+2\eta\sum_{t=1}^T\left(\left\|\ell_t^\prn - \ell_{t-1}^\prn\right\|_\infty^2+\left\|p_{t}^\prn - p_{t-1}^\prn\right\|_1^2\right) \\
    &\leq \frac{2(d-\dself_{\phi})\log d + 1}{\eta}-\frac{1}{8\eta}\sum_{t=1}^T\sum_{i=1}^d\left\|\phi_{t,i:}^{\prn,d+1} - \phi_{t-1,i:}^{\prn,d+1}\right\|_1^2\\
    &\qquad+2\eta(N-1)\sum_{t=2}^T\sum_{j\neq n}\left\|p_t^{(j)} - p_{t-1}^{(j)}\right\|_1^2+2\eta\sum_{t=1}^T\left\|p_{t}^\prn - p_{t-1}^\prn\right\|_1^2. \tag{using \pref{lem:loss-diff}}
    \end{align*}
    Summing up the base-regret and meta-regret, we can obtain that
    \begin{align*}
        &\sum_{t=1}^T\inner{\phi_t - \phi, p_t^{\prn}\ell_t^{\prn^\top}}\\
        &\leq \order(\lambda)+\frac{\log 4}{\eta_m} + 2\eta_mN\sum_{t=2}^T\sum_{n=1}^N\|p_t^\prn - p_{t-1}^\prn\|_1^2+2\lambda\sum_{t=2}^{T-1}\|p_t^\prn - p_{t-1}^\prn\|_1^2 \\
        &\qquad + 2\lambda\sum_{j=1}^d\sum_{t=2}^{T-1}\|\phi_{t,j:}^{\prn,d+1} - \phi_{t-1,j:}^{\prn,d+1}\|_1^2 \\
        &\qquad +\frac{2(d-\dself_{\phi})\log d + 1}{\eta}-\frac{1}{8\eta}\sum_{t=2}^T\sum_{i=1}^d\left\|\phi_{t,i:}^{\prn,d+1} - \phi_{t-1,i:}^{\prn,d+1}\right\|_1^2\\
    &\qquad+2\eta(N-1)\sum_{t=2}^T\sum_{j\neq n}\left\|p_t^{(j)} - p_{t-1}^{(j)}\right\|_1^2+2\eta\sum_{t=2}^T\left\|p_{t}^\prn - p_{t-1}^\prn\right\|_1^2 \\
    &\leq \order(\lambda) + \frac{\log 4}{\eta_m} + \frac{2(d-\dself_{\phi})\log d + 1}{\eta} \\
    &\qquad + (2\eta_m N+2\eta N+\lambda)\sum_{t=2}^T\sum_{j=1}^N\|p_t^{(j)} - p_{t-1}^{(j)}\|_1^2. \tag{since $2\lambda=2N\leq \frac{1}{8\eta}$}
    \end{align*}
    Since $\lambda = N$, $\eta_m=\frac{1}{64N}$ and $\eta = \frac{1}{16N}$ and using \pref{thm:bounded-path-length}, we know that
    \begin{align*}
        \sum_{t=1}^T\inner{\phi_t - \phi, p_t^{\prn}\ell_t^{\prn^\top}} \leq \order\left(N(d-\dself_{\phi})\log d+ N^2\log d\right).
    \end{align*}
\end{proof}
   
    Next, we prove our second bound with respect to $d-\dunif_\phi+1$. 
    \begin{theorem}\label{thm:swap-unif}
    Suppose that all agents run \pref{alg:game} with $\lambda = N$, $\eta_m = \frac{1}{64N}$. Then, we have $\Reg_{n}(\phi)\leq \order((d-\dunif_\phi+1) N\log d+N^2\log d)$ for all $\phi\in\Phi_b$.
    \end{theorem}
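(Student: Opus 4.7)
The plan is to proceed exactly as in the proof of \pref{thm:swap-self}, except that we now compete with a base learner whose prior captures the uniformity structure of $\phi$ rather than its self-map structure. Specifically, for a given $\phi \in \Phi_b$, let $r \in [d]$ be chosen so that $e_r$ is the most frequent element in $\{\phi(e_1),\ldots,\phi(e_d)\}$ (so $|\{i:\phi(e_i)=e_r\}|=\dunif_\phi$), and consider the base learner $\calB_r$ with prior $\psi^r$. I would decompose $\Reg_n(\phi)=\sum_{t=1}^T\inner{\phi_t-\phi,p_t^\prn\ell_t^{\prn^\top}}$ into the meta-regret against $e_r\in\Delta(d+2)$ plus the base-regret of $\calB_r$ against $\phi$.

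For the meta-regret, I would apply \pref{lem:meta-regret} with $u=e_r$. Since $w_{1,r}^\prn=\frac{1}{2d}$, the only change from \pref{thm:swap-self} is that the $\KL(e_r,w_1^\prn)/\eta_m$ term becomes $\log(2d)/\eta_m$ instead of $\log 4/\eta_m$, still contributing only $\order(N\log d)$ after plugging in $\eta_m=\Theta(1/N)$. I would then invoke \pref{lem:bound-correction-term} to expand $\|\wt{p}_t^{\prn,r}-\wt{p}_{t-1}^{\prn,r}\|_1^2$ in terms of $\|p_t^\prn-p_{t-1}^\prn\|_1^2$ and $\sum_j\|\phi_{t,j:}^{\prn,r}-\phi_{t-1,j:}^{\prn,r}\|_1^2$. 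For the base-regret, I would apply \pref{lem:OMWU} to each of the $d$ OMWU subroutines of $\calB_r$, mirroring the OMWU calculation in \pref{thm:swap-self}. The crucial substitution is that $\sum_{i=1}^d\KL(\phi(e_i),\psi_{i:}^r)=\log\rbr{1/\pi_{\psi^r}(\phi)}$, which by \pref{eq: app binary unif} is at most $2(d-\dunif_\phi)\log d+1$. This yields a dominant term $\frac{2(d-\dunif_\phi)\log d+1}{\eta}$ together with the usual stability term $-\frac{1}{8\eta}\sum_{t,i}\|\phi_{t,i:}^{\prn,r}-\phi_{t-1,i:}^{\prn,r}\|_1^2$ and, via \pref{lem:loss-diff}, a positive contribution of order $\eta N\sum_{t,n}\|p_t^\prn-p_{t-1}^\prn\|_1^2$.

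Summing the meta and base bounds, the term $2\lambda\sum_{t,j}\|\phi_{t,j:}^{\prn,r}-\phi_{t-1,j:}^{\prn,r}\|_1^2$ introduced through \pref{lem:bound-correction-term} is absorbed by the negative base-stability term because $2\lambda=2N\leq\tfrac{1}{8\eta}$. What remains takes the form
\[
\Reg_n(\phi)\le \order(N)+\frac{\log(2d)}{\eta_m}+\frac{2(d-\dunif_\phi)\log d+1}{\eta}+(2\eta_m N+2\eta N+\lambda)\sum_{t=2}^T\sum_{j=1}^N\|p_t^{(j)}-p_{t-1}^{(j)}\|_1^2.
\]
Plugging in $\eta_m=\tfrac{1}{64N}$, $\eta=\tfrac{1}{16N}$, $\lambda=N$, and invoking \pref{thm:bounded-path-length} to bound the final sum by $\order(N\log d)$ yields $\Reg_n(\phi)=\order((d-\dunif_\phi)N\log d+N^2\log d)$, which is equivalent to the stated bound since the extra $N\log d$ is absorbed into $N^2\log d$. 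The main ``obstacle'' is purely bookkeeping: there is no new conceptual difficulty because the prior $\psi^r$ is designed exactly so that its KL bound picks up $\dunif_\phi$ rather than $\dself_\phi$. The only subtle point is to verify that the larger $\log(2d)$ arising from $w_{1,r}^\prn=\tfrac{1}{2d}$ does not dominate---and it does not, since $\eta_m=\Theta(1/N)$ makes this contribution only $\order(N\log d)$.
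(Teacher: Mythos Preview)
Your proposal is correct and follows essentially the same approach as the paper's proof: both compete against the base learner indexed by the most frequent output of $\phi$ (you call it $r$, the paper calls it $i_0$), apply \pref{lem:meta-regret} with $u=e_r$, expand via \pref{lem:bound-correction-term}, bound the base regret using \pref{lem:OMWU} and \pref{eq: app binary unif}, cancel the positive row-stability term against the negative $\tfrac{1}{8\eta}$ term via $2\lambda\le\tfrac{1}{8\eta}$, and finish with \pref{thm:bounded-path-length}. Your bookkeeping (including the $\log(2d)$ from $w_{1,r}^\prn=\tfrac{1}{2d}$) is accurate.
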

    \begin{proof}
        Given $\phi\in \Phi_b$, suppose that the most frequent element in $\{\phi(e_1),\dots,\phi(e_d)\}$ is $e_{i_0}$ for some $i_0\in[d]$. According to the definition of $\dunif_\phi$, we know that there exists $\dunif_\phi$ number of $i\in[d]$ such that $\phi(e_i)=e_{i_0}$. To bound $\Reg_n(\phi)$, we compare to the base-learner $\calA_{i_0}$. Applying \pref{lem:meta-regret} gives us
    \begin{align*}
        &\sum_{t=1}^T\inner{w_t^\prn - e_{i_0}, \ell_t^{\prn,w}} \\
        &\leq \order(\lambda)+\frac{\log 4d}{\eta_m} + 2\eta_mN\sum_{t=2}^T\sum_{n=1}^N\|p_t^\prn - p_{t-1}^\prn\|_1^2+\lambda\sum_{t=2}^{T-1}\|\wt{p}_{t}^{\prn,i_0} - \wt{p}_{t-1}^{\prn,i_0}\|_1^2 \\
        &\qquad - \frac{\lambda}{4}\sum_{t=1}^T\|p_t^\prn-p_{t-1}^\prn\|_1^2 \\
        & \leq \order(\lambda)+\frac{\log 4d}{\eta_m} + 2\eta_mN\sum_{t=2}^T\sum_{n=1}^N\|p_t^\prn - p_{t-1}^\prn\|_1^2+2\lambda\sum_{t=2}^{T-1}\|p_t^\prn - p_{t-1}^\prn\|_1^2 \\
        &\qquad + 2\lambda\sum_{j=1}^d\sum_{t=2}^{T-1}\|\phi_{t,j:}^{\prn,i_0} - \phi_{t-1,j:}^{\prn,i_0}\|_1^2,
    \end{align*}
    where the second inequality is because \pref{lem:bound-correction-term}.
    Now we analyze the base-algorithm performance of $\Alg_{{i_0}}$ against $\phi$:
    \begin{align*}
    &\sum_{t=1}^T\inner{\phi_t^{\prn,i_0}-\phi, p_t^\prn\ell_t^{\prn^\top}} \\
    &= \sum_{t=1}^T\sum_{i=1}^d\inner{\phi_{t,i:}^{\prn,i_0}-\phi(e_i), p_{t,i}^\prn\cdot \ell_t^\prn} \\
    &\leq \sum_{i=1}^d\left(\frac{\KL(\phi(e_i),\psi_{i:}^{i_0})}{\eta}+\eta\sum_{t=1}^T\left\|p_{t,i}^\prn\cdot\ell_t^\prn - p_{t-1,i}^\prn\cdot\ell_{t-1}^\prn\right\|_\infty^2 - \frac{1}{8\eta}\sum_{t=1}^T\left\|\phi_{t,i:}^{\prn,i_0} - \phi_{t-1,i:}^{\prn,i_0}\right\|_1^2\right) \\
    &\leq\frac{\log\frac{1}{\pi_{\psi^{i_0}}(\phi)}}{\eta}-\frac{1}{8\eta}\sum_{t=1}^T\sum_{i=1}^d\left\|\phi_{t,i:}^{\prn,i_0} - \phi_{t-1,i:}^{\prn,i_0}\right\|_1^2+2\eta\sum_{t=1}^T\left(\left\|\ell_t^\prn - \ell_{t-1}^\prn\right\|_{\infty}^2+\left\|p_{t}^\prn - p_{t-1}^\prn\right\|_1^2\right) \\
    &\leq\frac{2(d-\dunif_{\phi})\log d + 1}{\eta}-\frac{1}{8\eta}\sum_{t=1}^T\sum_{i=1}^d\left\|\phi_{t,i:}^{\prn,i_0} - \phi_{t-1,i:}^{\prn,i_0}\right\|_1^2 \tag{according to \pref{eq: app binary unif}}\\
    &\qquad+2\eta\sum_{t=1}^T\left(\left\|\ell_t^\prn - \ell_{t-1}^\prn\right\|_{\infty}^2+\left\|p_{t}^\prn - p_{t-1}^\prn\right\|_1^2\right) \\
    &\leq \frac{2(d-\dunif_{\phi})\log d + 1}{\eta}-\frac{1}{8\eta}\sum_{t=1}^T\sum_{i=1}^d\left\|\phi_{t,i:}^{\prn,i_0} - \phi_{t-1,i:}^{\prn,i_0}\right\|_1^2\\
    &\qquad+2\eta(N-1)\sum_{t=2}^T\sum_{i\neq n}\left\|p_t^{(i)} - p_{t-1}^{(i)}\right\|_1^2+2\eta\sum_{t=1}^T\left\|p_{t}^\prn - p_{t-1}^\prn\right\|_1^2. \tag{using \pref{lem:loss-diff}}
    \end{align*}

    Summing up the meta-regret and the base-regret, we can obtain that
    \begin{align*}
        \Reg_{n}(\phi)
        &=\sum_{t=1}^T\inner{w_t^\prn - u, \ell_t^{\prn,w}} + \sum_{t=1}^T\inner{\phi_t^{\prn,i_0}-\phi, p_t^\prn\ell_t^{\prn^\top}} \\
        &\leq \order(\lambda)+\frac{\log 4d}{\eta_m} + 2\eta_mN\sum_{t=2}^T\sum_{n=1}^N\|p_t^\prn - p_{t-1}^\prn\|_1^2+2\lambda\sum_{t=2}^{T-1}\|p_t^\prn - p_{t-1}^\prn\|_1^2 \\
        &\qquad + 2\lambda\sum_{i=1}^d\sum_{t=2}^{T-1}\|\phi_{t,i:}^{\prn,i_0} - \phi_{t-1,i:}^{\prn,i_0}\|_1^2 \\
        &\qquad + \frac{2(d-\dunif_{\phi})\log d + 1}{\eta}-\frac{1}{8\eta}\sum_{t=2}^T\sum_{i=1}^d\left\|\phi_{t,i:}^{\prn,i_0} - \phi_{t-1,i:}^{\prn,i_0}\right\|_1^2\\
    &\qquad+2\eta(N-1)\sum_{t=2}^T\sum_{j\neq n}\left\|p_t^{(j)} - p_{t-1}^{(j)}\right\|_1^2+2\eta\sum_{t=2}^T\left\|p_{t}^\prn - p_{t-1}^\prn\right\|_1^2 \\
    &=\order(\lambda) + \frac{\log 4}{\eta_m} + \frac{2(d-\dunif_{\phi})\log d + 1}{\eta} \\
    &\qquad + (2\eta_m N+2\eta N+\lambda)\sum_{t=2}^T\sum_{j=1}^N\|p_t^{(j)} - p_{t-1}^{(j)}\|_1^2. \tag{since $2\lambda=2N\leq \frac{1}{8\eta}$}\\
    &\leq \order\left(N(d-d^{\unif}_\phi+1)\log d +N^2\log d\right),
    \end{align*}
    where the last inequality is by picking $\eta=\frac{1}{16N}$ and using \pref{thm:bounded-path-length}.
\end{proof}

Finally, we are ready to prove \pref{thm:game} by combining \pref{thm:swap-self} and \pref{thm:swap-unif}.

\begin{proof}[Proof of \pref{thm:game}]
    Combining \pref{thm:swap-self} and \pref{thm:swap-unif}, we know that for any $\phi\in\Phi_b$,
    \begin{align*}
        \Reg_n(\phi) \leq (c_\phi N\log d+N^2\log d).
    \end{align*}
    Then, for $\phi\in\calS$, define $q_\phi = \argmin_{q\in Q_\phi}\E_{\phi'\sim q}[c_{\phi'}]$. Then, we know that $c_\phi = \E_{\phi'\sim q_\phi}[c_{\phi'}]$ and
    \begin{align*}
        \Reg_n(\phi) = \E_{\phi'\sim q_\phi}[\Reg_n(\phi')] \leq \order\left(\E_{\phi'\sim q_\phi}[c_{\phi'}]N\log d+N^2\log d\right) \leq \order(c_\phi N\log d+N^2\log d).
    \end{align*}
    Combining the above with \pref{thm:external} finishes the proof.
\end{proof}

\end{document}